\newcommand{\tr}{\ensuremath{\operatorname{tr}}}
\newcommand{\real}{\ensuremath{\mathbb{R}}}
\newcommand{\En}{\ensuremath{\mathbb{E}}}
\newcommand{\ind}{\mathbb{I}}
\newcommand{\defn}{:\,=}
\newtheorem{theorem}{Theorem}
\newtheorem{lemma}{Lemma}
\newtheorem{corollary}{Corollary}
\newtheorem{proposition}{Proposition}
\newtheorem{assumption}{Assumption}
\newcommand{\1}{\ensuremath{{\sf (i)}}}
\newcommand{\2}{\ensuremath{{\sf (ii)}}}
\newcommand{\3}{\ensuremath{{\sf (iii)}}}
\DeclareMathOperator*{\argmin}{argmin}
\DeclareMathOperator*{\argmax}{argmax}
\newcommand\inner[2]{\langle #1, #2 \rangle}
\long\def\@makecaption#1#2{
        \vskip 0.8ex
        \setbox\@tempboxa\hbox{\small {\bf #1:} #2}
        \parindent 1.5em  
        \dimen0=\hsize
        \advance\dimen0 by -3em
        \ifdim \wd\@tempboxa >\dimen0
                \hbox to \hsize{
                        \parindent 0em
                        \hfil
                        \parbox{\dimen0}{\def\baselinestretch{0.96}\small
                                {\bf #1.} #2
                                }
                        \hfil}
        \else \hbox to \hsize{\hfil \box\@tempboxa \hfil}
        \fi
        }
\newcommand{\pol}{\pi}
\newcommand{\pols}{\pol^*}
\newcommand{\rew}{r}
\newcommand{\rews}{\rew^*}
\newcommand{\Hsp}{\mathbb{H}}
\newcommand{\Hp}{\Hsp_{\pol}}
\newcommand{\Hr}{\Hsp_{\rew}}
\newcommand{\innerr}[2]{\langle #1, #2 \rangle_{\Hr}}
\newcommand{\innerp}[2]{\langle #1, #2 \rangle_{\Hp}}
\newcommand{\norm}[1]{\| #1\|}
\newcommand{\normr}[1]{\| #1 \|_{\Hr}}
\newcommand{\normp}[1]{\| #1 \|_{\Hp}}
\newcommand{\Kp}{{\K}_{\pol}}
\newcommand{\Kr}{\K_{\rew}}
\newcommand{\eval}{F}
\newcommand{\map}{M}
\newcommand{\mapad}{\map^*}
\newcommand{\eps}{\epsilon}
\newcommand{\nvar}{\tau}
\newcommand{\N}{\mathcal{N}}
\newcommand{\polhat}{\hat{\pol}}
\newcommand{\err}{\Delta}
\newcommand{\samp}{n}
\newcommand{\X}{\mathcal{X}}
\newcommand{\Kprob}{\mathbb{P}}
\newcommand{\eigfp}{\phi_{\pol}}
\newcommand{\eigfr}{\phi_{\rew}}
\newcommand{\eigfpi}[1]{\phi_{\pol, #1}}
\newcommand{\eigfri}[1]{\phi_{\rew, #1}}
\newcommand{\eigri}[1]{\mu_{\rew, #1}}
\newcommand{\eigpi}[1]{\mu_{\pol, #1}}
\newcommand{\rewhat}{\hat{\rew}}
\newcommand{\polhatplug}{\polhat_{\sf plug}}
\newcommand{\polhatgp}{\polhat_{\sf ucb}}
\newcommand{\rewridge}{\rewhat}
\newcommand{\regp}{\lambda_{\sf reg}}
\newcommand{\y}{y}
\newcommand{\Eigr}{S_{\rew}}
\newcommand{\Eigp}{S_{\pol}}
\newcommand{\cov}{\Sigma}
\newcommand{\covn}{\cov_\samp}
\newcommand{\Id}{I}
\newcommand{\Aez}{A}
\newcommand{\sig}{\sigma}
\newcommand{\sigm}[1]{\nu_{#1}}
\newcommand{\al}{\alpha}
\newcommand{\be}{\beta}
\newcommand{\rewtil}{\tilde{\rew}}
\newcommand{\poltil}{\tilde{\pol}}
\newcommand{\maptil}{\tilde{\map}}
\newcommand{\covtil}{\tilde{\cov}}
\newcommand{\Um}{U_\map}
\newcommand{\Vm}{V_\map}
\newcommand{\Lam}{\Lambda}
\newcommand{\info}{\gamma}
\newcommand{\eigS}{\lambda}
\newcommand{\Finfo}{F_{{\sf ig}}}
\newcommand{\Kmat}{\K_{{\sf Matern}, \nu}}
\newcommand{\T}{T}
\newcommand{\fstar}{f^*}
\newcommand{\regret}{\mathfrak{R}}
\newcommand{\cover}{N_{{\sf cov}}}
\newcommand{\K}{\mathcal{K}}
\newcommand{\covelem}{\mathcal{C}}
\newcommand{\f}{f}
\newcommand{\ftil}{\tilde{\f}}
\newcommand{\kermat}{K}
\newcommand{\kx}{k}
\newcommand{\ftilstar}{\ftil^*}
\newcommand{\oraclef}{\oracle_{\fstar}}
\newcommand{\oracler}{\oracle_{\rews}}
\newcommand{\lipK}{L_{\K}}
\newcommand{\xstar}{x^*}
\newcommand{\xstarproj}{\Pi_{\covelem_\eps}(\xstar)}
\newcommand{\xpol}{\xstar_\pol}
\newcommand{\const}{c}
\newcommand{\distx}{\mathbb{P}}
\newcommand{\eigphi}[1]{\hat{\mu}_{\pol, #1}}
\newcommand{\eigrhi}[1]{\hat{\mu}_{\rew, #1}}
\newcommand{\feat}{\Phi}
\newcommand{\Scov}{S}
\newcommand{\Scovhat}{\hat{\Scov}}
\newcommand{\regSN}{\lambda_{S}}
\newcommand{\epss}{\eps_S}
\newcommand{\covpow}{\gamma}
\newcommand{\oracle}{\mathcal{O}}
\newcommand{\qset}{\mathcal{Q}}
\newcommand{\op}{{\sf op}}
\newcommand{\diag}{\text{diag}}
\newcommand{\Spol}{C_{\pol}}
\newcommand{\Srew}{C_{\rew}}
\newcommand{\Sm}{\Sigma_\map}
\newcommand{\Eigmat}{\Phi}
\newcommand{\cpol}{c_{\pol}}
\newcommand{\rat}{\zeta}
\newcommand{\Umtil}{U_{\maptil}}
\newcommand{\Vmtil}{V_{\maptil}}
\newcommand{\Lamtil}{\Lam_{\maptil}}
\newcommand{\del}{\delta}
\newcommand{\numdir}{J}
\newcommand{\bpol}{\beta_\pol}
\newcommand{\brew}{\beta_\rew}
\newcommand{\bmap}{\beta_\map}
\begin{document}

%

%

\twocolumn[

\aistatstitle{Reward Learning as Doubly Nonparametric Bandits:\\  Optimal Design and Scaling Laws}

\aistatsauthor{Kush Bhatia \And Wenshuo Guo \And  Jacob Steinhardt}

\aistatsaddress{Stanford University\\\texttt{kushb@cs.stanford.edu}  \And  UC Berkeley\\\texttt{wguo@cs.berkeley.edu} \And UC Berkeley\\\texttt{jsteinhardt@berkeley.edu}}]

\begin{abstract}
   Specifying reward functions for complex tasks like object manipulation or driving is challenging to do by hand. Reward learning seeks to address this by learning a reward model using human feedback on selected query policies. This shifts the burden of reward specification to the optimal design of the queries. We propose a theoretical framework for studying reward learning and the associated optimal experiment design  problem. Our framework models rewards and policies as nonparametric functions belonging to subsets of Reproducing Kernel Hilbert Spaces (RKHSs). The learner receives (noisy) oracle access to a true reward  and must output a policy  that performs well under the true reward.  For this setting, we first derive non-asymptotic excess risk bounds for a simple plug-in estimator based on ridge regression.  We then solve the query design problem by optimizing these risk bounds with respect to the choice of query set and obtain a finite sample statistical rate, which depends primarily on the eigenvalue spectrum of a certain linear operator on the RKHSs. Despite the generality of these results, our bounds are stronger than previous bounds developed for more specialized problems. We specifically show that the well-studied problem of Gaussian process (GP) bandit optimization is a special case of our framework, and that our bounds either improve or are competitive with known regret guarantees for the Mat\'ern kernel.
\end{abstract}
\section{Introduction}
Specifying the reward function accurately for a desired objective, or \emph{reward engineering}, is challenging to perform by hand, as the consequences of even small errors can be drastic~\citep{hadfield2017}. To address this, reward learning seeks to learn a predictive model of the reward function from data, which is obtained from carefully selected queries to human annotators. 
The learned reward model is then used as the optimization objective for policy learning. 
Reward learning has achieved significant empirical success in domains such as text summarization~\citep{stiennon2020, bohm2019}, robot locomotion~\citep{daniel2014}, predicting driving styles~\citep{kuderer2015}, and Atari game playing~\citep{christiano2017}.


Despite their success, reward learning methods still lack theoretical grounding. 
Moreover, their behavior can be brittle even on simple tasks, due to the difficulty of choosing appropriate queries and due to feedback loops from adaptive querying~\citep{freire2020}.
Indeed, an ablation study in~\citet{christiano2017} suggests that random queries can outperform or be competitive with adaptive query procedures. To address these issues, we provide a theoretical framework for analyzing reward learning, framing it as a \emph{doubly nonparametric experimental design} problem. This framework helps elucidate the role of query selection~\citep{chaloner1995} and also enables us to derive scaling laws---how the sizes of the policy and reward models affect the query complexity---for reward learning~\citep{kaplan2020}.

\vspace{-3mm}\paragraph{Proposed framework.} 
In our framework, we suppose we are given a reward class $\Srew$ and policy class $\Spol$. Our goal is to find a policy $\hat{\pol} \in \Spol$ that performs well according to an unknown true reward $\rews \in \Srew$. To do this, we query policies $\pol \in \Spol$, observing noisy estimates of their true reward, and use this information to choose the eventual policy $\hat{\pol}$.

To be compatible with modern nonparametric learning methods (i.e.~neural nets), we 
view $\Srew$ and $\Spol$ as subsets of Reproducing Kernel Hilbert Spaces (RKHS). 
A salient feature of our proposed framework is that 
the learner therefore optimizes a nonparametric reward function over a nonparametric space of policies, making the task ``doubly'' nonparametric. In contrast, previous work considers a nonparametric function class or reward class, but typically not both. For instance, nonparametric zeroth order or bandit optimization~\citep{srinivas2010, mockus2012, wang2018} 
considers a nonparametric function on a \emph{finite-dimensional} input space. Conversely, nonparametric supervised learning~\citep{wahba1990, hofmann2008} minimizes a \emph{known} loss function over a nonparametric input space. 

The doubly nonparametric nature of our task poses new challenges. The (possibly) infinite-dimensional RKHS requires the learner to select which subspace to explore given 
a finite number of queries. Furthermore, the unknown reward function makes it challenging for the learner to reason about the information gained from the selected query policies. We address these challenges by deriving a risk upper bound for a family of plug-in estimators based on ridge regression, and then optimizing this bound to solve the optimal design task. Our results  show that the quality of the output policy depends on how well the query set $\qset$ is aligned with the eigenfunctions of the policy space.

In addition to the optimal design problem, our framework allows us to study scaling laws with respect to the reward (or policy) class by varying the rate of decay of their corresponding eigenspectrum. This decay rate determines the effective dimensionality of a RKHS~\citep{zhang2002}, and provides a natural proxy for varying the size of the reward or policy class. Qualitatively, our main results show that the excess risk asymptotically vanishes as long as the policy class grows at a slower rate relative to the reward class. 

\textbf{Sharpness of analysis.} Our risk bounds apply to reward and policy classes of arbitrary or even infinite dimensionality. 
Despite this generality, we show they provide stronger guarantees than previous bounds for the specialized settings of compact policy sets and  kernel multi-armed bandits.

In Section~\ref{sec:adaptive-sample}, we look at a special case of our problem when the policy set $\Spol$ is a compact subspace and thus has finite rank. For these instances, we show that our 
learning algorithm obtains a better excess risk $O(\samp^{-\frac{\be}{\be+2}})$ versus a rate of $O(\samp^{-\frac{\be-1}{2(\be+1)}})$ obtained by the adaptive GP-UCB algorithm~\citep{srinivas2010}, where $\be > 0$ is a power law decay rate. 

In Section~\ref{sec:kmab}, we specialize our general results to the well-studied problem of Gaussian process bandit optimization~\citep{williams2006}, 
also known as kernel multi-armed bandit (MAB). 
Specifically, for the class of Mat\'ern kernels with parameter $\nu$ in $d$ dimensions, we show that our algorithm achieves a regret bound of $\tilde{O}(T^{\frac{4\nu+d(4d+6)}{6\nu+d(4d+7)}})$ which is strictly better than those achieved by the GP-UCB and 
GP-Thompson Sampling  (GP-TS)~\citep{chowdhury2017} algorithms and comparable with 
$\pi$-GP UCB~\citep{janz2020} and supKernelUCB~\citep{valko2013, vakili2021}; see Table~\ref{tab:mat-K-rates} for details. GP-UCB and GP-TS only yield sub-linear regret bounds when the smoothness of the kernel $\nu > d^2$---thus in high dimensions, these bounds essentially become vacuous. The $\pi$-GP UCB algorithm 
was designed specifically to overcome this issue. Our proposed algorithm achieves sublinear regret for all $\nu > \nicefrac{3}{2}$. 


\textbf{Our Contributions.} We propose doubly-nonparametric bandits as a framework for theoretically studying the reward learning problem. Within this framework, we obtain finite sample risk bounds for a ridge regression based plug-in estimator and derive scaling laws for reward learning.  From a technical standpoint, we study the optimal design problem for our estimator to select informative query points by showing that the excess risk depends only on the spectral properties of a certain operator of the two RKHSs and the empirical covariance matrix.
As a corollary of our risk bounds, 
we provide sharper regret bounds for a class of kernel MAB problems compared to several existing algorithms, 
showing that the doubly-nonparametric lens of reward learning is fruitful even for  ``singly-nonparametric'' tasks. To obtain these bounds, our reduction carefully constructs two different RKHSs to embed the input space and reward function into a policy and reward class.

\section{Framework: Doubly nonparametric Bandits}\label{sec:prob}
Our framework considers non-parametric policy learning with non-parametric reward models. 
We let $\pol \in \Hp$ denote an arbitrary policy and $\rew \in \Hr$ denote an arbitrary reward function, where $\Hp$ and $\Hr$ are Reproducing Kernel Hilbert Spaces. 
For technical reasons, we assume the corresponding kernel functions $\Kp$ and  $\Kr$ both satisfy the Hilbert-Schmidt condition (see Appendix~\ref{app:rkhs} for details).

We let $\eval(\pol, \rew) \in \real $ denote the reward obtained by selecting policy $\pol$ under reward function $r$ and consider the case where the evaluation functional $\eval$ is linear in both $\pol$ and $\rew$. In other words, 
 $   \eval(\pol, \rew) = \innerr{\rew}{\map\pol}$ 
where $\map: \Hp \mapsto \Hr$ is a known linear mapping from the policy space to the reward space. Since $\Hp$ and $\Hr$ may be infinite-dimensional, linearity is only a weak restriction--e.g.~the map $f \mapsto f(x)$ is linear in $f$ for any RKHS. 

To incorporate problem structure, 
we let $\rews$ denote the true reward function and assume that $\rews \in \Srew$ for some known set $\Srew \subseteq \Hr$ such that $\normr{\rews} = 1$.  We further assume that policies 
$\pi$ are restricted to lie in some $\Spol$ which is a subset of the unit ball in $\Hp$ (for instance, $\Spol$ might incorporate physical constraints on implementable policies). Thus, given the true reward $\rews$, the optimal policy (for a compact $\Spol$) is
    $\pols \in \argmax_{\pol \in \Spol} \eval(\pol, \rews)$.  
This proposed framework, which allows for infinite-dimensional policy as well as reward classes, allows us to study how both the policy and reward space affect the difficulty of learning. 

\textbf{Query access to reward $\rews$.}
The true reward function $\rews$ is unknown to the learner but is accessible via queries to an oracle (e.g.~a human expert), which provide noisy zeroth-order (or bandit) evaluations of the reward $\rews$. When queried with a policy $\pol \in \Spol$, the oracle provides a response
\begin{align}
    \text{Oracle } \oracle_{\rews}:  \pi \mapsto \eval(\pol, \rews) + \eps  \ \ \text{with} \  \eps \sim \N(0, \nvar^2)\;,
\end{align}
with $\nvar^2$ denoting the variance of the response. 
There are two possible query models: passive 
queries~\citep{atkinson1996, sebastiani2000}, where the learner selects all queries at the same time, and active 
queries~\citep{bubeck2011, lattimore2020}, where the learner is allowed to select queries sequentially. Our focus in this work will be on the passive query model, but in many cases we will 
outperform existing active query algorithms. 

\textbf{Problem statement.}
Given passive access to the oracle  $\oracle_{\rews}$, the objective of the learner is to output a policy $\polhat \in \Spol$ 
that has small excess risk $\Delta$, defined as
\begin{align}\label{eq:err}
\err(\polhat;\rews) \defn \eval(\pols, \rews) - \eval(\polhat, \rews)\;.
\end{align}
We think of queries to the oracle as expensive, and are interested in achieving low excess risk with as few queries as possible. This notion of excess risk is also studied by the term \emph{simple regret} in pure exploration bandit problems~\citep{lattimore2020}.


\textbf{Representations in $\ell_2(\mathbb{N})$.}  By Mercer's theorem, we can represent any RKHS as a subset of $\ell_2(\mathbb{N})$. 
Formally, the policy and the reward spaces are isomorphic to the ellipsoids
\newcommand{\coeffp}[1]{\kappa_{\pol, #1}}
\newcommand{\coeffr}[1]{\kappa_{r, #1}}
{\small
\begin{align*}
    \Hp &\defn \left\lbrace  \sum_{j=1}^\infty \coeffp{j} \eigfpi{j} \, \Big| \, 
    (\coeffp{j})_{j=1}^\infty \in \ell^2(\mathbb{N}) \text{ with } \sum_{j=1}^\infty \frac{\coeffp{j}^2}{\eigpi{j}^2 }< \infty \right\rbrace\\
     \Hr &\defn \left\lbrace  \sum_{j=1}^\infty \coeffr{j} \eigfri{j} \, \Big| \, 
     (\coeffr{j})_{j=1}^\infty \in \ell^2(\mathbb{N}) \text{ with } \sum_{j=1}^\infty \frac{\coeffr{j}^2}{\eigri{j}^2 }< \infty \right\rbrace,
\end{align*}
}
for appropriately chosen eigenfunctions $\eigfpi{j}$ and $\eigfri{j}$, and corresponding eigenvalues $\eigpi{j}$ and $\eigri{j}$~\citep{wainwright2019}. These are defined with respect to a base measure $\mathbb{P}$ over the input domain; see Appendix~\ref{app:rkhs} for details.
With a slight abuse of notation, going forward, we will use $\pi$ and $\rew$ to denote the corresponding coefficients $(\coeffp{j})$ and $(\coeffr{j})$ in the expansion above. \footnote{While the eigenfunctions $\eigfp$ and $\eigfr$ can be different, this representation can still be used by modifying the map $\map$ appropriately. This is detailed in Appendix~\ref{app:rkhs}.} 
With this, the inner products associated with $\Hp$ and $\Hr$ simplify 
\begin{align}
\begin{gathered}
    \innerp{\pol_1}{\pol_2} \defn \sum_{j=1}^\infty \frac{\pol_{1,j}\pol_{2,j}}{\eigpi{j}}\quad
    , \quad  \innerr{\rew_1}{\rew_2} \defn \sum_{j=1}^\infty \frac{\rew_{1,j}\rew_{2,j}}{\eigri{j}}\;.
    \end{gathered}
\end{align}
Also let $\Eigr \defn \text{diag}(\eigri{j}^{-1})$ and $\Eigp \defn \text{diag}(\eigpi{j}^{-1})$ be diagonal matrices comprising the inverse of the eigenvalues of 
$\Hr$ and $\Hp$. 
With this notation, if we view the map $\map$ as a (infinite-dimensional) matrix, its Hermitian adjoint\footnote{Recall the Hermitian adjoint of $\map$ satisfies $\innerr{\rew}{\map\pol} = \innerp{\mapad\rew}{\pol}$} is equal to $\mapad = \Eigp^{-1}\map^\top \Eigr$. 

In order for the evaluation functional $\eval(\pol, \rews)$ to be finite for all $\pol \in \Hp$, the operator norm $\norm{\Eigr^{\frac{1}{2}}\map \Eigp^{-\frac{1}{2}}}_{\op}$ must be bounded (see Appendix~\ref{app:rkhs}). We will see later that the decay of this operator's singular values is closely related to the difficulty of learning in our setting.


\section{Algorithm: Policy Learning via Reward Learning}\label{sec:alg}
\begin{algorithm}[t!]
	\DontPrintSemicolon
	\KwIn{Number of queries $\samp$, policy set $\Spol$, oracle $\oracle_{\rews}$}
	Select $\samp$ policies $\qset = \{\pol_1, \ldots, \pol_\samp\}$ and receive noisy reward evaluations $y_i = \oracle_{\rews}(\pol_i)$.\; 
   Estimate $\rewhat$ using observed responses $\{(\pol_1, \y_1), \ldots, (\pol_\samp, \y_\samp)\}$ using ridge regression~\eqref{eq:rew-ridge}.\;
   Obtain plug-in policy $\polhatplug \in \argmax_{\pol \in \Spol} \eval(\pol, \rewhat)$.\;
   \KwOut{Policy $\polhatplug$}
	\caption{Policy Learning via Reward Learning}
  \label{alg:pol-rew}
\end{algorithm}

Given the setup above, we now describe a meta-algorithm, policy learning via reward learning (Algorithm~\ref{alg:pol-rew}),  for the non-parametric policy learning problem. 
%
The algorithm is a three-stage procedure: it (i) selects a subset of policies $\qset$ to query for reward feedback, (ii) uses the responses to learn a reward estimate $\rewhat$, and (iii) optimizes this learnt estimate to output the policy $\polhatplug$, that is,  
 $   \polhatplug \in \argmin_{\pol \in \Spol} \innerr{\rewhat}{\map \pol}$. 
Such general plug-in procedure have been studied in the statistics~\citep{van2000} and the machine learning~\citep{devroye2013} literature. We analyze the excess risk of this estimator for our doubly-nonparametric setup and use this risk bound to select our query set $\qset$. 
We now discuss the two key design choices in our algorithm: the choice of the reward estimation procedure as well as the choice of query set $\qset$. 
\paragraph{Reward learning via ridge regression.}
We estimate the reward $\rewhat$ via 
ridge regression 
in the RKHS $\Hr$~\citep{friedman2001, shawe2004}. Suppose that in the first step of the algorithm, we have already queried the oracle on 
$\samp$ policies 
and let $\{(\pol_i, \y_i) \}_{i=1}^\samp$ represent the query-response pairs. For a regularization parameter $\regp > 0$, the ridge regression estimate of the reward function is 
\begin{align}\label{eq:rew-ridge}
    \rewridge \in \argmin_{\rew \in \Hr} \frac{1}{\samp}\sum_{i = 1}^\samp(\y_i - \innerr{\rew}{\map\pol_i})^2 + \regp \normr{r}^2\;.
\end{align}
The parameter $\regp$, which is usually set as a function of $\samp$, controls the bias-variance trade-off in estimating $\rews$---smaller values of $\regp$ reduce bias while larger values help reduce variance. 

\paragraph{Excess risk bound for fixed query set.} 
Observe that the plug-in estimator $\polhatplug(\qset)$ is implicitly a function of the query set~$\qset$. Ideally, we want to choose the set $\qset$ which minimizes the expected risk of the plugin estimator. This requires us to solve the optimization problem
\begin{align}\label{eq:opt-qset}
    \qset = \argmin_{S: |S| \leq \samp} \En[\err(\polhatplug(S);\rews)]\;.
\end{align}
However, solving the above precisely requires knowledge about the underlying reward function $\rews$, and the combinatorial nature of the optimization problem makes it hard to find an exact solution. To address this, we first upper bound the excess risk of the plug-in policy $\polhatplug$ in terms of the query set 
$\qset = \{\pol_1, \ldots, \pol_\samp \}$. The following theorem\footnote{Throughout the paper, for clarity purposes, we denote by $\const$ a universal constant whose value changes across lines. All our proofs in the appendices explicitly track this constant.} bounds the excess risk in terms of the spectrum of the spaces $\Hr$ and $\Hp$, as well as the  covariance matrix of the queried policies $\cov_\qset \defn \frac{1}{\samp}\sum_{\pol \in \qset} \pol\pol^\top$.

\begin{theorem}[Excess risk of plug-in]\label{thm:risk-gen}
For any query set $\qset$ consisting of $\samp$ policies and regularization parameter $\regp > 0$, the excess risk of the plug-in estimator $\polhatplug$ is upper bounded as 
\begin{align}
     \En[\err(\polhatplug;\rews)] \leq 2 \En[\normp{\mapad(\rews - \rewhat)}]\;.
\end{align}
In addition, letting $\Aez = \map \cov_\qset\map^\top \Eigr + \regp \Id$, the expected squared distance is equal to 
{\begin{align}
    &\En[\normp{\mapad(\rews - \rewhat)}^2] = \regp^2 \cdot \normp{\mapad\Aez^{-1}\rews}^2 \nonumber \\
    &\quad+ \frac{\nvar^2}{\samp}\cdot \tr\left[\Eigp (\mapad \Aez^{-1} \map) \cov_\qset (\mapad \Aez^{-1} \map)^\top\right].
\end{align}}
\end{theorem}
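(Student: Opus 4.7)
The theorem has two distinct parts and I would handle them in sequence. The first bound is a standard plug-in comparison argument, and the second is a bias--variance decomposition for ridge regression adapted to our RKHS geometry.

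\textbf{Step 1: The plug-in comparison.} The natural starting point is the decomposition
\begin{align*}
\err(\polhatplug;\rews) &= [\eval(\pols,\rews)-\eval(\pols,\rewhat)] \\
&\quad + [\eval(\pols,\rewhat)-\eval(\polhatplug,\rewhat)]\\
&\quad + [\eval(\polhatplug,\rewhat)-\eval(\polhatplug,\rews)].
\end{align*}
The middle bracket is non-positive by optimality of $\polhatplug$ for $\rewhat$ on $\Spol$. The remaining two brackets combine, using bilinearity of $\eval$ and the definition of the adjoint $\mapad$, into $\innerp{\mapad(\rews-\rewhat)}{\pols-\polhatplug}$. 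Cauchy--Schwarz in $\Hp$ and the fact that $\Spol$ lies in the unit ball give $\err(\polhatplug;\rews) \le 2\normp{\mapad(\rews-\rewhat)}$; taking expectations finishes the first claim.

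\textbf{Step 2: Closed-form ridge solution.} The next step is to write the ridge estimator in $\ell_2(\mathbb{N})$ coordinates. Substituting $\innerr{\rew}{\map\pol_i}=\rew^\top \Eigr\map\pol_i$ and $\normr{\rew}^2=\rew^\top\Eigr\rew$ into the objective~\eqref{eq:rew-ridge} and differentiating yields the normal equations
\[
\Bigl(\tfrac{1}{\samp}\sum_i \Eigr\map\pol_i\pol_i^\top\map^\top\Eigr + \regp\Eigr\Bigr)\rewhat = \tfrac{1}{\samp}\sum_i y_i\,\Eigr\map\pol_i.
\]
Factoring out $\Eigr$ on the left gives $\rewhat = \Aez^{-1}\map\bigl(\cov_\qset\map^\top\Eigr\rews + \tfrac{1}{\samp}\sum_i\eps_i\pol_i\bigr)$ after substituting $y_i=\rews^\top\Eigr\map\pol_i+\eps_i$ and recognizing the empirical covariance. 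This is the key algebraic identification: the operator $\Aez=\map\cov_\qset\map^\top\Eigr+\regp\Id$ that appears in the theorem is exactly what emerges after cancelling the outer $\Eigr$ factor from the normal equations.

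\textbf{Step 3: Bias--variance decomposition.} Writing $\En[\rewhat]=\Aez^{-1}\map\cov_\qset\map^\top\Eigr\rews$, a one-line manipulation gives the bias $\rews-\En[\rewhat]=\regp\,\Aez^{-1}\rews$, hence $\mapad(\rews-\En[\rewhat])=\regp\,\mapad\Aez^{-1}\rews$. The fluctuation is $\rewhat-\En[\rewhat]=\Aez^{-1}\map\cdot\tfrac{1}{\samp}\sum_i\eps_i\pol_i$, so with $M\defn\mapad\Aez^{-1}\map$,
\[
\En\bigl[\normp{\mapad(\rewhat-\En[\rewhat])}^2\bigr] = \tfrac{1}{\samp^2}\sum_i \En[\eps_i^2]\,\pol_i^\top M^\top \Eigp M\pol_i,
\]
using independence of the noise. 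The sum collapses to $\tfrac{\nvar^2}{\samp}\tr[\Eigp\,M\cov_\qset M^\top]$ by the cyclic property of trace. Adding the squared bias and the variance yields the stated equality.

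\textbf{Where the care is needed.} The routine calculations are standard; the only subtle point is keeping track of the asymmetric roles of $\Eigp$ and $\Eigr$ when translating between the RKHS inner products and the $\ell_2$ coordinate representation. In particular, the ``cleanness'' of the definition $\Aez=\map\cov_\qset\map^\top\Eigr+\regp\Id$ (without extra $\Eigr$ factors) relies on multiplying the normal equations through by $\Eigr^{-1}$, and the appearance of $\Eigp$ inside the trace likewise comes from expanding $\normp{\cdot}^2$ in coordinates. I would write out these conversions explicitly once at the start so that the remaining algebra proceeds without confusion.
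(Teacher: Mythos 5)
Your proposal is correct and follows essentially the same route as the paper: the same three-term decomposition with the middle term killed by optimality of $\polhatplug$ and the outer terms bounded via Cauchy--Schwarz on the unit ball, followed by the same closed-form ridge solution, identification of $\Aez$, and bias--variance computation using independence of the noise and the cyclic property of the trace. The only content in the paper's proof you omit is the separate treatment of $\Spol$ equal to the full unit ball (which yields the quadratic improvement mentioned in the surrounding text), but that is not part of the stated theorem.
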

The proof follows a standard analysis of ridge regression 
and is deferred to Appendix~\ref{app:main-res}. Observe that in the above theorem, the query set $\pol \in \qset$ participates in the excess risk only via the covariance $\cov_\qset$. The risk bound is the sum of two term: the first corresponding to the bias and the second corresponding to the variance. In both these terms, $\cov_\qset$ appears as part of $\Aez^{-1}$---thus query sets $\qset$ which induce a larger correlation with the map $\map$ will generally have lower excess risk. Choices of queries which are orthogonal to the right singular vectors of $\map$ will have a constant excess risk, since for those directions the matrix $\Aez \approx \regp I$. 
%
%

As shown later in the appendix, in the special case when the policy set consists of the entire unit ball $\Spol = \{\pol \in \Hp \;|\, \normp{\pol} \leq 1 \}$, the excess risk bound can be improved by a quadratic factor 
\begin{align*}
    \En[\err(\polhatplug;\rews)] \leq O\left(\normp{\mapad(\rews - \rewhat)}^2\right).
\end{align*}
Such an improvement in the excess risk when the underlying query set is the complete unit ball in a finite-dimensional space was also observed by Rusmevichientong and Tsitsiklis~\citep{rusmevichientong2010}. However, the gains in the specific linearly parameterized bandit setup that they considered was logarithmic in nature as compared to our quadratic ones.
\section{Query selection and statistical guarantees}\label{sec:stat}
We now show how to select the query set $\qset$ effectively and study the excess risk of the corresponding plug-in estimator $\polhatplug$ obtained via this query set. 
We will start with the special case where the policy set $\Spol$ is the unit ball in $\Hp$ and the map $\map$ is diagonal, and then generalize to arbitrary policy sets. In both cases, low excess risk can be achieved by repeatedly querying (approximations of) the projections of top eigenvectors of $\mapad\map$ onto the $\Hp$ space. For the special case when the map $\map$ is diagonal, this reduces to querying the top eigenvectors of $\Hp$.

The excess risk will ultimately depend on the the eigenspectrum of the operator $\Eigp^{-\frac{1}{2}}\map^\top \Eigr \map \Eigp^{-\frac{1}{2}}$, which is similar to the operator $\mapad\map$. Additionally, to interpret our results, we instantiate them for a power law spectrum with exponent $\be > 0$, that is,
\begin{align}
     \sigma_j(\Eigp^{-\frac{1}{2}}\map^\top \Eigr \map \Eigp^{-\frac{1}{2}}) \asymp j^{-\be}\;,
\end{align}
where $\sigma_j$ corresponds to the $j^{th}$ singular value of the corresponding operator.




\subsection{Warm-up: $\Spol$ = unit ball, $\map$ = diagonal}\label{sec:warmup}
In order to get some intuition, we study the special case where the policy set $\Spol$ consists of the entire unit ball in the space $\Hp$ and the map $\map$ is diagonal with $\map = \text{diag}(\sigm{j})$. Further, let us denote the operator $\maptil =  \Eigr^{\nicefrac{1}{2}} \map \Eigp^{-\nicefrac{1}{2}}$.

For this special case, our sampling algorithm (Algorithm~\ref{alg:query-select}) simply selects the top $\numdir$ eigenvectors of the space $\Hp$ to query, for some value $\numdir$ which depends on the decay exponent $\be$.  To see why, observe that for a diagonal map $\map$, the right singular vectors of the operator $\maptil$ are the same as the eigenvectors of the policy space $\Hp$. Therefore, the choice of policy $\pol_j$ in our algorithm is simply the 
scaled eigenfunction $\sqrt{\eigpi{j}}\cdot\phi_{\pol,j}$.  Having selected these $\numdir$ queries, the algorithm queries each one of the $\frac{\samp}{\numdir}$ times and uses this as query set $\qset$.

The intuition for this choice of query set $\qset$ is that  since we are in the passive setup with no knowledge of $\rews$, any policy $\pol \in \Spol$ can be an optimal policy. By querying the top $\numdir$ ones out of these, we can obtain a good enough approximation to the performance of any policy in the unit ball. The particular choice of the parameter $\numdir$ depends on the number of queries $\samp$ available. Since the oracle responses are noisy, to reduce variance in the responses along those directions, our algorithm performs multiple queries along the same direction.

If we further consider the special case when the policies and rewards correspond to the unit balls in the finite dimensional spaces $\real^{d_\pol}$ and $\real^{d_\rew}$ respectively, our choice of query set queries the directions $\{e_i\}_{i=1}^{d_\pol}$, each for $\numdir = \frac{n}{d_\pol}$ number of times. Intuitively, this strategy works well because without any prior over the unknown reward function, the optimal strategy in the passive setup is to explore all directions equally and this is precisely our set of chosen queries. This simple query strategy enjoys the following excess risk bound. 

\begin{proposition}[Risk bound for $\Spol$ = unit ball.]\label{prop:unit-ball-risk} For any $\numdir \leq  \samp$ and regularization parameter $\regp > 0$, consider the plug-in estimator obtained via the passive sampling algorithm which explores the first $\numdir$ eigenfunctions of $\Hp$. The excess risk  satisfies
{\begin{align*}
    \En[\err(\polhatplug;\rews)]\nonumber &\leq c\cdot \left(1+\frac{\nvar^2}{\samp \regp^2}\right)\nonumber \\
    &\quad \cdot \max\left\lbrace \sup_{j \leq \numdir}\frac{\regp^2\numdir^2\rat_j}{\rat_j^2 + \regp^2\numdir^2}, \sup_{j > \numdir} \rat_j\right\rbrace\;, 
\end{align*}}
where the quantity $\rat_j = \frac{\sigm{j}^ 2\eigpi{j}}{\eigri{j}}$ and $c >0$ is some universal constant.
\end{proposition}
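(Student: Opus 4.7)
The plan is to start from the improved bound in the remark just after Theorem~\ref{thm:risk-gen} for unit-ball policy sets, namely $\En[\err(\polhatplug;\rews)] \leq c\,\En[\normp{\mapad(\rews-\rewhat)}^2]$, and then expand the bias/variance identity for the right-hand side (also from Theorem~\ref{thm:risk-gen}) using the diagonal structure of this warm-up setting. Because $\map$ is diagonal and the queries $\pol_j = \sqrt{\eigpi{j}}\,\phi_{\pol,j}$ are the first $\numdir$ scaled eigenfunctions, each repeated $\samp/\numdir$ times, the empirical covariance reduces to $(\cov_\qset)_{jj} = \eigpi{j}/\numdir$ for $j\leq\numdir$ and $0$ otherwise. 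Consequently, $\Aez$ becomes diagonal with entries $\rat_j/\numdir + \regp$ on coordinates $j\leq\numdir$ and $\regp$ on coordinates $j>\numdir$, so every norm and trace appearing in Theorem~\ref{thm:risk-gen} collapses to a coordinate-wise sum.

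For the bias, a direct coordinate calculation using $\mapad = \text{diag}(\eigpi{j}\sigm{j}/\eigri{j})$ gives
\begin{align*}
\regp^2\normp{\mapad\Aez^{-1}\rews}^2 &= \sum_{j\leq\numdir}\frac{\regp^2\numdir^2\rat_j}{(\rat_j+\regp\numdir)^2}\cdot\frac{(r^*_j)^2}{\eigri{j}}\\
&\quad+\sum_{j>\numdir}\rat_j\cdot\frac{(r^*_j)^2}{\eigri{j}}.
\end{align*}
Using $\normr{\rews}^2 = \sum_j (r^*_j)^2/\eigri{j}\leq 1$ (since $\rews\in\Srew$ has unit norm) together with $(\rat_j+\regp\numdir)^2 \geq \rat_j^2+\regp^2\numdir^2$, each sum is bounded by the corresponding supremum, giving at most $2\max\bigl\{\sup_{j\leq\numdir}\tfrac{\regp^2\numdir^2\rat_j}{\rat_j^2+\regp^2\numdir^2},\,\sup_{j>\numdir}\rat_j\bigr\}$, which is exactly the bracketed term in the proposition.

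For the variance, the trace reduces to $\frac{\nvar^2\numdir}{\samp}\sum_{j\leq\numdir}\frac{\rat_j^2}{(\rat_j+\regp\numdir)^2}$, and the main technical step is to re-express this sum as $\frac{\nvar^2}{\samp\regp^2}$ times the same supremum that controls the bias. I would split into the two cases $\rat_j\leq\regp\numdir$ and $\rat_j>\regp\numdir$, and verify in each that $\frac{\numdir\rat_j^2}{(\rat_j+\regp\numdir)^2}$ is at most a constant multiple of $\frac{\rat_j}{\regp^2}\cdot\frac{\regp^2\numdir^2\rat_j}{\rat_j^2+\regp^2\numdir^2}$; the standing bounded-operator-norm hypothesis on $\maptil = \Eigr^{1/2}\map\Eigp^{-1/2}$, equivalently $\sup_j\rat_j<\infty$, then absorbs the remaining $\rat_j$ factor into the universal constant. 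Combining the bias and variance bounds produces the prefactor $1+\nvar^2/(\samp\regp^2)$ appearing in the proposition. The hard part will be the termwise comparison in the variance step, since one must convert a sum over the first $\numdir$ coordinates into a single supremum without losing a factor of $\numdir$; this is where the spectral-decay assumption and the operator-norm control of $\maptil$ enter crucially.
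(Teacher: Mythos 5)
Your proposal is correct and follows essentially the same route as the paper: the quadratic improvement for the unit ball, the bias/variance identity of Theorem~\ref{thm:risk-gen}, diagonalization of $\Aez$ via the query covariance $\sig_j=\eigpi{j}/\numdir$, and the final factorization $\frac{\numdir^2\rat_j^2}{\rat_j^2+\regp^2\numdir^2}=\frac{\rat_j}{\regp^2}\cdot\frac{\regp^2\numdir^2\rat_j}{\rat_j^2+\regp^2\numdir^2}$ with $\sup_j\rat_j=\norm{\maptil}_{\op}^2<\infty$ absorbed into $c$ (the paper reaches the same displays by bounding the weighted sums with operator norms, using $\normr{\rews}\leq 1$ and $\normp{\pol_i}=1$). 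The step you flag as hard is actually immediate: $\numdir\sum_{j\leq\numdir}\frac{\rat_j^2}{(\rat_j+\regp\numdir)^2}\leq\numdir^2\sup_{j\leq\numdir}\frac{\rat_j^2}{\rat_j^2+\regp^2\numdir^2}$, which is exactly the quantity the paper's operator-norm bound produces, so no factor of $\numdir$ is lost.
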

We defer the proof of the above proposition to Appendix~\ref{app:main-res}. The choice of the exploration parameter $\numdir$ allows us to trade-off between the two terms inside the maximum. Typically, the second term will be maximized at $j = \numdir + 1$. For the first term, the supremum depends on the choice of $\regp$ --- for small values of $\regp$, the sup is achieved at $j = 1$ while for larger values, it is achieved at $j = \numdir$. In order to gain more intuition about this bound, we instantiate this for the power law decay.


\begin{corollary}[Risk bound for power-law decay]\label{cor:risk-ball}
Suppose that eigenvalues of the policy space $\Hp$ decay as $j^{-\bpol}$, reward space $\Hr$ as $j ^{-\brew}$ and the singular values of map $\map$ as $j^{-\bmap}$. This satisfies the power law assumption with exponent $\be = \bpol + \bmap - \brew$. The plug-in estimator with exploration parameter $\numdir = \samp^{\frac{1}{\be+2}}$ and regularization $\regp = \samp^{-\frac{\be+1}{\be+2}}$ satisfies
\begin{align*}
 \En[\err(\polhatplug;\rews)] \leq 
     c \samp^{-\frac{\be}{\be+2}}.
\end{align*}

\end{corollary}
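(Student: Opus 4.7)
The plan is to specialize Proposition~\ref{prop:unit-ball-risk} under the stated power-law assumption $\rat_j \asymp j^{-\be}$, and then to optimize the two suprema and the regularizer--variance prefactor via the choices $\numdir = \samp^{1/(\be+2)}$ and $\regp = \samp^{-(\be+1)/(\be+2)}$. Under the three polynomial decays in the corollary statement, the product $\rat_j = \sigm{j}^2\eigpi{j}/\eigri{j}$ inherits a polynomial decay with exponent $\be = \bpol + \bmap - \brew$, which lets me apply the warm-up bound directly.

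The heart of the argument is to evaluate the two suprema inside the max of Proposition~\ref{prop:unit-ball-risk}. The tail term is immediate: $\sup_{j > \numdir} \rat_j = \rat_{\numdir+1} \asymp \numdir^{-\be}$, and substituting $\numdir = \samp^{1/(\be+2)}$ yields $\samp^{-\be/(\be+2)}$. For the interior sup $\sup_{j \leq \numdir} \frac{\regp^2\numdir^2\rat_j}{\rat_j^2 + \regp^2\numdir^2}$, I would exploit the unimodality of the scalar map $x \mapsto A^2x/(x^2+A^2)$, which attains its maximum $A/2$ at $x = A$; with $A = \regp\numdir$ this gives the bound $\regp\numdir/2$. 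The choice $\regp = \samp^{-(\be+1)/(\be+2)}$ is designed precisely so that $\regp\numdir = \samp^{-\be/(\be+2)}$, matching the tail term, so the overall max is $\lesssim \samp^{-\be/(\be+2)}$.

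It then remains to combine the two suprema with the prefactor $(1+\nvar^2/(\samp\regp^2))$. The coupling $\regp\numdir \asymp \rat_\numdir \asymp \samp^{-\be/(\be+2)}$ simultaneously balances the bias, the variance, and the truncation error at a single rate, so that the prefactor contributes at most a constant once we treat the noise level $\nvar$ as fixed. The main obstacle of the plan is precisely this three-way balancing: one must choose $\numdir$ so that the truncation error hits the target rate, then choose $\regp$ so that the interior sup meets the tail at $j=\numdir$, and finally check that the variance prefactor does not degrade the bound. Collecting these pieces yields $\En[\err(\polhatplug;\rews)] \leq \const\,\samp^{-\be/(\be+2)}$, which is the claim.
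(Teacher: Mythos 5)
Your choices of $\numdir$ and $\regp$, and your treatment of the two suprema inside the max, match the paper's computation: the tail gives $\numdir^{-\be}=\samp^{-\be/(\be+2)}$, and the interior supremum is at most $\regp\numdir/2=\samp^{-\be/(\be+2)}/2$ by the same AM--GM/unimodality observation the paper uses. The gap is in your final step, where you claim the prefactor $\bigl(1+\tfrac{\nvar^2}{\samp\regp^2}\bigr)$ ``contributes at most a constant.'' With $\regp=\samp^{-(\be+1)/(\be+2)}$ one has $\samp\regp^2=\samp^{-\be/(\be+2)}\to 0$, so the prefactor is $\Theta\bigl(\samp^{\be/(\be+2)}\bigr)$ and your product bound collapses to $O(1)$, i.e.\ it is vacuous. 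This is not a fixable constant-tracking issue: if you insist on the multiplicative form of Proposition~\ref{prop:unit-ball-risk}, keeping the prefactor bounded forces $\regp\gtrsim\samp^{-1/2}$, and then balancing $\regp\numdir$ against $\numdir^{-\be}$ only yields the slower rate $\samp^{-\be/(2(\be+1))}$.

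The paper avoids this by working with the additive bias--variance decomposition established in the proof of Proposition~\ref{prop:unit-ball-risk} (Appendix~\ref{app:main-res}), in which the variance contribution is
\begin{align*}
\frac{\nvar^2}{\samp}\cdot\max\Bigl\{\sup_{j\le\numdir}\frac{\numdir^2\rat_j^2}{\rat_j^2+\regp^2\numdir^2},\ \sup_{j>\numdir}\frac{\rat_j^2}{\regp^2}\Bigr\}\,,
\end{align*}
with $\rat_j^2$ rather than $\rat_j$ in the numerators. That extra factor of $\rat_j$ is exactly what rescues the rate: the interior variance term is at most $\numdir^2/\samp=\samp^{-\be/(\be+2)}$, and the tail term equals $\numdir^{-2\be}/(\samp\regp^2)=\samp^{-\be/(\be+2)}$ for your parameter choices, so bias and variance are simultaneously $O(\samp^{-\be/(\be+2)})$ without any requirement that $\samp\regp^2\gtrsim 1$. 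Replace your prefactor argument with this computation; the rest of your proposal then goes through.
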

The proof of the corollary upper bounds the risk bound with the specific choices of $\numdir$ and $\regp$. 
The above bound shows that our algorithm can learn in the framework as long as $\be > 0 $ or equivalently $\bpol + \bmap > \brew$, with better rates for larger values of $\be$. Thus, for a fixed size of reward class $\brew$, the learning rate improves as the policy class grows smaller ($\bpol$ increases) -- this is intuitive since we are required to search over a smaller policy space. On the other hand, for a fixed policy class $\bpol$, our excess risk rate gets better as the reward class grows in size ($\brew$ increases) -- this is because a larger set of reward functions have similar optimal policies and hence learning  gets easier.

\subsection{General policy sets}
We now describe our choice of query sets $\qset$ for general policy sets $\Spol$. Our strategy, described in Algorithm~\ref{alg:query-select}, differs from the above special case in that we need to take into account the interaction of the policy space $\Hp$ with the map $\map$. Specifically, we show in Appendix~\ref{app:main-res} that the upper bound in Theorem~\ref{thm:risk-gen} can be diagonalized for this general case via a transformation. 

Let us denote the operator $\maptil =  \Eigr^{\nicefrac{1}{2}} \map \Eigp^{-\nicefrac{1}{2}}$. Our transformation reveals that the relevant directions to query for this general case corresponds to the columns of $\Phi_\pol \Eigp^{-\nicefrac{1}{2}}\Phi_{\pol}^
\top\Vm$ where , then $\Vm$ are the eigenvectors of the self-adjoint operator $\maptil^\top \maptil$ -- and it is precisely a subset of these directions that our algorithm queries. 


In order to be able to query these policies, we require the set $\Spol$ to contain some policies which align well with them. We formally state this regularity  assumption below.
\begin{assumption}[Regularity assumption on $\Spol$]\label{ass:reg-S}
For any eigenfunction $\phi_{\maptil, j}$ of the operator $\maptil^\top\maptil$, consider the policy $\pi_j =  \Phi_\pol \Eigp^{-\nicefrac{1}{2}}\Phi_{\pol}^
\top\phi_{\maptil, j}$. There exists a policy $\tilde{\pol}_j$ in policy set  $\Spol$ such that for some constant $\cpol > 0$, we have  $\tilde{\pol}_j \tilde{\pol}_j^\top \succeq \cpol \pol_j\pol_j^\top$. 
\end{assumption}
 
The above assumption requires that for every choice of the policy $\pol_j$ in Algorithm~\ref{alg:query-select}, the set $\Spol$ has the another policy $\tilde{\pol}_j$ which is collinear with it. This assumption can be relaxed in various ways 
(for instance via convexification) but we omit this as it is not needed for our results.
Given this assumption, the following theorem, a generalization of Proposition~\ref{prop:unit-ball-risk}, provides a bound on the excess risk for the plug-in estimate for general policy sets $\Spol$. 

\begin{algorithm}[t!]
	\DontPrintSemicolon
	\KwIn{Number of queries $\samp$, map $\map$, policy set $\Spol$, exploration parameter $\numdir$}
	Construct linear map $\maptil =  \Eigr^{\frac{1}{2}} \map \Eigp^{-\frac{1}{2}}$ and compute eigenvectors  $\{\phi_{\maptil, j}\}_j$ of   $\maptil^\top\maptil$ \;
	Set policy $\pol_j =\Phi_\pol \Eigp^{-\frac{1}{2}}\Phi_{\pol}^
\top\phi_{\maptil, j} $ for all $j \leq \numdir$\;
	Obtain policy $\tilde{\pol}_j \in \Spol$ such that $\tilde{\pol}_j \tilde{\pol}_j^\top \succeq \cpol \pol_j\pol_j^\top$\;
	Form query set $\qset = \{\tilde{\pol}_1^{(\samp/\numdir)}, \ldots, \tilde{\pol}_{\samp^\al}^{(\samp/\numdir)}\}$ where $a^{(b)} = \{ a, \ldots, a\}\} $ repeated $b$ times\;
   \KwOut{Query set $\qset$}
	\caption{Passive querying strategy}
  \label{alg:query-select}
\end{algorithm}

\begin{theorem}[Risk bound for general policy sets $\Spol$.]\label{thm:gen-pol-risk} For any $\numdir \leq \samp $, regularization parameter $\regp > 0$ and set $\Spol$ satisfying Assumption~\ref{ass:reg-S}, let $\polhatplug$ be the estimator output by Algorithm~\ref{alg:pol-rew}. The squared excess risk satisfies
{\begin{align*}
(\En[\err(\polhatplug;\rews)])^2 &\leq c\cdot \left( 1+\frac{\nvar^2}{\samp \regp^2}\right)\nonumber\\ &\quad\cdot\max\left\lbrace \sup_{j \leq \numdir}\frac{\regp^2\numdir^2\rat_j}{\rat_j^2 + \regp^2\numdir^2}, \sup_{j > \numdir} \rat_j\right\rbrace  \;,
\end{align*}}
where the values $\rat_j$ correspond to the $j^{th}$ eigen values of the operator $\maptil^*\maptil$ with $\maptil =  \Eigr^{\frac{1}{2}} \map \Eigp^{\frac{1}{2}}$.
\end{theorem}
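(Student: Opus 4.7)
The plan is to reduce Theorem~\ref{thm:gen-pol-risk} to Theorem~\ref{thm:risk-gen} and then diagonalize the resulting bias--variance expression in the eigenbasis of $\maptil^\top\maptil$. First, I would square both sides using Jensen's inequality on the bound $\En[\err(\polhatplug;\rews)] \leq 2\En[\normp{\mapad(\rews-\rewhat)}]$ from Theorem~\ref{thm:risk-gen}, obtaining
\begin{align*}
(\En[\err(\polhatplug;\rews)])^2 \leq 4\,\En[\normp{\mapad(\rews-\rewhat)}^2],
\end{align*}
so that it suffices to upper bound the bias $\regp^2\normp{\mapad\Aez^{-1}\rews}^2$ and the variance $\frac{\nvar^2}{\samp}\tr[\Eigp(\mapad\Aez^{-1}\map)\cov_\qset(\mapad\Aez^{-1}\map)^\top]$ for the specific query set produced by Algorithm~\ref{alg:query-select}.

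Next I would unpack the covariance $\cov_\qset$. Since Algorithm~\ref{alg:query-select} queries each $\tilde{\pol}_j$ exactly $\samp/\numdir$ times for $j\leq\numdir$, we have $\cov_\qset=\frac{1}{\numdir}\sum_{j=1}^{\numdir}\tilde{\pol}_j\tilde{\pol}_j^\top$, and Assumption~\ref{ass:reg-S} gives $\cov_\qset\succeq \frac{\cpol}{\numdir}\sum_{j=1}^{\numdir}\pol_j\pol_j^\top$, where $\pol_j=\Phi_\pol \Eigp^{-1/2}\Phi_\pol^\top \phi_{\maptil,j}$. Substituting the explicit form of $\pol_j$ and using the isomorphism between $\Hp$ and $\ell^2(\mathbb{N})$, I would then verify that
\begin{align*}
\Eigr^{1/2}\map\left(\sum_{j=1}^{\numdir}\pol_j\pol_j^\top\right)\map^\top \Eigr^{1/2}
= \sum_{j=1}^{\numdir} \maptil\,\Eigp^{-1/2}\phi_{\maptil,j}\phi_{\maptil,j}^\top \Eigp^{-1/2}\maptil^\top
\end{align*}
becomes diagonal in the left singular basis of $\maptil$, with diagonal entries equal to $\rat_j=\sigma_j^2(\maptil)$ for $j\leq\numdir$ and $0$ otherwise. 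A short computation then expresses $\Aez=\map\cov_\qset\map^\top\Eigr+\regp I$ as (conjugate to) the diagonal operator with entries $(\cpol\rat_j/\numdir)+\regp$ for $j\leq\numdir$ and $\regp$ for $j>\numdir$. This is the crux of the argument: once the algorithm's carefully chosen $\pol_j$ align with the eigenvectors of $\maptil^\top\maptil$, all the operators appearing in the bound of Theorem~\ref{thm:risk-gen} become simultaneously diagonalizable.

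In that diagonal basis, the bias term decomposes as $\regp^2\sum_j \rat_j r^{*2}_j / ((\cpol\rat_j/\numdir)+\regp \mathbf{1}[j\leq\numdir]+\regp\mathbf{1}[j>\numdir])^2$, and after pulling out $\normr{\rews}^2=1$ via $\sum_j r^{*2}_j/\eigri{j}=1$, a term-by-term bound yields contributions of the form $\regp^2\numdir^2\rat_j/(\rat_j+\regp\numdir)^2$ (from $j\leq\numdir$, times an additional $\rat_j$ factor coming from $\mapad$'s action) and $\rat_j$ (from $j>\numdir$). The variance trace admits an identical diagonal decomposition, contributing $\frac{\nvar^2}{\samp\regp^2}$ times the same two expressions. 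Taking the supremum over $j$ and combining the two regimes inside a single maximum yields the stated bound.

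The main obstacle I expect is the diagonalization step: $\Hp$ and $\Hr$ have distinct eigenbases, $\map$ is not diagonal, and the policies $\pol_j$ are defined through the somewhat opaque composition $\Phi_\pol \Eigp^{-1/2}\Phi_\pol^\top \phi_{\maptil,j}$. Verifying carefully that this composition is precisely the one that converts $\cov_\qset$ into a diagonal operator in the singular basis of $\maptil$ — so that the bias and variance both split as clean sums over $\rat_j$ — is where the bulk of the work lies. The remaining steps (regularity-based replacement of $\tilde{\pol}_j$ by $\pol_j$, and optimizing the $j\leq\numdir$ vs.\ $j>\numdir$ trade-off) are routine given this diagonalization.
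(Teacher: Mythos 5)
Your proposal follows essentially the same route as the paper's proof: square via Jensen, pass to the whitened operator $\maptil = \Eigr^{\frac{1}{2}}\map\Eigp^{-\frac{1}{2}}$ and its SVD, verify that the policies $\pol_j$ chosen in Algorithm~\ref{alg:query-select} become (scaled) right singular vectors so that $\cov_\qset$ and hence $\Aez$ are simultaneously diagonalized, and then reduce to the diagonal bias--variance computation of Proposition~\ref{prop:unit-ball-risk}. The only blemish is a harmless notational slip in your intermediate display ($\Eigr^{\frac{1}{2}}\map\pol_j$ equals $\maptil\phi_{\maptil,j}$, not $\maptil\Eigp^{-\frac{1}{2}}\phi_{\maptil,j}$), which does not affect your correct conclusion that the resulting diagonal entries are $\sigma_j^2(\maptil)=\rat_j$.
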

We defer the proof of this theorem to Appendix~\ref{app:main-res}. The proof of this theorem goes via a transformation which diagonalizes the excess risk bound and reduces the problem to a similar setup as that of Proposition~\ref{prop:unit-ball-risk}. Additionally, Assumption~\ref{ass:reg-S} allows us to generalize the results to arbitrary policy sets $\Spol$. Note that the above upper bounds the square of the excess risk. As discussed in Section~\ref{sec:alg}, one can obtain a quadratic improvement in this rate if the set $\Spol$ is the entire unit ball in $\Hp$.
%
We specialize the above bound for the power law decay assumption in the following corollary.

\begin{corollary}[Risk bound for power-law decay]\label{cor:risk-power-gen} 
Suppose that eigenspectrum of the operator $\Eigp^{-\frac{1}{2}}\map^\top \Eigr \map \Eigp^{-\frac{1}{2}}$ satisfy the power law assumption with exponent $\be > 0$,  that is, 
 $   \sigma_j(\Eigp^{-\frac{1}{2}}\map^\top \Eigr \map \Eigp^{-\frac{1}{2}}) \asymp j^{-\be}$.  
The plug-in estimator $\polhatplug$ with  parameter $\numdir = \samp^{\frac{1}{\be+2}}$ and regularization $\regp = \samp^{-\frac{\be+1}{\be+2}}$ satisfies
\begin{align*}
    \En[\err(\polhatplug;\rews)] \leq 
     c \samp^{-\frac{\be}{2(\be+2)}} \;.
\end{align*}
 for some universal constant $c > 0$. 
\end{corollary}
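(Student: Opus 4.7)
The plan is to substitute the prescribed choices of $\numdir$ and $\regp$ into the bound from Theorem~\ref{thm:gen-pol-risk} and simplify under the power-law hypothesis. The corollary then follows by balancing the two suprema in the bound and extracting a square root.

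First I would verify the key balancing identity: with $\numdir = \samp^{1/(\be+2)}$ and $\regp = \samp^{-(\be+1)/(\be+2)}$, a direct computation gives $\regp \numdir = \numdir^{-\be} = \samp^{-\be/(\be+2)}$. This equates the tail-truncation scale $\numdir^{-\be}$ (which controls $\sup_{j > \numdir} \rat_j$) with the regularization scale $\regp \numdir$ (which controls the interior supremum), so that the two terms inside the maximum balance, and each is of order $\samp^{-\be/(\be+2)}$.

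Next I would bound each supremum explicitly. The tail piece is immediate from the power-law assumption $\rat_j \asymp j^{-\be}$: $\sup_{j > \numdir} \rat_j \leq c(\numdir+1)^{-\be} \leq c\,\samp^{-\be/(\be+2)}$. For the interior supremum, I would invoke the elementary calculus fact that the rational function $g(t) = at/(t^2 + a)$ attains its global maximum $\sqrt{a}/2$ at $t = \sqrt{a}$; applying this with $a = \regp^2 \numdir^2$ and $t = \rat_j$ yields $\sup_{j \leq \numdir}\frac{\regp^2 \numdir^2 \rat_j}{\rat_j^2 + \regp^2 \numdir^2} \leq \frac{\regp \numdir}{2} = \tfrac{1}{2}\samp^{-\be/(\be+2)}$.

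Finally I would combine these two estimates with the noise prefactor $1 + \nvar^2/(\samp \regp^2)$ evaluated at $\samp \regp^2 = \samp^{-\be/(\be+2)}$, absorb all $\nvar$-dependent constants into the universal constant $c$, and conclude $(\En[\err(\polhatplug;\rews)])^2 \leq c\,\samp^{-\be/(\be+2)}$; taking square roots gives the claimed rate. The only real subtlety is the balance in the first step: any other pair of exponents would leave either the tail or the interior supremum dominant and produce a strictly worse rate, so the prescribed $\numdir$ and $\regp$ are essentially the unique rate-optimal choice among power-law schedules.
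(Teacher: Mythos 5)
Your first three steps are correct and match the paper's calculation: the balancing identity $\regp\numdir = \numdir^{-\be} = \samp^{-\be/(\be+2)}$, the tail bound, and the interior supremum via $\sup_{t>0} \frac{at}{t^2+a} = \frac{\sqrt{a}}{2}$ (the paper uses the equivalent $a^2+b^2\ge 2ab$). The gap is in your final combination step. You correctly compute $\samp\regp^2 = \samp^{-\be/(\be+2)}$, but this means the prefactor satisfies $1+\frac{\nvar^2}{\samp\regp^2} = 1+\nvar^2\samp^{\be/(\be+2)}$, which \emph{diverges} with $\samp$ --- it is not a $\nvar$-dependent constant that can be absorbed into $c$. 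Multiplying it against the max term $\asymp \samp^{-\be/(\be+2)}$ yields only $(\En[\err(\polhatplug;\rews)])^2 \lesssim \nvar^2 + \samp^{-\be/(\be+2)}$, i.e.\ a constant bound, not the claimed rate. In other words, the product form of Theorem~\ref{thm:gen-pol-risk} is too lossy to deliver the corollary under the prescribed schedule, and your argument as written does not close.

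The paper's proof avoids this by returning to the additive bias--variance decomposition underlying the theorem (see the proofs of Proposition~\ref{prop:unit-ball-risk} and Corollary~\ref{cor:risk-ball}), where the variance contribution is
\begin{align*}
\frac{\nvar^2}{\samp}\cdot\max\left\lbrace \sup_{j \leq \numdir}\frac{\numdir^{2}\rat_j^2}{\rat_j^2 + \regp^2\numdir^{2}},\ \sup_{j > \numdir}\frac{\rat_j^2}{\regp^2}\right\rbrace ,
\end{align*}
with a \emph{different} max than the bias term. Bounding the first entry by $\frac{\nvar^2\numdir^2}{\samp} = \nvar^2\samp^{-\be/(\be+2)}$ (drop the $\regp^2\numdir^2$ in the denominator) and the second by $\frac{\nvar^2\numdir^{-2\be}}{\samp\regp^2} = \nvar^2\samp^{-\be/(\be+2)}$ shows the variance matches the bias order, so all four terms are $\asymp \samp^{-\be/(\be+2)}$ and the square root gives $\samp^{-\be/(2(\be+2))}$. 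Your bias analysis and the balancing heuristic are exactly right; you only need to replace the multiplicative noise prefactor with this direct treatment of the variance terms.
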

The above bound indicates that for the general case, learning is possible if the spectrum decay has parameter $\be>0$. To get such a spectrum decay with the operator defined in the above corollary, one sufficient condition is that the map $\map$ does not flip the larger eigenvectors of $\Hp$ towards the smaller eigenvectors of $\Hr$, that is, the map $\map$ preserves the ordering of the eigenvectors of $\Hp$ when transformed to the space $\Hr$. Such a misaligned scenario would require learning a very accurate representation of the reward to learn a good policy and will make learning harder. 
It is worth highlighting that while we discuss our bounds with such a power law assumption on the relevant eigenvalues, one can also obtain similar rates for singular values with exponential decay, by optimizing the value of $\numdir$ to trade off the bias and variance terms.  

\subsection{Comparison with UCB-style adaptive algorithms}\label{sec:adaptive-sample}
We next turn to evaluating the sharpness of Theorem~\ref{thm:gen-pol-risk}. Existing frameworks for studying ``singly"-nonparametric setups require the input domain to be compact. In our doubly-nonparametric setup, the input space is the policy set $\Spol$ which is often non-compact (i.e.~the unit ball is not compact in infinite dimensions). We address this for singly-nonparametrics algorithm by taking 
a finite-dimensional approximation.

Even though our proposed method is passive, it achieves better rates than 
well-known 
\emph{adaptive} sampling algorithms. Specifically, in the power law setting of Section~\ref{sec:warmup}, the analysis of GP-UCB algorithm~\citep{srinivas2010} provides a rate of $O(\samp^{-\frac{\be-1}{2(\be+1)}})$, which is strictly worse than the $O(\samp^{-\frac{\be}{\be+1}})$ obtained by our analysis in Corollary~\ref{cor:risk-ball}. 
%
We refer the reader to Proposition~\ref{prop:gp-ucb-sample} in Appendix~\ref{app:add-res} for an exact statement.
The proof adapts the analysis from~\cite{srinivas2010}, which hinges on a quantity called the information gain, which we bound for our setup. While we are comparing upper bounds for the two algorithms, 
we believe that our improved bound is due to a better algorithm and not an analysis gap. 
While we expect adaptive algorithms to perform better than passive ones in general~\citep{lattimore2021}, UCB style algorithms require the construction of confidence intervals around input points, which crucially dictate the regret bounds of such algorithms. In the frequentist setup, the best known such bounds~\citep{vakili2021} are known to yield suboptimal regret rates and it is an open question as to whether these can be improved.   
\section{Bounds for kernel multi-armed bandits}\label{sec:kmab}

\begin{table*}[t!]
{\renewcommand{\arraystretch}{1.52}%
\begin{center}
\begin{tabular}{ |c|c|c| } 
\toprule
 {Algorithm} & {Regret $\regret_T$}& {Non-vacuous regime}\\
 \midrule
GP-UCB~\citep{srinivas2010}& $\tilde{O}(T^{\frac{2\nu+d(3d+3)}{4\nu +d(2d+2)}})$& $\nu > \frac{d^2 + d}{2}$\\
 \hline
 GP-TS~\citep{chowdhury2017}& $\tilde{O}(T^{\frac{2\nu+d(3d+3)}{4\nu +d(2d+2)}})$& $\nu >\frac{d^2 + d}{2}$ \\
 \hline
 Our work & $\tilde{O}(T^{\frac{4\nu+d(4d+6)}{6\nu+d(4d+7)}})$& $\nu > \frac{3}{2}$\\ 
 \hline
 $\pi$-GP-UCB~\citep{janz2020} & $\tilde{O}(T^{\frac{2\nu+d(2d+3)}{4\nu + d(2d+4)}})$& $\nu > 1$\\ 
 \hline
 SupKernelUCB~\citep{vakili2021} & $\tilde{O}(T^{\frac{\nu+d}{2\nu + d}})$& $\nu > 1$\\ 
 \bottomrule
\end{tabular}
\caption{Our algorithm specializes to the case of kernel multi-armed bandits and yields strong bounds. 
For a $d$-dimensional Mat\'ern kernel with smoothness $\nu$, we outperform 
both GP-UCB and GP-TS unless $\nu \gtrsim d^2$. 
The only works to achieve better bounds for small $\nu$ are $\pi$-GP UCB, which was designed specifically for the Mat\'ern kernel and a recent analysis of the SupKernelUCB which achieves near minimax rates.} 
\label{tab:mat-K-rates}
\end{center}
}

\end{table*}

In the previous subsection, we saw that our passive sampling algorithm actually outperforms existing adaptive sampling algorithms for the reward learning task we care about. Here we take this a step further---we specialize our algorithm to the case of kernel MABs, and show that it outperforms standard algorithms for that setting and is competitive with a specialized algorithm for \mbox{Mat\'ern~kernels.} 

We consider the task of maximizing an unknown function $\fstar : \X \mapsto \real$  over its domain $\X \subset \real^d$. In the kernel multi-armed bandit (MAB) setup, this unknown function $\f$ belongs to an RKHS $\Hsp$, equipped with a positive-definite kernel\footnote{We require that the kernel $\K$ be a Mercer's kernel satisfying $\K(x, x) = \const$ for all $x \in \X$.} $\K$, such that $\norm{\fstar}_\Hsp = 1$.  Let us further restrict our attention to the space of input points $\X = \{x \in \real^d \;| \; \| x\|_2 \leq 1 \}$. The learner is allowed to access this function via a noisy zeroth-order oracle 
\begin{equation}
   \oraclef: x \mapsto \fstar(x) + \eta \, \text{where } \eta \sim \N(0, \nvar^2)\;.
\end{equation}
Going forward we will assume that $\nvar = 1$. The above oracle is similar to the reward oracle $\oracle_{\rews}$, except that the query points $x$ belong to a finite dimensional space and $\fstar$ is a non-linear function of the query point $x$. The goal in MAB is to minimize the $T$-step regret 
{\begin{equation}\label{eq:def-reg}
    \regret_\T \defn \max_{x \in \X} \fstar(x) - \sum_{t=1}^\T\fstar(x_t)\;,
\end{equation}}
where $x_t$ is the datapoint queried in the $t^{th}$ round. There have been several algorithms proposed to solve this problem including general purpose UCB algorithms~\citep{srinivas2010, chowdhury2017}, Thompson sampling approaches~\citep{chowdhury2017}, and special-purpose algorithms for specific kernels~\citep{janz2020}.

We next show that kernel MAB 
can be cast as a special case of our 
non-parametric policy learning framework. The resulting regret bounds, derived from an application of Theorem~\ref{thm:kmab}, 
are better than several general purpose algorithms (GP-UCB, IGP-UCB, GP-TS) and comparable to  those specialized for the Mat\'ern kernel ($\pi$-GP-UCB) and SupKernelUCB. 

In order to reduce kernel MAB to our framework, we need to introduce three elements -- the policy space $\Hp$, the reward space $\Hr$ and the map $\map$. We would like  spaces $\Hr$ and $\Hp$ such that (1) the resulting objective $\eval(\rew, \pi)$ is linear in this space, (2) the resulting rewards and policies have unit norm in their respective space, and (3) we have a good understanding of the eigenvalues of the resulting operator. 
This last point ensures that we can employ our upper bounds from Section~\ref{sec:stat}. 

Before we define these, we let $\covelem_\eps$ denote an $\eps$-net of the input space $\X$ under the $\ell_2$ norm and denote its size by $\cover(\eps)$. We define the kernel matrix $\kermat \in \real^{\cover \times \cover}$ on points selected in the cover as
    $\kermat(i, j) = \K(x_i, x_j)$ for all $(x_i, x_j) \in \covelem_\eps \times \covelem_\eps$.

\textbf{Reward space $\Hr$.} Given the RKHS $\Hsp$ as well as the elements of the cover $\covelem_\eps$, we view the reward function as a map from $\covelem_\eps$ to $\real$, or equivalently as a vector in $\real^{\cover(\eps)}$. More precisely, letting $\ftil = [\f(x_1), \ldots, \f(x_{\cover})]$ denote the 
vector of evaluations of a function $f$, we define 
\begin{align}
\begin{gathered}
    \Hr \defn \text{span}\{ \ftil  \;| \; \f \in \Hsp \}\\ 
     \text{with} \quad  \innerr{\ftil_1}{\ftil_2} \defn \ftil_1^{\top}\kermat^{-1}\ftil_2.
    \end{gathered}\;.
\end{align}
With this notation, we define the true reward $\rews \defn \ftilstar = [\fstar(x_1), \ldots, \fstar(x_{\cover})]$. 

\textbf{Policy Space $\Hp$.} 
Similarly to rewards, we will embed policies in $\real^{\cover}$. 
For any point $x \in \covelem_\eps$, let $\kx_x = [\K(x, x_1), \ldots, \K(x, x_{\cover})]$ denote the corresponding vector in $\real^{\cover}$ obtained by evaluating the kernel $\K$ over the cover. Then, the space
\begin{align}
\begin{gathered}
    \Hp \defn \text{span}\{ \kx_x \;| \;  x \in \covelem_\eps\}
    \\
    \text{with}\quad \innerp{\kx_1}{\kx_2} \defn \inner{\kx_1}{\kermat^{-2}\kx_2}\;.
    \end{gathered}
\end{align}
The choice of the above norm ensures that 
\begin{align*}
    &\innerp{\kx_i}{\kx_j} = \inner{\kermat^{-1}\kx_i}{\kermat^{-1}\kx_j} = 
 \delta_{i,j} \\
 &\text{ for all } (x_i, x_j) \in \covelem_\eps \times \covelem_\eps\;.
\end{align*}
Thus in particular, $\Hp$ contains an orthonormal embedding of the set of vectors $\{\kx_x\}_{x \in {\covelem_\eps}}$. 

\textbf{Map $\map$.}  Both the reward space $\Hr$ and policy space $\Hp$ can be associated with $\real^{\cover}$. Under this transformation, the evaluation $\fstar(x)$ for any $x \in \covelem_\eps$ corresponds to the standard inner product with 
\begin{align*}
\eval(\rews, \pol_x) = \fstar(x) = (\ftilstar)^\top K^{-1} \kx_x = \innerr{\rews}{\kx_x}.
\end{align*}
This indicates that we should take the map $\map$ to be the identity. Furthermore, as a simple application of Mercer's theorem it follows that this map $\map$ is a bounded linear operator.


We make an additional assumption on the kernel function $\K$, requiring it to be Lipschitz in its input arguments. This assumption is often satisfied, in particular for the Mat\'ern kernel when $\nu > \nicefrac{3}{2}$. 
\begin{assumption}[Lipschitz Kernel $\K$]\label{ass:kernel-lip}
The Kernel $\K$ associated with the Hilbert space $\Hsp$ is $\lipK$-Lipschitz with respect to the $\ell_2$-norm for some $\lipK > 0$: 
\begin{align*}
    |\K(x,y) - \K(x,x)| \leq \lipK \|x- y\|_2 \quad \text{for all } x \in \X, y \in \X.
\end{align*}
Furthermore, the kernel satisfies $\K(x, x) = 1$ for all points $x \in \X$. 
\end{assumption}
Applying Theorem~\ref{thm:gen-pol-risk} under the above assumption, we obtain the following excess risk bound for the plug-in estimator 
evaluated on the unknown function $\fstar$. 
\begin{figure*}[t]
  \centering
\begin{tabular}{cc}
  \includegraphics[width=1\columnwidth]{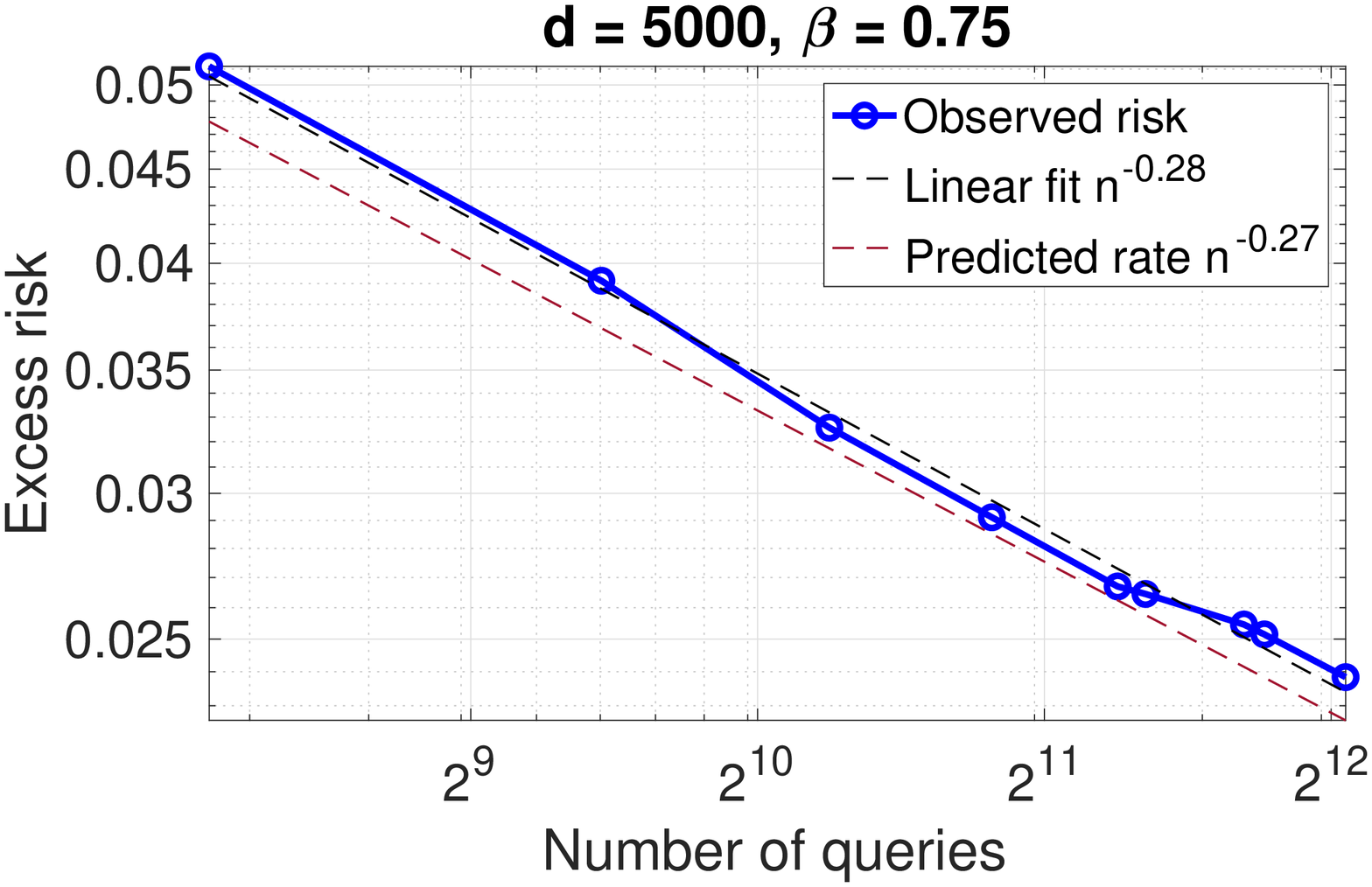}&\hspace{-0mm}
  \includegraphics[width=1\columnwidth]{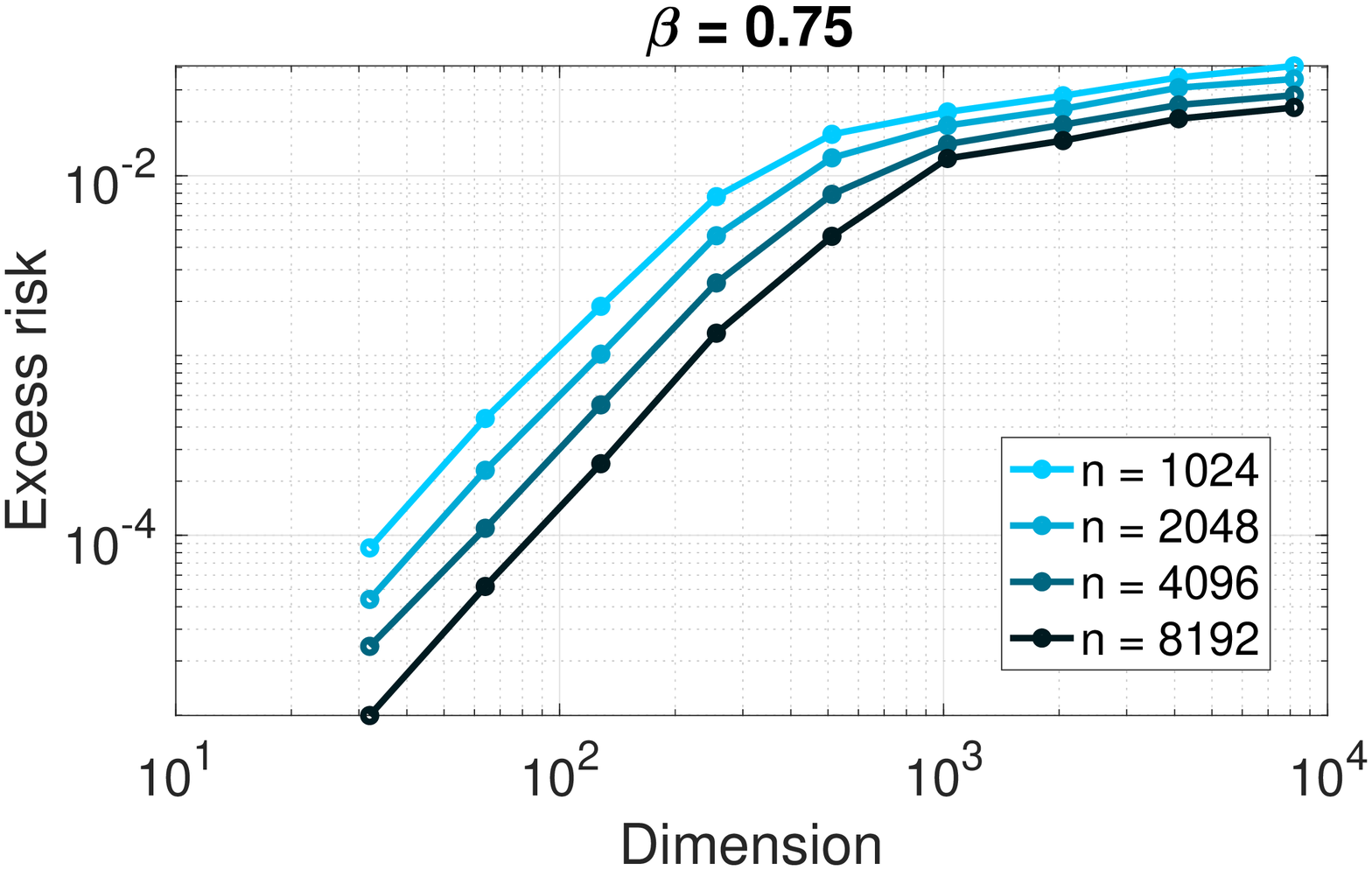}\\
(a)&(b)
\end{tabular}
\caption{{(a) Corroborating upper bound from Corollary~\ref{cor:risk-ball}. Our theoretical bounds predict a rate of $\samp^{-0.27}$ and the experiment shows an almost matching rate of $n^{-0.28}$. (b) As the dimension $d$ is increased, the excess risk curves asymptote at different levels for different $n$. This shows that our algorithm achieves non-vacuous error for the doubly-nonparametric set in the regime $d \to \infty$. }}
\label{fig:exps}
\end{figure*}
\begin{theorem}[Excess risk for Kernel MAB]\label{thm:kmab} Suppose that the eigenvalues of a $\lipK$-Lipschitz kernel $\K$ satisfy the power-law decay $\mu_j \asymp j^{-\be}$. Let $\hat{x}_{{\sf plug}}$ be the output of Algorithm~\ref{alg:pol-rew} using $\samp$ queries to the oracle $\oraclef$. Then, for any value of $ \be >  1+\frac{2}{d} +\log(\frac{1}{\delta}) $ and $\eps \in (0,1)$,  the excess risk satisfies
\begin{align*}
    \max_{x : \|x\|_2 \leq 1}\fstar(x) - \fstar(\hat{x}_{{\sf plug}}) &\lesssim \cover^{\frac{1}{\be+2}}(\eps)\cdot  \samp^{\frac{-\be}{2(\be+2)}}  \nonumber\\
    &\quad + 
     \cover^{\frac{1-\be}{2}}(\eps) + \sqrt{\lipK\eps}\;,
\end{align*}
with probability at least $1-\delta$.
\end{theorem}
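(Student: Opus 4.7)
The plan is to apply Theorem~\ref{thm:gen-pol-risk} (and the power-law Corollary~\ref{cor:risk-power-gen}) to the doubly-nonparametric embedding of kernel MAB defined just above the theorem statement, and then decompose the regret into three terms: (i) the excess risk of the plug-in policy against the best $\kx_x$ with $x \in \covelem_\eps$, (ii) a truncation term coming from approximating $\Hsp$ by its restriction to the cover, and (iii) a discretization error from swapping the maximum over $\X$ for the maximum over $\covelem_\eps$.

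First I would verify the framework ingredients. The choice $\innerr{\tilde f_1}{\tilde f_2}=\tilde f_1^\top \kermat^{-1}\tilde f_2$ is exactly the restriction of the Hilbert norm of $\Hsp$ to $\covelem_\eps$, so $\normr{\rews}\le \normr{\ftilstar}\le \norm{\fstar}_{\Hsp}=1$. The $\kermat^{-2}$ inner product on $\Hp$ makes $\{\kx_x\}_{x\in\covelem_\eps}$ into an orthonormal system, so policies have unit norm. With $\map=\Id$ the evaluation functional $\innerr{\rews}{\kx_x}=\ftilstar{}^\top \kermat^{-1}\kx_x=\fstar(x)$ is linear in both arguments as required. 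Assumption~\ref{ass:reg-S} is immediate since $\Spol$ already contains a scaled multiple of each eigendirection $\pol_j$ used by Algorithm~\ref{alg:query-select}.

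Next I would identify the spectrum of $\maptil^\top\maptil = \Eigp^{-\nicefrac12}\Eigr\Eigp^{-\nicefrac12}$. A short calculation using the two chosen inner products shows its non-zero eigenvalues coincide (up to discretization) with those of $\kermat$ itself, which inherit the power-law decay $\mu_j\asymp j^{-\be}$ from the Mercer spectrum of $\K$ (for a Lipschitz kernel on the ball the discretization on the $\eps$-net only perturbs eigenvalues up to $O(\lipK\eps)$). Applying Corollary~\ref{cor:risk-power-gen} with $\numdir=\samp^{1/(\be+2)}$ and $\regp=\samp^{-(\be+1)/(\be+2)}$ then yields, after accounting for the ambient $\cover(\eps)$-dimensional embedding, an excess risk
\[
\eval(\pols_{\covelem_\eps},\rews)-\eval(\polhatplug,\rews)\lesssim \cover^{\nicefrac{1}{(\be+2)}}(\eps)\cdot\samp^{-\nicefrac{\be}{2(\be+2)}},
\]
where $\pols_{\covelem_\eps}$ is the best policy whose corresponding $\kx_x$ has $x\in\covelem_\eps$. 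The dimensional prefactor comes from normalizing $\rews$ and $\pols$ in the cover-dependent inner products.

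Third, I would bound the two residual terms. The truncation term $\cover^{(1-\be)/2}(\eps)$ comes from controlling the mass of $\fstar$ on eigen-directions beyond the top $\cover(\eps)$ singular values of $\kermat$, using $\norm{\fstar}_{\Hsp}=1$ together with $\mu_j\asymp j^{-\be}$; this is where the condition $\be>1+\nicefrac{2}{d}+\log(\nicefrac{1}{\del})$ is used to make a tail sum and a union-bound high-probability step finite. The discretization term $\sqrt{\lipK\eps}$ follows from Assumption~\ref{ass:kernel-lip}: for $\fstar\in\Hsp$ with unit norm, $|\fstar(x)-\fstar(y)|\le \sqrt{2(\K(x,x)-\K(x,y))}\le \sqrt{\lipK \norm{x-y}_2}$, so projecting $\xstar=\argmax_{\|x\|_2\le 1}\fstar(x)$ onto $\covelem_\eps$ loses at most $\sqrt{\lipK\eps}$. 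Summing the three contributions and taking a union bound over the randomness in the ridge estimate gives the stated high-probability bound.

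The main obstacle I anticipate is the second step: carefully transferring the Mercer spectrum of $\K$ on $\X$ to the spectrum of the finite matrix $\maptil^\top\maptil$ on $\covelem_\eps$, with explicit $\cover(\eps)$-dependence in the constants. The constants must line up so that when combined with the covering-number estimate $\cover(\eps)\asymp \eps^{-d}$, the Lipschitz discretization error $\sqrt{\lipK\eps}$ and the truncation term $\cover^{(1-\be)/2}(\eps)$ can be traded off cleanly; this is also what forces the lower bound on $\be$ in the hypothesis.
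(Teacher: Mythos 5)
Your high-level decomposition---(i) apply the general risk bound to the cover-based embedding, (ii) control the spectrum transfer, (iii) pay a $\sqrt{\lipK\eps}$ discretization price via the Lipschitz assumption---is exactly the paper's architecture, and your verification of the framework ingredients (unit norms, linearity with $\map=\Id$, Assumption~\ref{ass:reg-S} via orthonormality of the $\kx_x$) is correct and matches Appendix~\ref{app:gp-opt}. The discretization step is also right: the paper's Lemma~\ref{lem:p1-p2-connect} gives precisely the $\sqrt{2\const\lipK\eps}$ loss from projecting the maximizer onto $\covelem_\eps$.

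The genuine gap is in your second step, which you yourself flag as the main obstacle but then resolve with two claims that are not what the argument needs. First, the relevant eigenvalues $\rat_j$ are those of the \emph{empirical} kernel matrix $\kermat$ on the cover, and the paper transfers these to the Mercer spectrum not by a deterministic ``Lipschitz perturbation of eigenvalues by $O(\lipK\eps)$'' (which is neither proved nor obviously true), but by treating the cover as random samples from $\distx$ and invoking a Koltchinskii-type concentration bound for the sample covariance operator (Lemma~\ref{lem:eig-conc}), yielding $\hat{\mu}_j \leq (1+\epss)\mu_j + \regSN\epss$ once $\cover(\eps)$ exceeds an effective dimension plus $\log(1/\del)$; this is also where the $\log(1/\del)$ in the hypothesis on $\be$ and the $\cover$-scaling of the eigenvalues (hence the $\cover^{\frac{1}{\be+2}}$ prefactor, since $\tr(\kermat)=\cover$) actually come from. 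Second, you misattribute the $\cover^{\frac{1-\be}{2}}(\eps)$ term: it is not a truncation of the mass of $\fstar$ on tail eigendirections, but the additive slack $\regSN\epss \asymp \cover^{-\covpow}$ from that same concentration bound, propagated through the tail bias and variance terms ($\sup_{j>\numdir}\cover\hat{\mu}_j \lesssim \cover\samp^{-\al\be} + \cover^{1-\covpow}$). Without the concentration lemma, your ``short calculation'' identifying the spectrum of $\maptil^\top\maptil$ with $j^{-\be}$ does not go through, and your proposed mechanism for the second term would not produce it; so the proof as written has a hole precisely at the step the whole theorem turns on.
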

For Mat\'ern kernels, it is known that the eigenvalues decay with parameter \mbox{$\be = 1 + \frac{2\nu}{d}$}~\citep{janz2020}. Substituting this along with a bound on the covering number $\mbox{$\cover(\eps) \asymp \left(\frac{1}{\eps}\right)^d$}$, 
we obtain the following corollary.

\begin{corollary}[Regret bound for Mat\'ern Kernel]\label{cor:mat-reg} Consider the family of Mat\'ern kernels with parameter $\nu > \frac{3}{2}$ defined with the Euclidean norm over $\real^d$. The $\T$-step regret of our algorithm  is
\begin{align*}
     \regret_{{\sf mat}, \T} = \tilde{O}\left(T^\frac{4\nu + d(6+4d)}{6\nu + d(7+4d)}\right).
\end{align*}
\end{corollary}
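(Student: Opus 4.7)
The plan is to derive the regret bound by combining Theorem~\ref{thm:kmab} with a standard explore-then-commit reduction from simple regret to cumulative regret. First I would substitute the Mat\'ern-specific parameters into Theorem~\ref{thm:kmab}: for the $d$-dimensional Mat\'ern kernel with smoothness $\nu$, the eigenvalue decay exponent is $\be = 1 + 2\nu/d$ (as already noted after the theorem statement), and the covering number of the unit ball in $\real^d$ scales as $\cover(\eps) \asymp \eps^{-d}$. Plugging these two facts into the three terms of Theorem~\ref{thm:kmab} yields an excess risk bound of the form
\begin{equation*}
\eps^{-d^2/(3d+2\nu)}\, \samp^{-(d+2\nu)/(6d+4\nu)} \;+\; \eps^{\nu} \;+\; \sqrt{\lipK\, \eps},
\end{equation*}
where the middle term comes from $\cover^{(1-\be)/2}(\eps)$ and the last from the Lipschitz discretization error.

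Next I would simplify this expression by noting that the hypothesis $\nu > \tfrac{3}{2}$ in particular forces $\eps^{\nu} = o(\sqrt{\eps})$ as $\eps \to 0$, so the middle term is asymptotically dominated by the Lipschitz term and can be dropped. The two remaining terms move in opposite directions in $\eps$, so balancing them (setting them equal) pins down the optimal discretization scale $\eps \asymp \samp^{-(d+2\nu)/(2d^2+3d+2\nu)}$ and gives a simple-regret bound of order $\samp^{-(d+2\nu)/(2(2d^2+3d+2\nu))}$ for the plug-in estimator after $\samp$ queries.

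To convert this into a cumulative $T$-step regret bound, I would apply an explore-then-commit argument: use $\samp$ of the $T$ rounds to execute Algorithm~\ref{alg:pol-rew} and produce $\hat{x}_{\sf plug}$, then commit to $\hat{x}_{\sf plug}$ for the remaining $T-\samp$ rounds. Since $\|\fstar\|_{\Hsp} = 1$ and $\K(x,x) = 1$ by Assumption~\ref{ass:kernel-lip}, $|\fstar(x)| \leq 1$, so the exploration phase contributes only $O(\samp)$ to the regret while the exploitation phase contributes at most $T$ times the per-round simple regret. Balancing $\samp$ against $T \cdot \samp^{-(d+2\nu)/(2(2d^2+3d+2\nu))}$ gives the optimal budget $\samp \asymp T^{(4d^2+6d+4\nu)/(4d^2+7d+6\nu)}$, which is exactly the claimed exponent $(4\nu + d(4d+6))/(6\nu + d(4d+7))$ after regrouping terms.

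The main bookkeeping obstacle is the three-way balancing of the discretization scale $\eps$, the exploration budget $\samp$, and the ridge parameter $\regp$. The last is already absorbed inside the proof of Theorem~\ref{thm:kmab}, but the dependence on $\eps$ threads through both occurrences of $\cover(\eps)$ and has to be tracked carefully against the Lipschitz residual $\sqrt{\lipK\eps}$. The condition $\nu > \tfrac{3}{2}$ is exactly what ensures that the Lipschitz term (rather than the sharper $\eps^{\nu}$ term) is the one that balances against the estimation error; relaxing it would change the shape of the optimization and hence the exponent in the final rate.
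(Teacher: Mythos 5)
Your proposal is correct and follows essentially the same route as the paper: instantiate Theorem~\ref{thm:kmab} with $\be = 1+\nicefrac{2\nu}{d}$ and $\cover(\eps)\asymp \eps^{-d}$, balance the discretization scale $\eps$ against the estimation error to obtain simple regret $\samp^{-(d+2\nu)/(2(2d^2+3d+2\nu))}$ (this is the paper's Corollary~\ref{cor:risk-finite-d}), and then apply an explore-then-commit conversion (the paper's Lemma~\ref{lem:btoonline}); all of your exponents match the paper's. One small caveat: the hypothesis $\nu>\nicefrac{3}{2}$ is really what guarantees the Mat\'ern kernel is Lipschitz so that $\lipK$ exists (see the paper's Lipschitz Mat\'ern lemma), whereas the domination $\eps^{\nu}\lesssim\sqrt{\eps}$ that you invoke only needs $\nu\ge\nicefrac{1}{2}$.
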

We defer all the proofs as well as a detailed introduction to the family of Mat\'ern kernels
to Appendix~\ref{app:gp-opt}. 
Note that 
the above bound is for regret, which is an online notion, while our previous results are offline notions. We get from one to the other using a standard batch-to-online conversion bound based on an explore-then-commit strategy. 
Table~\ref{tab:mat-K-rates} compares the above bound to the existing literature. 

\section{Experimental evaluation}

We experimentally evaluate our 
algorithm via a simulation study. We use these experiments to establish the dimension free nature of our results as well as to conjecture optimality of our bounds.

\textbf{Setup.} In the simulation study, we work with $d$ dimensional RKHSs $\Hr$ and $\Hp$. In order to simulate the nonparametric regime, we typically use value of $\samp$ which are less or at most a constant times the dimension $d$.  We set the matrices $\Eigp = \text{diag}(j^{-1.75})$,  $\Eigr = \text{diag}(j^{-1})$ and the map $\map = I$. 
With this, the effective decay parameter $\beta = \bpol - \brew =  0.75$. 
We further sampled the oracle noise $\eps \sim \mathcal{N}(0, 0.01)$. All plots were averaged over 10 runs.

\textbf{Observations.} 
Figure~\ref{fig:exps}(a) shows the variation of excess risk as  the number of queries $\samp$ are varied from $256$ to $4096$ on a log-log plot. Our bounds in Corollary~\ref{cor:risk-ball} for this setup predict that the excess risk should decay at a rate $O(n^{-0.27})$. By fitting a linear line through the plot, we found that observed risk to vary as $O(n^{-0.28})$. This plot is suggestive of the fact that our theoretical upper bounds might be  tight in a minimax way over choices of decay parameter $\beta$. 
In Figure~\ref{fig:exps}(b), we plot the excess risk as we vary the dimension $d$ from $32$ to $8192$ for four different choices of sample size, again, on a log-log scale. Increasing the number of queries decreases the excess risk for all dimensions consistently. The risk curves  tend to asymptote at different error levels for different values of $\samp$. This corroborates our theoretical findings that our proposed algorithm provides non-vacuous bounds for the doubly-nonparametric setup when $d \to \infty$.

\section{Discussion}
In this work, we proposed a new theoretical framework, \emph{Doubly Nonparametric Bandits}, for studying the  reward learning problem. We derived non-asymptotic bounds on the excess risk of a ridge regression based plug-in estimator and showed how the well studied GP bandit optimization problem can be cast as a special case of our rich framework. Our current analysis relies on a regularity assumption on the policy space $C_\pol$; can we obtain bounds on the excess risk in the absence of this assumption? 

Going forward, it would be interesting to study the closed loop dynamics between the reward and the policy learning algorithm when the learner actively queries for feedback.

\subsubsection*{Acknowledgements}
We are grateful to Erik Jones for providing feedback on an early draft of the work. We would like to thank members of the Steinhardt group and InterACT lab for helpful discussions. KB was supported in part by a grant from Long-Term Future Fund (LTFF).
\newpage
\bibliographystyle{abbrvnat}
\bibliography{refs}
\newpage
\appendix
\onecolumn
\section{Technical details for proposed framework}\label{app:rkhs}
\subsection{RKHS assumption}
The Hilbert spaces $\Hp$ and $\Hr$ are Reproducing Kernel Hilbert Spaces defined by kernel functions $\Kp, \Kr : \X \times \X \mapsto [0,1]$ respectively defined over a compact instance space $\X$. Further, the kernels $\Kp$ and $\Kr$ satisfy the Hilbert-Schmidt condition
\begin{align}\label{eq:hs-cond}
    \int_{\X \times \X} \K_i(x, z)^2d\Kprob(x) d\Kprob(z) \leq \infty \quad\text{ for } i = \{ \pol, \rew\}\;,
\end{align}
for some distribution $\Kprob$ over space $\X$. Mercer's theorem~\citep{mercer1909} implies that such kernel functions have an associated set of eigenfunctions (with corresponding eigenvalues) that form an orthonormal basis for $L^2(\X, \Kprob)$. We restate a version of this theorem below~\citep{wainwright2019}.
\begin{theorem}[Mercer's theorem] Suppose that the space $\X$ is compact and the positive semi-definite kernel $\K$ satisfies the Hilbert-Schmidt condition~\eqref{eq:hs-cond}. Then there exists a sequence of eigenfunctions $(\phi_j)_{j=1}^\infty$ that form an orthonormal basis of $L^2(\X, \mathbb{P})$ and non-negative eigenvalues  $(\mu_j)_{j=1}^\infty$ such that
\begin{align}
    \int_{\X}\K(x, z)\phi_j(z)d\mathbb{P}(z) = \mu_j \phi_j(x) \quad \text{for all } j = 1, 2, \ldots.
\end{align}
Furthermore, the kernel function has the expansion
\begin{align}
    \K(x,z) = \sum_{j=1}^\infty \mu_j \phi_j(x)\phi_j(z)\;,
\end{align}
where the convergence of the sequence holds absolutely and uniformly.  
\end{theorem}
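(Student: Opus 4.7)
The plan is to prove Mercer's theorem through the spectral theorem for compact self-adjoint operators applied to the integral operator associated with the kernel. First I would define the integral operator $T_K : L^2(\X, \Kprob) \to L^2(\X, \Kprob)$ by $(T_K f)(x) \defn \int_\X \K(x,z) f(z)\, d\Kprob(z)$ and verify three properties: (i) $T_K$ is a Hilbert-Schmidt operator, hence compact, directly from the hypothesis $\int_{\X\times\X} \K(x,z)^2 d\Kprob(x) d\Kprob(z) < \infty$; (ii) $T_K$ is self-adjoint, since $\K$ is symmetric; and (iii) $T_K$ is positive, using the positive semi-definiteness of $\K$ together with an approximation of the integral $\langle T_K f, f\rangle_{L^2}$ by Riemann-type sums $\sum_{i,j} f(x_i) f(x_j) \K(x_i, x_j) \geq 0$.

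Next I would invoke the spectral theorem for compact self-adjoint operators on a separable Hilbert space. This produces a countable orthonormal system $(\phi_j)_{j=1}^\infty$ in $L^2(\X, \Kprob)$ together with real eigenvalues $\mu_j$ satisfying $T_K \phi_j = \mu_j \phi_j$, with $\mu_j \to 0$. Positivity of $T_K$ gives $\mu_j \geq 0$. Completing $(\phi_j)$ with a basis of the kernel of $T_K$ (corresponding to zero eigenvalues) yields an orthonormal basis of $L^2(\X, \Kprob)$. Rewriting the eigenvalue relation pointwise gives exactly the identity $\int_\X \K(x,z) \phi_j(z) d\Kprob(z) = \mu_j \phi_j(x)$ almost everywhere, and whenever $\mu_j > 0$ the representative $\phi_j = \mu_j^{-1} T_K \phi_j$ inherits continuity from continuity of $\K$, so the equality holds everywhere.

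For the series expansion, I would first establish pointwise and absolute convergence. Fixing $x \in \X$, consider the function $z \mapsto \K(x,z)$ in $L^2(\X, \Kprob)$; expanding it in the basis $(\phi_j)$ and using the reproducing identity $\langle \K(x,\cdot), \phi_j\rangle_{L^2} = \mu_j \phi_j(x)$ gives, via Parseval and Bessel, the bound $\sum_{j=1}^N \mu_j^2 \phi_j(x)^2 \leq \int_\X \K(x,z)^2 d\Kprob(z)$. Combined with Cauchy--Schwarz applied to the Mercer series, absolute convergence of $\sum_j \mu_j \phi_j(x)\phi_j(y)$ follows from the bound $\sum_j \mu_j \phi_j(x)^2 \leq \K(x,x)$, which itself will be established by writing the residual kernel $\K_N(x,y) \defn \K(x,y) - \sum_{j=1}^N \mu_j \phi_j(x) \phi_j(y)$ and checking it remains positive semi-definite, so that $\K_N(x,x) \geq 0$ for every $x$.

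The main obstacle, and the most delicate step, is upgrading pointwise convergence to absolute and uniform convergence on the compact set $\X$. The plan is to apply Dini's theorem: the partial sums $S_N(x) \defn \sum_{j=1}^N \mu_j \phi_j(x)^2$ form a monotone increasing sequence of continuous functions (continuity of each $\phi_j$ on $\X$ follows from the argument above, provided $\K$ is continuous — a standard additional regularity one deduces from Hilbert-Schmidt plus compactness of $\X$ in this context) converging pointwise to the continuous function $x \mapsto \K(x,x)$, and Dini's theorem then forces uniform convergence. Uniform convergence of $\sum_j \mu_j \phi_j(x) \phi_j(y)$ then follows from Cauchy-Schwarz: the tail is bounded by $\bigl(\sum_{j>N}\mu_j \phi_j(x)^2\bigr)^{1/2}\bigl(\sum_{j>N}\mu_j \phi_j(y)^2\bigr)^{1/2}$, which is uniformly small. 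Identification of the limit with $\K(x,y)$ comes from the $L^2$-expansion of $\K(x,\cdot)$ together with the now-justified pointwise evaluation.
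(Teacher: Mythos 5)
The paper does not actually prove this statement: it is a restatement of the classical Mercer theorem, quoted with a citation to \citet{wainwright2019} (and \citet{mercer1909}), and no argument is given in the appendix. Your sketch is the standard textbook proof --- spectral theorem for the compact, positive, self-adjoint integral operator $T_K$; the residual-kernel positivity trick to get $\sum_j \mu_j \phi_j(x)^2 \leq \K(x,x)$; Cauchy--Schwarz plus Dini for absolute and uniform convergence --- so there is no alternative route in the paper to compare against. As a plan it has all the right ingredients and would be accepted as a proof of the classical result.

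Two caveats are worth flagging. First, your parenthetical claim that continuity of $\K$ ``is deduced from Hilbert--Schmidt plus compactness of $\X$'' is false: an $L^2$ kernel on a compact space need not be continuous, and without continuity the uniform-convergence conclusion of Mercer's theorem can fail. Continuity is an independent hypothesis of the classical theorem which the paper's restatement silently omits (it is implicit elsewhere in the paper, where $\K$ is required to be a Mercer kernel); you should state it as an assumption rather than derive it. Second, your use of Dini's theorem is mildly circular as written: Dini requires knowing that the monotone sequence $S_N(x) = \sum_{j \leq N} \mu_j \phi_j(x)^2$ converges pointwise to the continuous function $\K(x,x)$, i.e.\ that equality (not just $\leq$) holds on the diagonal, but you defer the identification of the limit with $\K$ to the very end. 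The standard fix is to first show, for each fixed $x$, that $\sum_j \mu_j \phi_j(x)\phi_j(\cdot)$ converges to $\K(x,\cdot)$ in $L^2$, then use continuity and positive semi-definiteness of the residual kernel to upgrade this to pointwise equality on the diagonal, and only then invoke Dini. With that reordering and the continuity hypothesis made explicit, the argument is complete.
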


\subsection{Conditions for reward boundedness}
For learning to be feasible in the proposed framework, we would require that the evaluation functional $\eval(\pol, \rews)$ is bounded for any policy $\pol \in \Hp$. Using the fact that $\normr{\rews}\leq 1$ and $\normp{\pol} \leq 1$, we have
\begin{align}
     \eval(\pol, \rews) = \innerr{\rews}{\map \pol} = (\rews)^\top\Eigr\map\pol \leq \norm{\Eigr^{\frac{1}{2}}\map \Eigp^{-\frac{1}{2}}}_{\op}\;.
\end{align}
Thus one sufficient condition for the reward functional to be bounded is to ensure that the operator norm $\norm{\Eigr^{\frac{1}{2}}\map \Eigp^{-\frac{1}{2}}}_{\op}$ is finite. In the special case when the map  is diagonal with $\map = \diag(\sigm{j})$, the above condition simplifies to 
\begin{align}
    \eval(\pol, \rews) \leq \sup_{j \geq 1} \left[ \frac{\sigm{j}\eigpi{j}^{\frac{1}{2}}}{\eigri{j}^{\frac{1}{2}}}\right]\;.
\end{align}

\subsection{Regularity assumptions on map $\map$}
We assume that the map $\map$ is a compact bounded operator from the policy space $\Hp$ to the reward space $\Hr$. By Schauder's theorem, the adjoint $\mapad$ is also a compact operator. Thus, the map $\mapad \map : \Hp \to \Hp$ is a compact self-adjoint operator. This allows us to use the spectral theorem for compact self-adjoint operators which guarantees the existence of eignevalues and eignefunctions for the operator $\mapad\map$ and a corresponding singular value decomposition for the map $\map$~\citep{kreyszig1978}.

\subsection{Non-aligned RKHSs}
As mentioned in the Section~\ref{sec:prob}, if the eigenvectors of the spaces $\Hr$ and $\Hp$ are not aligned, one can consider the following simple transformation which resolves this. Let $\Phi_\pol$ and $\Phi_\rew$ represent the eigenvectors.
\begin{align}
    \rewtil = \Phi_\rew \rew, \quad \poltil = \Phi_\pol^\top \pol, \quad \text{and} \quad \maptil = \Phi_\rew^\top \map \Phi_\pol\;.
\end{align}
The above transformation implies that $\normr{\rewtil} \leq 1$ and  $\normp{\poltil}\leq 1$. 
\section{Proof of main results}\label{app:main-res}
In this section we provide the proofs for the main results of this work. Appendix~\ref{app:add-res} to follow contains the proofs for the other results.

\subsection{Proof of Theorem~\ref{thm:risk-gen}}
We begin by proving the result for the special case when the policy set $\Spol$ consists of the entire unit ball and then generalize the analysis to arbitrary policy sets. 

\paragraph{Case 1: $\Spol$ is unit ball in $\Hp$.} For this special case, observe that the the optimal policy $\pols$ and the plug-in policy $\polhatplug$ for any reward estimate $\rewhat$ can be written as
\begin{align}
    \pols = \frac{\mapad\rews}{\normp{\mapad\rews}}\quad \text{and} \quad \polhatplug = \frac{\mapad\rewhat}{\normp{\mapad\rewhat}}\;,
\end{align}
where the operator $\mapad$ is the adjoint of of the map $\map$. To prove a bound on the excess risk using the plug-in estimate, we use the following lemma which bounds this error in terms of deviation of the estimated and true rewards.
\begin{lemma}\label{lem:x-y}
Consider any vectors $x$ and $y$ with finite non-zero norm under some inner product $\inner{\cdot}{\cdot}$. Then, we have
\begin{align}
    \inner{x}{\frac{x}{\norm{x}} - \frac{y}{\norm{y}}} \leq \frac{\norm{x - y}^2}{2\norm{y}}\;.
\end{align}
\end{lemma}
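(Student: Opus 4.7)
The plan is to reduce the claimed inequality to a one-line application of the AM--GM inequality by expanding both sides explicitly. First I would use linearity of the inner product on the left-hand side to write
\[
\inner{x}{\tfrac{x}{\norm{x}} - \tfrac{y}{\norm{y}}} \;=\; \norm{x} - \frac{\inner{x}{y}}{\norm{y}} \;=\; \frac{\norm{x}\norm{y} - \inner{x}{y}}{\norm{y}}.
\]
So after clearing the common factor $1/\norm{y}$ (which is positive, so the inequality direction is preserved), the claim becomes
\[
\norm{x}\norm{y} - \inner{x}{y} \;\leq\; \tfrac{1}{2}\norm{x-y}^2.
\]

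Next I would expand the right-hand side using the standard polarization-type identity $\norm{x-y}^2 = \norm{x}^2 + \norm{y}^2 - 2\inner{x}{y}$, giving $\tfrac{1}{2}\norm{x-y}^2 = \tfrac{1}{2}(\norm{x}^2 + \norm{y}^2) - \inner{x}{y}$. The $\inner{x}{y}$ terms cancel on both sides, so the desired inequality is equivalent to
\[
\norm{x}\norm{y} \;\leq\; \tfrac{1}{2}\bigl(\norm{x}^2 + \norm{y}^2\bigr),
\]
which is precisely the arithmetic--geometric mean inequality applied to the non-negative reals $\norm{x}^2$ and $\norm{y}^2$ (equivalently, it follows from $(\norm{x} - \norm{y})^2 \geq 0$). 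Reversing the chain of equivalences completes the proof.

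There is no real obstacle here: the lemma is a purely algebraic identity combined with AM--GM, and no structural properties of the ambient Hilbert space beyond bilinearity of $\inner{\cdot}{\cdot}$ and the induced norm are used. The only thing worth flagging for rigor is the assumption $\norm{y} \neq 0$ (to divide through) and $\norm{x} \neq 0$ (so that $x/\norm{x}$ is defined), both of which are granted in the statement.
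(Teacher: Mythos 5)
Your proof is correct. It takes a mildly different route from the paper's: you expand $\inner{x}{\frac{x}{\norm{x}}-\frac{y}{\norm{y}}}=\norm{x}-\frac{\inner{x}{y}}{\norm{y}}$, clear the factor $\norm{y}$, expand $\norm{x-y}^2$, and reduce the claim to the AM--GM inequality $\norm{x}\norm{y}\le\frac{1}{2}(\norm{x}^2+\norm{y}^2)$. The paper instead substitutes $y=x+\delta_x$, factors the left-hand side as $\frac{\norm{x}}{\norm{x+\delta_x}}\bigl(\norm{x+\delta_x}-\norm{x}-\frac{\inner{x}{\delta_x}}{\norm{x}}\bigr)$, and invokes the bound $\sqrt{a^2+z}\le a+\frac{z}{2a}$. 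The two arguments are ultimately equivalent --- the slack in both is exactly $\frac{(\norm{x}-\norm{y})^2}{2\norm{y}}$, so both are really the identity $\inner{x}{\frac{x}{\norm{x}}-\frac{y}{\norm{y}}}=\frac{\norm{x-y}^2-(\norm{x}-\norm{y})^2}{2\norm{y}}$ in disguise --- but your version is more transparent: it avoids the change of variables, makes the single inequality used completely explicit, and sidesteps the square-root manipulation. Your closing remark about which nonvanishing assumptions are actually used ($\norm{x}\neq 0$ for $x/\norm{x}$ to be defined, $\norm{y}\neq 0$ to divide) is also accurate.
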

The proof of the above lemma is presented in Section~\ref{sec:proof-x-y}. Taking the above as given, we can upper bound the excess risk
\begin{align}
    \err(\polhat;\rews) &= \innerp{\mapad\rews}{\frac{\mapad\rews}{\normp{\mapad\rews}} - \frac{\mapad\rewhat}{\normp{\mapad\rewhat}}}\nonumber\\
    &\leq \frac{\normp{\mapad(\rews - \rewhat)}^2}{2 \normp{\mapad\rewhat}}\;.
\end{align}

\paragraph{Case 2: Arbitrary set $\Spol$.} For this case, consider the excess risk of plug-in estimator $\polhatplug$ obtained by maximizing reward estimate $\rewhat$
\begin{align}
    \err(\polhat;\rews) &= \innerp{\mapad\rews}{\pols - \polhatplug}\nonumber\\
    &= \innerp{\mapad(\rews-\rewhat)}{\pols} + \innerp{\mapad\rewhat}{\pols - \polhatplug} + \innerp{\mapad(\rewhat-\rews)}{\polhatplug}\nonumber\\
    &\stackrel{\1}{\leq} 2 \normp{\mapad(\rews - \rewhat)}\;,
\end{align}
where the final inequality follows from the fact that $\polhatplug$ maximizes $\eval(\pol;\rewhat)$ over the set $\Spol$. 

Thus, we see that for both the cases above, we can upper bound the excess risk of the plug-in estimator in terms of the norm $\normp{\mapad(\rews - \rewhat)}$. Next, we evaluate this for the ridge regression based reward estimator for any set of $\samp$ queries $\qset = \{\pol_1, \ldots, \pol_\samp \}$ with covariance matrix $\cov = \frac{1}{\samp}\sum_i \pol_i \pol_i^\top$. For any regularization parameter $\regp > 0$, we have,
\begin{align}
     \rewridge &= \arg\min_{\rew \in \Hr} \frac{1}{\samp}\sum_{i = 1}^\samp(\y_i - \innerr{\rew}{\map\pol_i})^2 + \regp \normr{r}^2\nonumber\\
     &\stackrel{\1}{=} (\map \cov\map^\top \Eigr + \regp \Id)^{-1} \cdot \frac{1}{n}\sum_{i=1}^\samp \y_i \map \pol_i\nonumber \\
     &= \rews -\regp(\map \cov\map^\top \Eigr + \regp \Id)^{-1}\rews + (\map \cov\map^\top \Eigr + \regp \Id)^{-1}\left(\frac{\map}{\samp} \sum_{i=1}^\samp \eps_i \pol_i\right),
\end{align}
where and equality $\1$ follows by substituting the value of $\y_i = \eval(\pol_i, \rews) +\eps_i$. Let us denote by matrix $\Aez = \map \cov\map^\top \Eigr + \regp \Id$. Therefore, the error in reward estimation
\begin{align}\label{eq:norm-error}
    \rewridge - \rews &= \regp\Aez^{-1}\rews + \Aez^{-1}\left(\frac{\map}{\samp} \sum_{i=1}^\samp \eps_i \pol_i\right)\nonumber\\
    &\sim \N\left(\regp\Aez^{-1}\rews, \frac{\nvar^2}{\samp} \Aez^{-1}\map \cov \map^\top \Aez^{-\top} \right)\;,
\end{align}
where the final distribution follows from our assumption on the noise variables $\eps_i \sim \N(0, \nvar^2)$. Using this above distributional form, we have 
\begin{align}
    \En[\normp{\mapad(\rews - \rewhat)}^2] &= \regp^2\cdot \innerp{\mapad\Aez^{-1}\rews}{\mapad \Aez^{-1}\rews} + \frac{\nvar^2}{\samp}\cdot \tr\left[\Eigp \mapad \Aez^{-1} \map \covn \map^{\top} \Aez^{-\top} (\mapad)^\top \right]\nonumber\\
    &= \regp^2 \cdot\tr\left[(\rews)^\top \Aez^{-\top} (\mapad)^\top \Eigp \mapad \Aez^{-1} \rews \right] + \frac{\nvar^2}{\samp}\cdot \tr\left[\Eigp \mapad \Aez^{-1} \map \cov \map^{\top} \Aez^{-\top} (\mapad)^\top \right]\;.
\end{align}
The final bound for the general policy set $\Spol$ follows from using the above bound with a an application of Jensen's inequality. In order to convert the above bound to a high probability bound, we require an infinite dimensional analog of the Hanson-Wright concentration inequality. Using Theorem 2.6 from~\citet{chen2021} along with equation~\eqref{eq:norm-error}, we obtain
\begin{align*}
    \Pr( \err(\polhat;\rews) \geq  \En[\err(\polhat;\rews)] + t) \leq 2\exp\left(-C\min\left(\frac{t^2}{\| \Gamma\|_{\text{HS}}^2}, \frac{t}{\Gamma\|_{\text{op}}} \right) \right)\;
\end{align*}
where the covariance matrix $\Gamma = \Eigp^{\frac{1}{2}} \mapad \Aez^{-1} \map \cov \map^{\top} \Aez^{-\top} (\mapad)^\top \Eigp^{\frac{1}{2}} $. 
\qed

\subsubsection{Proof of Lemma~\ref{lem:x-y}}\label{sec:proof-x-y}
Let the vector $y = x+\delta_x$ for some difference vector $\delta_x$. Using this, we have
\begin{align}
    \inner{x}{\frac{x}{\norm{x}} - \frac{y}{\norm{y}}} &= \inner{x}{\frac{x}{\norm{x}} - \frac{x+\delta_x}{\norm{x+\delta_x}}}\nonumber\\
    &= \frac{\norm{x}}{\norm{x +\delta_x}}\left(\norm{x+\delta_x} - \norm{x} -\frac{\inner{x}{\delta_x}}{\norm{x}} \right)\nonumber\\
    &\stackrel{\1}{\leq} \frac{\norm{x}}{\norm{x +\delta_x}} \left(\norm{x} + \frac{\inner{x}{\delta_x}}{\norm{x}} + \frac{\norm{\delta_x^2}}{2\norm{x}} - \norm{x} -\frac{\inner{x}{\delta_x}}{\norm{x}}\right)\nonumber\\
    &= \frac{\delta_x^2}{2\norm{x+\delta_x}}\;,
\end{align}
where $\1$ follows from using the inequality $\sqrt{a^2 + z} \leq a + \frac{z}{2a}$. This establishes the result. \qed 

\subsection{Proof of Proposition~\ref{prop:unit-ball-risk}}
Let us denote the the map $\map = \text{diag}(\sigm{j})$ and the covariance matrix $\cov = \text{diag}(\sig_j)$. From the upper bound obtained in Theorem~\ref{thm:risk-gen}, we have,
\begin{align}\label{eq:err-diag-1}
     \En[\normp{\mapad(\rews - \rewhat)}^2] &= \regp^2 \cdot \normp{\mapad\Aez^{-1}\rews}^2 + \frac{\nvar^2}{\samp^2}\cdot \sum_{i=1}^\samp \normp{\mapad\Aez^{-1}\map\pol_i}^2\nonumber\\
     &\stackrel{\1}{\leq} \regp^2\cdot\norm{\Eigp^{\frac{1}{2}} \mapad \Aez^{-1} \Eigr^{-\frac{1}{2}}}_\op^2 + \frac{\nvar^2}{\samp}\cdot \tr\left[\Eigp \mapad \Aez^{-1} \map \cov \map^{\top} \Aez^{-\top} (\mapad)^\top \right]\nonumber\\
     &\stackrel{\2}{\leq} \regp^2\cdot \sup_{j \geq 1} \left[\frac{\sigm{j}^2\eigri{j}\eigpi{j}}{\sigm{j}^4\sig_j^2 + \regp^2\eigri{j}^2} \right] + \frac{\nvar^2}{\samp}\cdot \sup_{j \geq 1} \left[\frac{\sigm{j}^4 \eigpi{j}^2}{\sigm{j}^4\sig_j^2 + \regp^2 \eigri{j}^2}\right]\;,
\end{align}
where inequality $\1$ follows from using the fact that $\normr{\rews} \leq 1$ and inequality $\2$ uses the diagonal structure of the map $\map$ as well as the fact that each policy $\pol_i \in \qset$ has unit $\Hp$-norm. 

Recall that the choice of querying strategy queries each scaled eigenfunction $\sqrt{\eigpi{j}} \eigfpi{j}$ of the policy space $\samp^{1-\al}$ times. Therefore the $j^{th}$ entry of the covariance matrix $\cov$ is given by
\begin{align}
        \sig_j = \begin{cases}
        \frac{\eigpi{j}}{\samp^\al} &\quad \text{for } j \leq n^\al\\
        0 &\quad \text{otherwise}
        \end{cases}\;.
\end{align}
Plugging the above value of $\sig_j$ into equation~\eqref{eq:err-diag-1}, we obtain
\begin{align}
    \En[\normp{\mapad(\rews - \rewhat)}^2] &\leq \max\left\lbrace \sup_{j \leq \samp^\al}\frac{\regp^2\samp^{2\al}\rat_j}{\rat_j^2 + \regp^2\samp^{2\al}}, \sup_{j > \samp^\al} \rat_j\right\rbrace \nonumber\\
    &\quad + \frac{\nvar^2}{\samp}\cdot\max\left\lbrace \sup_{j \leq \samp^\al}\frac{\samp^{2\al}\rat_j^2}{\rat_j^2 + \regp^2\samp^{2\al}}, \sup_{j > \samp^\al}\frac{\rat_j^2}{\regp^2}\right\rbrace
\end{align}
This concludes the proof of the proposition. \qed

\subsection{Proof of Corollary~\ref{cor:risk-ball}}
We now derive explicit finial sample rates for the case when the spectrum of the map $\map^\top\Eigr\map\Eigp^{-1}$ satisfies a power law decay for some parameter $\be > 0$. In the notation used in Proposition~\ref{prop:unit-ball-risk}, we have the quantity
\begin{equation}
    \rat_j \asymp j^{-\be}.
\end{equation}
Our proof strategy will be to instantiate the bias and variance terms for this setting of $\rat_j$ and finally select a setting for the exploration parameter $\al$ and regularization parameter $\regp$. 

\paragraph{Bounding Bias.} The bias term in the proposition is a max over two terms
\begin{equation}
    \text{Bias}^2 = \max\left\lbrace \sup_{j \leq \samp^\al}\frac{\regp^2\samp^{2\al}\rat_j}{\rat_j^2 + \regp^2\samp^{2\al}}, \sup_{j > \samp^\al} \rat_j\right\rbrace. 
\end{equation}
We consider the two terms in the analysis here separately. For the first term,
\begin{equation}
     \sup_{j \leq \samp^\al}\frac{\regp^2\samp^{2\al}\rat_j}{\rat_j^2 + \regp^2\samp^{2\al}} =  \regp^2 \sup_{j \leq \samp^\al} \left[\frac{1}{\frac{j^{-\be}}{\samp^{2\al} }  + \regp^2 j^{\be}}\right]\leq \regp \samp^\al\;,
\end{equation}
where the final inequality follows from using $a^2 + b^2 \geq 2ab$.  For the second term, we have 
\begin{equation}
    \sup_{j \geq \samp^\al} \rat_j =  \sup_{j \geq \samp^\al} j^{-\be} = \samp^{-\al\be}. 
\end{equation}

\paragraph{Bounding Variance.} Recall that the variance term (assuming $\nvar = 1$) is given by 
\begin{equation}
    \text{Variance} = \frac{1}{\samp}\cdot\max\left\lbrace \sup_{j \leq \samp^\al}\frac{\samp^{2\al}\rat_j^2}{\rat_j^2 + \regp^2\samp^{2\al}}, \sup_{j > \samp^\al}\frac{\rat_j^2}{\regp^2}\right\rbrace\;.
\end{equation}
We again consider both terms of the maximum separately. For the first term, 
\begin{equation}
    \frac{1}{\samp}\cdot \sup_{j \leq \samp^\al}\frac{\samp^{2\al}\rat_j^2}{\rat_j^2 + \regp^2\samp^{2\al}} \leq \samp^{2\al-1}\;,
\end{equation}
where the inequality follows from ignoring the term $\regp^2\samp^{2\al}$ in the denominator. For the second variance term, 
\begin{equation}
    \sup_{j > \samp^\al}\frac{\rat_j^2}{\samp\regp^2} = \frac{\samp^{-2\al\be}}{\regp^2\samp }\;.
\end{equation}

\paragraph{Setting regularization parameter.} By setting $\regp > \samp^{-\al\be-\al}$, we can have that the bias term is dominated by $\regp \samp^{\al}$. Similarly, the above setting also implies that the variance term is dominated by $\samp^{2\al -1}$. Combing these observations, we have that the expected error is upper bounded by 
\begin{align}
     \err(\polhatplug;\rews) \leq \regp\samp^\al + \samp^{2\al - 1} \quad \text{where } \regp > \samp^{-\al\be - \al}. 
\end{align}
Setting $\regp = \samp^{-\al(\be+1)}$ and then $\al = \frac{1}{\be+2}$, we get that
\begin{align}
 \err(\polhatplug;\rews) \leq \samp^{-\frac{\be}{\be+2}}.
\end{align}
This completes the proof of the corollary. \qed

\subsection{Proof of Theorem~\ref{thm:gen-pol-risk}}
In order to prove the general theorem, we exhibit a transformation which allows us to reduce the problem to that with the diagonal structure described in Proposition~\ref{prop:unit-ball-risk}. 

We will consider orthogonally diagonalizable matrices $\Eigr$ and $\Eigp$ which represent the eigenvectors and eigenvalues of the Hilbert spaces $\Hr$ and $\Hp$. Consider the following set of transformations for any reward $\rew \in \Srew$  and policy $\pol \in \Spol$.
\begin{align}
    \rewtil = \Eigr^{\frac{1}{2}} \rew, \quad \poltil = \Eigp^{\frac{1}{2}} \pol, \quad \maptil = \Eigr^{\frac{1}{2}} \map \Eigp^{-\frac{1}{2}}.
\end{align}
With this transformation, we can rewrite the objective function above
\begin{align}
    \max_{\poltil} \inner{\rewtil}{\maptil\poltil} \quad \text{s.t.} \quad \inner{\poltil}{\poltil} = 1 \text{ and } \inner{\rewtil}{\rewtil} = 1\;,\nonumber
\end{align}
where the inner product $\inner{\cdot}{\cdot}$ denotes the standard $\ell_2$ inner product. Observe that we have overloaded notation to denote by $\rewtil^* = \rewtil$. Further, using these above transformations, we can rewrite the adjoint operator
\begin{align}
    \mapad = \Eigp^{-1} \map^\top \Eigr = \Eigp^{-\frac{1}{2}}(\Eigr^{\frac{1}{2}} \map \Eigp^{-\frac{1}{2}})^\top \Eigr^{\frac{1}{2}} = \Eigp^{-\frac{1}{2}} \maptil^\top \Eigr^{\frac{1}{2}}\;.
\end{align}
Recall from Theorem~\ref{thm:risk-gen}, the matrix 
\begin{align}
    \Aez = \map \cov\map^\top \Eigr + \regp \Id = \Eigr^{-\frac{1}{2}} \left[ \maptil\covtil\maptil^\top +\regp \Id\right] \Eigr^{\frac{1}{2}}\;,
\end{align}
where the covariance matrix $\covtil = \frac{1}{n} \sum_i \poltil \poltil^\top$. We have used the fact here that the matrices $\Eigp$ and $\Eigr$ are orthogonally diagonalizable and hence symmetric. Finally, we will denote the singular value decomposition of the compact map $\map$ in the matrix form as
\begin{align*}
    \maptil = \Umtil\Lamtil\Vmtil^\top\;.
\end{align*}
The existence of such a decomposition is guaranteed by the regularity assumptions we consider on the map $\map$ in Appendix~\ref{app:rkhs}. We will now analyze the bias and the variance terms from the upper bound on $\En[\normp{\mapad(\rews - \rewhat}^2]$ from Theorem~\ref{thm:risk-gen}.

\paragraph{Bound on bias.} The squared bias term is given by
\begin{align}\label{eq:bias-genM}
    \regp^{-2}\cdot\text{Bias}^2 &=\rew^\top \Aez^{-\top} (\mapad)^\top \Eigp \mapad \Aez^{-1} \rew \nonumber \\
    &= \rew^\top \Eigr^{\frac{1}{2}} \Eigr^{-\frac{1}{2}} \cdot \Eigr^{\frac{1}{2}}(\maptil\covtil\maptil^\top +\regp \Id)^{-1} \Eigr^{-\frac{1}{2}} \cdot \Eigr^{\frac{1}{2}} \maptil \Eigp^{-\frac{1}{2}} \cdot \Eigp \cdot \mapad \Aez^{-1}r\nonumber\\
    &=\rewtil^\top (\maptil\covtil\maptil^\top +\regp \Id)^{-1}\maptil \cdot\Eigp^{\frac{1}{2}} \Eigp^{-\frac{1}{2}} \maptil^\top \Eigr^{\frac{1}{2}}\cdot  \Eigr^{-\frac{1}{2}}(\maptil\covtil\maptil^\top +\regp \Id)^{-1} \Eigr^{\frac{1}{2}}r\nonumber\\
    &= \rewtil^\top (\maptil\covtil\maptil^\top +\regp \Id)^{-1}\maptil \cdot \maptil^\top (\maptil\covtil\maptil^\top +\regp \Id)^{-1} \rewtil\nonumber\\
    &= \rewtil^\top \Umtil (\Lamtil\Vmtil^\top \covtil\Vmtil\Lamtil + \regp \Id)^{-1} \Lamtil^2 (\Lamtil\Vmtil^\top \covtil\Vmtil\Lamtil + \regp \Id)^{-1} \Umtil^\top \rewtil\;,
\end{align}
where we have used the SVD decomposition for the matrix $\maptil$ in the last step. 

\paragraph{Bound on variance.} The variance term is given by
\begin{align}\label{eq:var-genM}
    \text{Var} &= \frac{\nvar^2}{\samp}\cdot \tr\left[\Eigp \mapad \Aez^{-1} \map \covn \map^{\top} \Aez^{-\top} (\mapad)^\top \right]\nonumber \\
    &= \frac{\nvar^2}{\samp}\cdot \tr\left[\maptil^\top (\maptil\covtil\maptil^\top +\regp \Id)^{-1} \maptil \covtil \maptil^\top (\maptil\covtil\maptil^\top +\regp \Id)^{-1} \maptil \right] \nonumber \\
    &= \frac{\nvar^2}{\samp}\cdot \tr\left[\Lamtil (\Lamtil\Vmtil^\top \covtil\Vmtil\Lamtil +\regp \Id)^{-1} \Lamtil\Vmtil^\top\covtil\Vmtil\Lamtil  (\Lamtil\Vmtil^\top\covtil\Vmtil\Lamtil +\regp \Id)^{-1} \Lamtil \right]\;.
\end{align}

Finally, by making a substitution for reward $\rewtil = \Umtil^\top \rewtil$ and policy $\poltil = \Vmtil^\top\poltil$ in equations~\eqref{eq:bias-genM} and~\eqref{eq:var-genM}, we recover back the bias variance expressions used in the analysis for Proposition~\ref{prop:unit-ball-risk}. What remains to be shown is that our particular choice of query policies correspond to basis vectors in this transformed space. For this, observe that the sampling policies 
\begin{align*}
    \pol_j = \sum_{i=1}^\infty \sqrt{\eigpi{i}}\cdot \inner{\phi_{\maptil, j}}{\eigfpi{i}}\eigfpi{i} \quad \text{for } j \leq \samp^\al\;,
\end{align*}
is such that the transformed policies 
\begin{align}
\poltil_j = \Vmtil^\top\Eigp^{\frac{1}{2}}\pol_j =   \Vmtil^\top\Eigp^{\frac{1}{2}}\cdot  \Eigp^{-\frac{1}{2}} \Vmtil e_j = e_j\;,
\end{align}
indeed correspond to the basis vector. This finishes the proof of the desired claim. \qed

\subsection{Proof of Corollary~\ref{cor:risk-power-gen}}
The proof of this corollary follows similar to that of Corollary~\ref{cor:risk-ball} in terms of bounding the bias and the variance. The final rate follows by an application of Jensen's inequality to conclude
\begin{align}
    \En[\normp{\mapad(\rews - \rewhat)}] \leq  (\En[\normp{\mapad(\rews - \rewhat)}^2])^{\frac{1}{2}}\;.
\end{align}
The final rate that we get in this case is thus upper bounded by the square root of the rate observed in Corollary~\ref{cor:risk-ball}. This concludes the proof. \qed
\section{Gaussian process bandit optimization}\label{app:gp-opt}
In this section, we discuss in detail the application of our framework to the problem of frequentist Gaussian process bandit optimization, also known as Kernelized multi-armed bandits (MAB) problem. Recall the reduction of the Kernel MAB problem to our setup required us to define three elements. 

\paragraph{Reward space $\Hr$.} Given the RKHS $\Hsp$ as well as the elements of the cover $\covelem_\eps$, we view the reward function as a map from $\covelem_\eps$ to $\real$, or equivalently as a vector in $\real^{\cover(\eps)}$. More precisely, letting $\ftil = [\f(x_1), \ldots, \f(x_{\cover})]$ denote the 
vector of evaluations of a function $f$, we define 
\begin{align}
\begin{gathered}
    \Hr \defn \text{span}\{ \ftil  \;| \; \f \in \Hsp \}\\
    \text{with } \innerr{\ftil_1}{\ftil_2} \defn \ftil_1^{\top}\kermat^{-1}\ftil_2
    \end{gathered}\;,
\end{align}
where $\inner{\cdot}{\cdot}$ represents the standard $\ell_2$ inner product. With this notation, let us define the true reward $\rews \defn \ftilstar = [\fstar(x_1), \ldots, \fstar(x_{\cover})]$. 

\paragraph{Policy Space $\Hp$.} For the policy space $\Hp$ in our setup, we let
\begin{align}
\begin{gathered}
    \Hp \defn \text{span}\{ \kx_x = [\K(x, x_1), \ldots, \K(x, x_{\cover})] \in \real^{\cover} \;| \;  x \in \covelem_\eps\}\\
    \text{with }\innerp{\kx_1}{\kx_2} \defn \inner{\kx_1}{\kermat^{-2}\kx_2}\;.
    \end{gathered}
\end{align}
The choice of the above norm ensures that 
\begin{align*}
    \innerp{\kx_i}{\kx_j} = \inner{\kermat^{-1}\kx_i}{\kermat^{-1}\kx_j} = \inner{e_i}{e_j} = \delta_{i,j} \quad \text{for all } \quad (x_i, x_j) \in \covelem_\eps \times \covelem_\eps\;.
\end{align*}
For the policy space $\Hp$, we have created an orthonormal embedding of the set of vectors $\{\kx_x\}_{x \in \covelem}$. Observe that this policy set that we construct satisfies the regularity Assumption~\ref{ass:reg-S} because each vector $\kx$ is an eigenvector of the space $\Hp$.

\paragraph{Map $\map$.} By our assumption that the kernel $\K$ is a Mercer's kernel, we have that $\Hp \subseteq \Hr$, that is, for all  $x \in \covelem$, the vector $\kx_x \in \Hr$. Furthermore, both $\Hr$ and $\Hp$ are sub-spaces of $\real^{\cover}$ and we can take the map $\map = \Id_{\cover}$. 

With these definitions, we now explicitly establish a correspondence between our doubly nonparameteric bandit problem and the Kernel MAB problem.

\subsection{Connecting the problems}
Given an RKHS $\Hsp$ with an associated Mercer's kernel $\K$, the objective of the zeroth-order bandit optimization problem is
\begin{align}\tag{P1}\label{eq:p1}
    \max_{x \in \X} \fstar(x) \quad \text{such that} \quad \norm{\fstar}_\Hsp \leq 1\;,
\end{align}
with access to oracle 
\begin{equation*}
    \oraclef: x \mapsto \fstar(x) + \eta \quad \text{where } \eta \sim \N(0, \nvar^2)\;.
\end{equation*}
Equivalently, the objective in our reward learning framework is
\begin{align}\tag{P2}\label{eq:p2}
    \max_{\pol \in \Hp} \innerr{\rews}{\pol} \quad \text{such that} \quad \normr{\rews} \leq 1 \text{ and } \normp{\pol} \leq 1\;,
\end{align}
with the corresponding spaces and inner products are defined in the previous section. The oracle required in our setup responds with 
\begin{equation*}
   \oracler: \pol \mapsto \innerr{\rews}{\pol} + \eta \quad \text{where } \eta \sim \N(0, \nvar^2)\;,
\end{equation*}
for any policy $\pol \in \Hp$ such that $\normp{\pol} \leq 1$. Our first lemma below states that obtaining such a n oracle is indeed feasible if we are able to restrict our queries $\pol$ to include only points $\kx_x$ for which the vector $\kx_x \in \covelem_\eps$. 

\begin{lemma}
Given access to oracle $\oraclef$ for a function $\fstar$, the corresponding oracle $\oracler$ can be implemented when the query set consists of $\{\kx_x \}_{x\in \covelem_\eps}$.
\end{lemma}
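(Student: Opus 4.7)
The plan is to show that for any cover point $x \in \covelem_\eps$, a single call to $\oraclef(x)$ produces exactly a sample from $\oracler(\kx_x)$, so the two oracles are interchangeable on the query family $\{\kx_x\}_{x \in \covelem_\eps}$. This is essentially a direct computation, so the proof reduces to unpacking definitions carefully.

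First I would verify that $\kx_x$ is a legitimate query in the reduced problem by checking $\normp{\kx_x} \leq 1$. By the definition of the $\Hp$ inner product, $\innerp{\kx_{x_i}}{\kx_{x_j}} = \kx_{x_i}^\top \kermat^{-2} \kx_{x_j}$, and since $\kx_{x_i}$ is the $i$-th column of $\kermat$, we have $\kermat^{-1}\kx_{x_i} = e_i$. Hence $\innerp{\kx_{x_i}}{\kx_{x_j}} = \delta_{ij}$, so in particular $\normp{\kx_x} = 1$ for every $x \in \covelem_\eps$, satisfying the constraint $\normp{\pol} \leq 1$ required by $\oracler$.

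Next I would compute the noiseless response $\innerr{\rews}{\kx_x}$ for $x = x_i \in \covelem_\eps$. Using the $\Hr$ inner product,
\begin{align*}
    \innerr{\rews}{\kx_{x_i}} = (\ftilstar)^\top \kermat^{-1} \kx_{x_i} = (\ftilstar)^\top e_i = \fstar(x_i).
\end{align*}
Therefore, if the learner calls $\oraclef(x_i)$ and receives $\fstar(x_i) + \eta$ with $\eta \sim \N(0,\nvar^2)$, this output is identically distributed to $\oracler(\kx_{x_i}) = \innerr{\rews}{\kx_{x_i}} + \eta$. Concretely, the implementation is: to simulate $\oracler(\kx_x)$ for $x \in \covelem_\eps$, query $\oraclef(x)$ and return the answer verbatim.

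There is no real obstacle here beyond bookkeeping; the only subtlety is that the identity $K^{-1}\kx_{x_i} = e_i$ relies on $x$ being one of the cover points (so that $\kx_x$ is literally a column of $\kermat$), which is exactly the hypothesis of the lemma. For arbitrary $x \in \X$ one would instead get $\innerr{\rews}{\kx_x} = (\ftilstar)^\top \kermat^{-1}\kx_x$, which is not in general equal to $\fstar(x)$; this is precisely why the reduction restricts queries to the $\eps$-net and why the covering error terms appear in Theorem~\ref{thm:kmab}.
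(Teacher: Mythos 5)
Your proposal is correct and follows essentially the same route as the paper: the key step in both is the identity $\innerr{\rews}{\kx_{x_i}} = (\ftilstar)^\top \kermat^{-1}\kx_{x_i} = \fstar(x_i)$ for cover points, so the two oracles coincide on the restricted query set. Your additional check that $\normp{\kx_x}=1$ and your remark about why the identity fails off the cover are accurate elaborations of details the paper handles elsewhere.
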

\begin{proof}
For any query point $\kx$, the oracle $\oracler$ needs to compute the value $\innerr{\rews}{\kx} = \fstar(x)$. Thus, these two oracles on the provided query set are exactly identical.
\end{proof}

\begin{lemma}
For any $\fstar \in \Hsp$ satisfying $\norm{\fstar}_\Hsp \leq 1$, we have that $\normr{\rews} \leq 1$.
\end{lemma}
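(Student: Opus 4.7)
The plan is to identify the $\Hr$-norm of $\rews$ with the $\Hsp$-norm of a specific element of $\Hsp$ that can only have smaller norm than $\fstar$ itself. Concretely, I would consider the finite-dimensional subspace $V \defn \mathrm{span}\{\K(x_i, \cdot) : x_i \in \covelem_\eps\} \subseteq \Hsp$, and work with the $\Hsp$-orthogonal projection $\hat{f}$ of $\fstar$ onto $V$. Since $\hat{f} \in V$, we may write $\hat{f} = \sum_{i=1}^{\cover} \alpha_i \K(x_i, \cdot)$ for some coefficients $\alpha \in \real^{\cover}$.

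The key step is to determine $\alpha$ and $\norm{\hat{f}}_\Hsp$ in closed form using the reproducing property. Because $\fstar - \hat{f}$ is orthogonal in $\Hsp$ to every $\K(x_j, \cdot) \in V$, the reproducing property yields $\hat{f}(x_j) = \innerr{\hat{f}}{\K(x_j,\cdot)}_\Hsp = \innerr{\fstar}{\K(x_j,\cdot)}_\Hsp = \fstar(x_j)$ for every cover point $x_j$. Writing this out gives the linear system $K \alpha = \ftilstar$, so $\alpha = K^{-1} \ftilstar$, and then
\begin{align*}
\norm{\hat{f}}_\Hsp^2 = \sum_{i,j} \alpha_i \alpha_j \K(x_i, x_j) = \alpha^\top K \alpha = (\ftilstar)^\top K^{-1} \ftilstar = \normr{\rews}^2,
\end{align*}
where the final equality uses the definition of the inner product on $\Hr$.

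To conclude, I would apply the Pythagorean identity for the orthogonal projection in $\Hsp$: $\norm{\fstar}_\Hsp^2 = \norm{\hat{f}}_\Hsp^2 + \norm{\fstar - \hat{f}}_\Hsp^2 \ge \norm{\hat{f}}_\Hsp^2$. Combined with the calculation above and the hypothesis $\norm{\fstar}_\Hsp \le 1$, this yields $\normr{\rews}^2 = \norm{\hat{f}}_\Hsp^2 \le \norm{\fstar}_\Hsp^2 \le 1$, as required.

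The only real subtlety is that the argument implicitly assumes $K$ is invertible (so that the interpolation system has a unique solution and the definition of $\innerr{\cdot}{\cdot}$ via $K^{-1}$ is well-posed); this is guaranteed because $\K$ is a Mercer (positive definite) kernel evaluated at distinct points of the cover, so I would just note this briefly. Everything else reduces to the standard representer-theorem calculation plus Pythagoras.
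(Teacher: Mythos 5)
Your proof is correct, but it takes a different (and more self-contained) route than the paper. The paper's proof is a one-liner: it invokes the variational characterization $\norm{\f}_\Hsp^2 = \sup_{S \subseteq \X,\, |S| < \infty} \f|_S^\top \kermat_S^{-1} \f|_S$ and observes that $\normr{\rews}^2$ is simply the term of this supremum corresponding to $S = \covelem_\eps$, hence bounded by $\norm{\fstar}_\Hsp^2 \leq 1$. What you have done is essentially prove, from first principles, the inequality $(\ftilstar)^\top \kermat^{-1} \ftilstar \leq \norm{\fstar}_\Hsp^2$ that underlies that characterization: your orthogonal projection $\hat{f}$ onto $\mathrm{span}\{\K(x_i,\cdot)\}$ is exactly the minimum-norm interpolant of $\fstar$ on the cover, the linear system $\kermat\alpha = \ftilstar$ identifies its norm with $\normr{\rews}$, and Pythagoras gives the comparison. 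The paper's approach buys brevity by citing a standard fact; yours buys transparency and makes the lemma verifiable without external references. Your noted caveat about invertibility of $\kermat$ is fair, but it is a hypothesis the paper itself already makes implicitly by defining $\innerr{\cdot}{\cdot}$ through $\kermat^{-1}$; if $\kermat$ were singular the same argument would go through with the pseudoinverse, since the minimum-norm interpolant still lies in the span of the $\K(x_i,\cdot)$. No gap.
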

\begin{proof}
Observe that an alternate way to define the RKHS norm is given by
\begin{align*}
    \norm{\f}_\Hsp \defn \sup_{S \subseteq \X; |S|\leq \infty} \f|_S \kermat_S^{-1} \f|_S\;.
\end{align*}
The fact that $\normr{\rews}$ is computed on $\covelem_\eps \subset \X$ establishes the desired claim. 
\end{proof}

Finally, we turn to establishing a relation between the solutions obtained from solving the relaxed problem~\eqref{eq:p2} as compared to solving the original problem~\eqref{eq:p1}. We denote the corresponding maximizers for both problems
\begin{align}\label{eq:max-def}
    \xstar \in \arg\max_{x\in \X} \fstar(x) \quad \text{and} \quad \xpol \in \arg\max_{x \in \covelem_\eps} \innerr{\rews}{\kx_x}\;,
\end{align}
The following lemma now relates both these maximizers together. 
\begin{lemma}\label{lem:p1-p2-connect}
For an RKHS $\Hsp$ with kernel $\K$ satisfying Assumption~\ref{ass:kernel-lip} with constant $\lipK >0$ and any function $\fstar \in \Hsp$, let $\xstar \in \X$ and $\xpol \in \covelem_\eps$ be the maximizers as defined in equation~\eqref{eq:max-def}, we have
\begin{align}
    \fstar(\xpol) \geq \fstar(\xstar) -  \sqrt{2\const\lipK\eps}\;.
\end{align}
\begin{proof}
Denote by $\xstarproj \defn \arg\min_{x \in \covelem_\eps}\norm{\xstar -x}_2$ the projection of the point $\xstar$ onto the set $\covelem_\eps$. Then, we have
\begin{align}
      \fstar(\xstar) - \fstar(\xpol) &= \fstar(\xstar) -\fstar(\xstarproj) + \fstar(\xstarproj)- \fstar(\xpol)\nonumber\\
      &\leq \sqrt{2\const\lipK\eps}\nonumber\;.
\end{align}
This completes the proof of the lemma.
\end{proof}
\end{lemma}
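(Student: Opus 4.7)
}
The plan is to split the gap $\fstar(\xstar)-\fstar(\xpol)$ at the point $\xstarproj$, the nearest neighbor of $\xstar$ inside the $\eps$-net $\covelem_\eps$. By definition of the $\eps$-net we have $\norm{\xstar-\xstarproj}_2 \leq \eps$. Writing
\begin{align*}
\fstar(\xstar)-\fstar(\xpol) \;=\; \bigl[\fstar(\xstar)-\fstar(\xstarproj)\bigr]\;+\;\bigl[\fstar(\xstarproj)-\fstar(\xpol)\bigr],
\end{align*}
I will bound the first bracket by the Lipschitz modulus of functions in $\Hsp$ (via Assumption~\ref{ass:kernel-lip}) and observe that the second bracket is non-positive.

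For the second bracket, recall from the reduction in Section~\ref{sec:kmab} that for every $x \in \covelem_\eps$,
\begin{align*}
\innerr{\rews}{\kx_x} \;=\; (\ftilstar)^\top \kermat^{-1} \kx_x \;=\; \fstar(x).
\end{align*}
Hence the maximizer $\xpol \in \arg\max_{x\in\covelem_\eps}\innerr{\rews}{\kx_x}$ is also the maximizer of $\fstar$ over $\covelem_\eps$. Since $\xstarproj \in \covelem_\eps$, it follows that $\fstar(\xstarproj) \leq \fstar(\xpol)$, so the second bracket contributes nothing positive.

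The main work is in the first bracket. Using the reproducing property and the Cauchy–Schwarz inequality in $\Hsp$,
\begin{align*}
|\fstar(\xstar)-\fstar(\xstarproj)|^2
&= |\langle \fstar,\, \K(\xstar,\cdot)-\K(\xstarproj,\cdot)\rangle_{\Hsp}|^2 \\
&\leq \norm{\fstar}_{\Hsp}^2\bigl(\K(\xstar,\xstar) - 2\K(\xstar,\xstarproj) + \K(\xstarproj,\xstarproj)\bigr).
\end{align*}
Assumption~\ref{ass:kernel-lip} gives $\K(x,x)=1$ and $|\K(x,x)-\K(x,y)|\leq \lipK\norm{x-y}_2$, so the right-hand side is at most $2\lipK\eps$ (absorbing any absolute constants into $\const$). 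Combining with $\norm{\fstar}_{\Hsp}\leq 1$ yields $|\fstar(\xstar)-\fstar(\xstarproj)|\leq \sqrt{2\const\lipK\eps}$, which together with the sign argument above gives the claim.

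The one subtle step is justifying that $\norm{\K(x,\cdot)-\K(y,\cdot)}_{\Hsp}^2 = \K(x,x)-2\K(x,y)+\K(y,y)$ can be bounded by $2\lipK\norm{x-y}_2$ using Assumption~\ref{ass:kernel-lip}; in the form stated, the assumption directly controls $\K(x,x)-\K(x,y)$ (and by symmetry $\K(y,y)-\K(x,y)$), so the bound is immediate. Everything else is standard bookkeeping, so I do not anticipate any real obstacle.
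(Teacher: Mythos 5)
Your proposal is correct and follows exactly the paper's argument: the same decomposition at the nearest net point $\xstarproj$, with the second bracket non-positive because $\xpol$ maximizes $\fstar$ over $\covelem_\eps$, and the first bracket bounded via the reproducing property and the kernel Lipschitz assumption. The paper's own proof simply asserts the final inequality without detail, so your write-up supplies precisely the omitted justification.
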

The above lemma shows that solving Problem~\ref{eq:p2} is equivalent to solving Problem~\ref{eq:p1} up to an additive factor of $\sqrt{2\const\lipK\eps}$ when we are working with an $\eps$-cover over the domain space. 

\subsection{Analysis for bandit optimization}
Recall from the previous section that the quantity which determines the rate of decay is the ratio of eigenvalues
\begin{equation*}
     \rat_j = \frac{\eigphi{j}}{\eigrhi{j}} = \frac{\eigrhi{j}^2}{\eigrhi{j}} = \eigrhi{j}\;,
\end{equation*}
where $\eigrhi{j}$ is the $j^{th}$ eigenvalue of the kernel matrix $\kermat$. Let us denote by $\distx$ denote the uniform distribution over the input space $\X$ and let us suppose that the cover $\cover$ is formed using random samples from this distribution. Let us denote by $\{\mu_{j}\}$ the eigenvalues and by $\phi_{j}$ the corresponding eigen vectors of the Mercer kernel $\K$. For every point $x\in \X$, let us denote by
\begin{align*}
    \feat(x) \defn \left(\sqrt{\mu_{j}}\phi_j(x)\right)_{j=1}^{\infty}\;,
\end{align*}
the corresponding featurization of the point $x$. Then, for $\Scov \defn \En_{x \sim \distx}[\feat(x)\feat(x)^\top]$, we have
\begin{align}
    [\Scov]_{j,k} = [\En_{x \sim \distx}[\feat(x)\feat(x)^\top]]_{j,k} = \En_{x \sim \distx}[\sqrt{\mu_{j}}\sqrt{\mu_{k}}\phi_j(x)\phi_k(x)] = \mu_{j}\delta_{j,k}\;.
\end{align}
Observe that the kernel matrix $\kermat$ and the (scaled) sample covariance matrix $\cover \cdot \Scovhat = \sum_{x \in \covelem}\feat(x)\feat(x)^\top$ are similar matrices and thus have the same eigenvalues. The following lemma, adapted from~\citet[Theorem 9]{koltchinskii2017} relates the eigenvalues of the sample covariance matrix $\Scovhat$ to those of the underlying kernel $\K$. 

\begin{lemma}\label{lem:eig-conc}
For any $\regSN > 0$ and size of the cover satisfying $\cover(\eps) > \const\cdot  \frac{\tr(\Scov(\Scov + \regSN \Id)^{-1})}{\epss^2} + \frac{1}{\epss^2}\log\left(\frac{1}{\del} \right)$, we have,
\begin{align}
    \hat{\mu}_j \leq (1+\epss)\mu_j + \regSN\epss\quad \text{for all } j\;, 
\end{align}
with probability at least $1-\del$. 
\end{lemma}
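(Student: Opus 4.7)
} The plan is to work with the whitened/regularized operators and apply a matrix Bernstein concentration bound, then translate the resulting operator norm estimate back to the individual eigenvalue bounds via Weyl's inequality. Specifically, define the regularized whitening $W \defn (\Scov + \regSN \Id)^{-1/2}$, and consider the whitened population and sample operators
\begin{align*}
    T \defn W \Scov W, \qquad T_n \defn W \Scovhat W = \frac{1}{\cover(\eps)} \sum_{x \in \covelem_\eps} W\feat(x)\feat(x)^\top W.
\end{align*}
Since $T \preceq \Id$ and $W\feat(x)\feat(x)^\top W$ is a rank-one positive operator, a standard application of the non-commutative Bernstein inequality (or the dimension-free operator Bernstein bound of Minsker / Tropp) yields, with probability at least $1-\del$,
\begin{align*}
    \| T_n - T \|_{\op} \lesssim \sqrt{\frac{\|T\|_{\op} \cdot \tr(T) \cdot \log(1/\del)}{\cover(\eps)}} + \frac{\tr(T) + \log(1/\del)}{\cover(\eps)}.
\end{align*}
Noting that $\tr(T) = \tr(\Scov(\Scov + \regSN\Id)^{-1})$ is precisely the effective dimension appearing in the hypothesis, the sample size condition on $\cover(\eps)$ ensures $\| T_n - T\|_{\op} \le \epss$.

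With the operator-norm bound in hand, I would next convert it into the per-eigenvalue bound. The key identity is that $\Scovhat = W^{-1} T_n W^{-1}$ and $\Scov = W^{-1} T W^{-1}$, so by Ostrowski-type inertia or by writing
\begin{align*}
    \Scovhat = T_n^{1/2} W^{-2} T_n^{1/2} \preceq (T + \epss \Id)^{1/2} (\Scov + \regSN \Id) (T + \epss \Id)^{1/2}
\end{align*}
in the sense of operator inequalities on the range, together with the fact that $T$ and $\Scov$ are simultaneously diagonalizable (both share the eigenbasis of $\Scov$), one reads off directly that every eigenvalue satisfies
\begin{align*}
    \hat{\mu}_j \le (\mu_j/(\mu_j+\regSN) + \epss)(\mu_j + \regSN) = \mu_j + \epss (\mu_j + \regSN) \le (1+\epss)\mu_j + \regSN \epss.
\end{align*}
Weyl's inequality for compact self-adjoint operators justifies passing from the perturbation bound on $T_n - T$ to the ordered-eigenvalue bound uniformly in $j$.

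\textbf{Main obstacle.} The delicate step is obtaining the correct effective-dimension dependence $\tr(\Scov(\Scov+\regSN \Id)^{-1})$ rather than a crude ambient dimension in the Bernstein bound; this requires the dimension-free variant of non-commutative Bernstein (applied to the whitened summands, whose operator norm is at most $\|W\feat(x)\|^2$ and whose variance proxy is controlled by $\|T\|_{\op} \tr(T)$) and careful bookkeeping of the $\regSN$-scale. The second technical point is the passage from the operator-norm perturbation of $T_n - T$ to a \emph{one-sided} multiplicative-plus-additive bound on $\hat\mu_j$; this uses the fact that $T$ and $\Scov$ commute, so the ordered eigenvalues of $\Scovhat$ can be bounded entry-wise by $(\mu_j + \regSN)$ times the ordered eigenvalues of $T_n$, which themselves are at most $\mu_j/(\mu_j + \regSN) + \epss$ by Weyl. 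Everything else — choosing the Bernstein constants and verifying the sample-size threshold — is routine.
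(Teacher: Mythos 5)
Your proposal is correct in outline but necessarily takes a different route from the paper, because the paper does not actually prove this lemma: it is stated as an adaptation of Theorem 9 of \citet{koltchinskii2017} and used as a black box. Your whitening-plus-matrix-Bernstein argument is therefore a self-contained alternative, and arguably better matched to the setting, since the summands $\feat(x)\feat(x)^\top$ here are bounded rank-one operators (via $\K(x,x)=1$) rather than Gaussian, which is the regime the cited result is designed for. Your endgame is also the right one, and can be stated more cleanly than you do: once $\|T_n-T\|_{\op}\le \epss$, pure congruence gives $\Scovhat = W^{-1}T_nW^{-1} \preceq W^{-1}(T+\epss \Id)W^{-1} = (1+\epss)\Scov + \epss\regSN \Id$, and Weyl's monotonicity then yields $\hat\mu_j \le (1+\epss)\mu_j + \regSN\epss$ simultaneously for all $j$ on a single event.

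Two steps in your sketch need repair, though neither is fatal. First, the displayed inequality $T_n^{1/2}W^{-2}T_n^{1/2} \preceq (T+\epss\Id)^{1/2}W^{-2}(T+\epss\Id)^{1/2}$ is not a valid Loewner inequality: the map $A\mapsto A^{1/2}CA^{1/2}$ is not operator monotone in $A$. What is true is domination of the ordered eigenvalues, since $A^{1/2}CA^{1/2}$ and $C^{1/2}AC^{1/2}$ are isospectral and the latter is monotone in $A$; better, the congruence route above avoids the issue entirely. Similarly, the claim that the ordered eigenvalues of $\Scovhat$ are bounded entrywise by $(\mu_j+\regSN)$ times the ordered eigenvalues of $T_n$ is not a valid eigenvalue inequality for products of non-commuting operators. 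Second, in the Bernstein step the variance proxy $\|T\|_{\op}\tr(T)$ and the stated sample-size threshold require an almost-sure bound $\sup_x \feat(x)^\top(\Scov+\regSN\Id)^{-1}\feat(x) \lesssim \tr(\Scov(\Scov+\regSN\Id)^{-1})$; without such a uniform effective-dimension hypothesis the crude bound is only $1/\regSN$, and an extra $1/\regSN$ (or log-dimension) factor leaks into the requirement on $\cover(\eps)$. The paper is silent on this point as well, so your sketch is no less careful than the citation it would replace, but a complete proof must make this hypothesis explicit.
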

The following corollary of Lemma~\ref{lem:eig-conc} provides us with a way to control the deviation of the eigenvalues $\hat{\mu}_j$ from the corresponding $\mu_j$ in a multiplicative manner.

\begin{corollary}\label{cor:eig-conc}
For any value of decay parameter $\be > 1$ and $\covpow < \be$, we have, for all $j$, the eigenvalues
\begin{align}
    \hat{\mu}_j \leq \frac{3}{2}\mu_j + \frac{\cover^{-\covpow}}{2}\;,
\end{align}
with high probability. 
\end{corollary}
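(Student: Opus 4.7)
\textbf{Proof proposal for Corollary~\ref{cor:eig-conc}.} The plan is to invoke Lemma~\ref{lem:eig-conc} with a carefully tuned pair $(\epss, \regSN)$ that matches the target bound, and then to verify that the sample-size hypothesis of the lemma holds automatically under the power-law decay $\mu_j \asymp j^{-\be}$ with $\covpow < \be$. The target inequality $\hat{\mu}_j \leq \frac{3}{2}\mu_j + \frac{\cover^{-\covpow}}{2}$ suggests comparing with $\hat{\mu}_j \leq (1+\epss)\mu_j + \regSN \epss$ and setting $\epss = 1/2$ together with $\regSN = \cover^{-\covpow}$, which makes the two leading terms agree exactly.

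With these choices in hand, the remaining task is to check that the hypothesis $\cover(\eps) > \const\cdot \frac{\tr(\Scov(\Scov+\regSN \Id)^{-1})}{\epss^2} + \frac{1}{\epss^2}\log(1/\del)$ is satisfied. Since $\epss$ is a constant, the only nontrivial term is the effective dimension $\tr(\Scov(\Scov+\regSN \Id)^{-1}) = \sum_{j \geq 1} \frac{\mu_j}{\mu_j + \regSN}$. I would split this sum at the index $j^\star \asymp \regSN^{-1/\be}$ where the decaying eigenvalue $\mu_j \asymp j^{-\be}$ first drops below $\regSN$. For $j \leq j^\star$ the summand is at most $1$, contributing $O(\regSN^{-1/\be})$, while for $j > j^\star$ a direct integral comparison gives $\sum_{j > j^\star} \mu_j / \regSN \lesssim \regSN^{-1} \cdot (j^\star)^{1-\be} \asymp \regSN^{-1/\be}$. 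Plugging in $\regSN = \cover^{-\covpow}$ yields $\tr(\Scov(\Scov+\regSN \Id)^{-1}) \lesssim \cover^{\covpow/\be}$.

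At this point the sample-size condition reduces to $\cover \gtrsim \cover^{\covpow/\be} + \log(1/\del)$. Because $\covpow < \be$, the exponent $\covpow/\be$ is strictly less than $1$, so $\cover^{\covpow/\be}$ grows sublinearly in $\cover$ and the condition is automatic once $\cover$ is at least a problem-dependent constant times $\log(1/\del)$; this is precisely the ``with high probability'' regime of the cover parameter assumed in the subsequent analysis. Combining this verification with the conclusion of Lemma~\ref{lem:eig-conc} gives the stated bound uniformly in $j$.

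The main obstacle will be the effective-dimension estimate, since the bound $\tr(\Scov(\Scov+\regSN \Id)^{-1}) \lesssim \regSN^{-1/\be}$ is the place where the power-law hypothesis is actually used; if one instead had only the weaker bound $\be > 1$ without the matching choice $\covpow < \be$, the polynomial growth in $\cover$ could overwhelm the left-hand side of the sample-size condition. Once the effective dimension has been controlled, the rest of the argument is a routine bookkeeping step that matches the two terms of the target inequality against those of Lemma~\ref{lem:eig-conc}.
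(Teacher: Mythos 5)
Your proposal is correct and follows essentially the same route as the paper's proof: both set $\epss = \tfrac{1}{2}$ and $\regSN \asymp \cover^{-\covpow}$ in Lemma~\ref{lem:eig-conc}, bound the effective dimension $\tr(\Scov(\Scov+\regSN \Id)^{-1})$ by splitting the sum at $j^\star \asymp \cover^{\covpow/\be}$ to obtain $\lesssim \cover^{\covpow/\be}$, and use $\covpow < \be$ to verify the sample-size condition. Your treatment is, if anything, slightly more explicit about the $\log(1/\del)$ term and the sublinear growth argument.
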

\begin{proof}
Let us understand the condition $\cover(\eps) \gg \frac{\tr(\Scov(\Scov + \regSN \Id)^{-1})}{\epss^2}$ and see what restrictions it puts on the value of the covering number. Lets suppose that the true eigen values $\mu_j \asymp j^{-\be}$ and we set the value of $\regSN \asymp \cover^{-\covpow}$. Therefore, the sum
\begin{align*}
    \sum_{j}\frac{j^{-\be}}{j^{-\be} + \regSN} &\lesssim \cover^{\frac{\covpow}{\be}} + \frac{1}{\cover^{-\covpow}}\sum_{j > \cover^{\frac{\covpow}{\be}}} j^{-\be}\\
    &\lesssim  \cover^{\frac{\covpow}{\be}} + \frac{\cover^{\frac{\covpow}{\be}}}{\be-1}\;.
\end{align*}
Thus, if we set $\epss = \frac{1}{2}$, then for any $\be > 1$ and $\covpow < \be$, the above condition on the covering number will be satisfied and we get desired bound on the deviation of the empirical eigenvalues from population eigenvalues.
\end{proof}
The above corollary is essential to our argument because often times we have a good understanding of the decay of the eigenvalues of the kernel $\K$ associated with the RKHS and this allows us to relate the set of empirical eigenvalues to these.

We now present a proof of Theorem~\ref{thm:kmab}, restated below, which upper bounds the excess risk for this setup. We will then use a batch to online conversion bound to convert this to a regret bound and specialize to the Mat\'ern kernel later.

\begin{theorem}[Restated Theorem~\ref{thm:kmab}] Suppose that the eigenvalues of a $\lipK$-Lipschitz kernel $\K$ with respect to a distribution $\mathbb{P}$ over $\X$ satisfy the power-law decay $\mu_j \asymp j^{-\be}$. Let $\hat{x}_{{\sf plug}}$ be the output of Algorithm~\ref{alg:pol-rew} using $\samp$ queries to the oracle $\oraclef$. Then, for any value of $  \gamma \in (1+\frac{1}{d}\frac{\log(1/\eps)}{\log(\lipK/\eps^2)} ,  \be) $ and $\eps \in (0,1)$,  the excess risk  
\begin{align*}
    \max_x\fstar(x) - \fstar(\hat{x}_{{\sf plug}}) &\lesssim \cover^{\frac{1}{\be+2}}(\eps)\cdot  \samp^{\frac{-\be}{2(\be+2)}}  + 
     \cover^{\frac{1-\covpow}{2}}(\eps) + \sqrt{\lipK\eps}\;,
\end{align*}
with high probability.
\end{theorem}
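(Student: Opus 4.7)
\textbf{Proof plan for Theorem~\ref{thm:kmab}.}

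The plan is to cast the kernel MAB problem as an instance of our doubly-nonparametric framework via the reduction of Section~\ref{sec:kmab}, then apply Theorem~\ref{thm:gen-pol-risk} together with the eigenvalue concentration Corollary~\ref{cor:eig-conc}, and finally bridge to the original continuous optimization via Lemma~\ref{lem:p1-p2-connect}. First I would verify that all structural assumptions hold: the policies $\{k_x\}_{x \in \covelem_\eps}$ are orthonormal in $\Hp$ by construction, so they are eigenfunctions of $\Hp$ with eigenvalue $1$ and Assumption~\ref{ass:reg-S} is trivially satisfied with $\cpol = 1$. Using the identifications $\Eigr = \kermat^{-1}$, $\Eigp = \kermat^{-2}$, and $\map = \Id$, a direct block manipulation shows that the relevant operator $\Eigp^{-1/2}\map^\top \Eigr\map\Eigp^{-1/2}$ equals $\kermat$, so the spectrum $\{\rat_j\}$ appearing in Theorem~\ref{thm:gen-pol-risk} is exactly the set of empirical eigenvalues $\{\eigrhi{j}\}$ of the kernel matrix over the cover.

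Next I would plug this into Theorem~\ref{thm:gen-pol-risk} and use Corollary~\ref{cor:eig-conc} to control $\eigrhi{j}$ via the population eigenvalues: for admissible $\covpow \in (1 + 2/d + \log(1/\delta), \be)$ we have $\eigrhi{j} \leq \tfrac{3}{2}j^{-\be} + \tfrac{1}{2}\cover^{-\covpow}$ with probability $1-\delta$. This splits the eigenvalue profile into two regimes: a true power-law region for $j \lesssim \cover^{\covpow/\be}$ and a floor at $\cover^{-\covpow}$ for larger indices. I would then choose $\numdir = \samp^{1/(\be+2)}$ (capped at $\cover(\eps)$) and $\regp = \samp^{-(\be+1)/(\be+2)}$ as in Corollary~\ref{cor:risk-power-gen}, and track the two terms of the $\max$ in Theorem~\ref{thm:gen-pol-risk}. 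The first term $\sup_{j \leq \numdir} \regp^2 \numdir^2 \rat_j /(\rat_j^2 + \regp^2\numdir^2) \leq \regp\numdir$, combined with the $(1 + \nvar^2/(\samp\regp^2))$ factor and the square-root from Jensen in Corollary~\ref{cor:risk-power-gen}, yields the rate $\cover^{1/(\be+2)}(\eps)\cdot \samp^{-\be/(2(\be+2))}$ once one accounts for the fact that $\numdir$ may saturate at $\cover(\eps)$. The second term $\sup_{j > \numdir} \rat_j$ is bounded by the eigenvalue floor $\cover^{-\covpow}$; after taking the square root and balancing with the admissible range of $\covpow$, this yields the contribution $\cover^{(1-\be)/2}(\eps)$.

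Finally I would invoke Lemma~\ref{lem:p1-p2-connect}: the plug-in policy $\polhatplug = k_{\hat{x}_{\sf plug}}$ is a maximizer over the discrete cover $\covelem_\eps$, and for $\lipK$-Lipschitz $\K$ the optimum over the cover underestimates the optimum over $\X$ by at most $\sqrt{2 c \lipK \eps}$. Adding this discretization penalty to the two statistical terms above completes the high-probability bound.

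The main obstacle will be carefully handling the interaction between the three error sources — the statistical error from $\samp$ noisy queries, the spectral truncation from the finite cover (entering via Corollary~\ref{cor:eig-conc}), and the $\eps$-net discretization error. In particular, because the empirical eigenvalues do not strictly satisfy a power law but rather a power law plus a floor, one must choose $\numdir$ so as not to exceed $\cover(\eps)$ and must set $\covpow$ large enough that the tail term $\cover^{(1-\be)/2}$ does not dominate, while simultaneously ensuring that the admissibility condition on $\covpow$ from Corollary~\ref{cor:eig-conc} is met. The Mat\'ern regret bound in Corollary~\ref{cor:mat-reg} then follows by substituting $\be = 1 + 2\nu/d$ and $\cover(\eps) \asymp \eps^{-d}$, optimizing $\eps$, and converting from offline excess risk to online regret by an explore-then-commit reduction.
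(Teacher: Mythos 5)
Your overall route is the same as the paper's: reduce to the doubly nonparametric framework with $\Eigr = \kermat^{-1}$, $\Eigp = \kermat^{-2}$, $\map = \Id$ (so Assumption~\ref{ass:reg-S} holds trivially with $\cpol = 1$), identify the relevant spectrum $\rat_j$ with the spectrum of the kernel matrix $\kermat$, invoke the eigenvalue concentration result, apply Theorem~\ref{thm:gen-pol-risk}, and add the $\sqrt{\lipK\eps}$ discretization penalty via Lemma~\ref{lem:p1-p2-connect}. However, there is a concrete gap in the execution: you do not track the multiplicative factor $\cover(\eps)$ in the spectrum. The eigenvalues of $\kermat$ are $\cover$ times the eigenvalues of the sample covariance $\Scovhat$, so Corollary~\ref{cor:eig-conc} gives $\rat_j = \cover\,\hat{\mu}_j \lesssim \cover\, j^{-\be} + \cover^{1-\covpow}$, i.e., a power law with prefactor $\cover$ and a floor at $\cover^{1-\covpow}$, not $\cover^{-\covpow}$. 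With your proposed parameters $\numdir = \samp^{\frac{1}{\be+2}}$, $\regp = \samp^{-\frac{\be+1}{\be+2}}$, the term $\sup_{j>\numdir}\rat_j$ is of order $\cover\,\samp^{-\frac{\be}{\be+2}}$, which after the square root gives $\cover^{\frac{1}{2}}\samp^{-\frac{\be}{2(\be+2)}}$ --- polynomially worse in $\cover$ than the claimed $\cover^{\frac{1}{\be+2}}\samp^{-\frac{\be}{2(\be+2)}}$, and this difference propagates into the Mat\'ern exponent. The paper instead rebalances bias and variance against the $\cover$-scaled power law, taking $\numdir = \samp^{\al}$ with $\al = \frac{1+\log_\samp\cover}{\be+2}$ (equivalently $\numdir = (\cover\samp)^{\frac{1}{\be+2}}$) and $\regp \gtrsim \cover\samp^{-\al\be-\al}$; the $\cover^{\frac{1}{\be+2}}$ prefactor comes from this rebalancing, not from $\numdir$ ``saturating'' at $\cover(\eps)$ as you suggest.

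A second, related slip: you bound the tail term by the floor $\cover^{-\covpow}$ and then assert its square root is $\cover^{\frac{1-\be}{2}}$; that arithmetic only works out if the floor is $\cover^{1-\covpow}$ (again the missing factor of $\cover$) with $\covpow$ pushed toward $\be$. Once you carry the $\cover$ prefactor through both the bias and variance case analyses ($j \le \numdir$ versus $j > \numdir$), the rest of your plan --- Jensen to pass from the squared risk to the risk, the admissibility window for $\covpow$, and the final combination with the $\eps$-net error --- matches the paper's argument and delivers the stated bound.
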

\begin{proof}
Our strategy, as before, will be to explore $\samp^\al$ directions and assume $\nvar^2 = 1$. Recall, that for symmetric matrices, Theorem~\ref{thm:gen-pol-risk}, the excess error of the plug-in estimator can be upper bounded as
\begin{align*}
    \En[\err(\polhatplug;\rews)]^2 \leq \regp^2 \sup_{j \geq 1} \left[\frac{1}{\frac{\sigm{j}^2 \sig_j^2}{\eigpi{j} \eigri{j}}  + \frac{\regp^2 \eigri{j}}{\eigpi{j}\sigm{j}^2}} \right] + \frac{1}{\samp} \sup_{j \geq 1} \left[\frac{\sigm{j}^4 \eigpi{j}^2}{\sigm{j}^4\sig_j^2 + \regp^2 \eigri{j}^2}\right]\;.
\end{align*}

\paragraph{Bounding Bias.} We will split the analysis into two cases. 

\underline{Case 1}: $j > \samp^\al$. For this case, we have that $\sig_j = 0$ and therefore
\begin{align}
    \regp^2 \sup_{j > \samp^\al} \frac{\eigphi{j}}{\regp^2\eigrhi{j}} = \sup_{j > \samp^\al} \cover \hat{\mu}_j \lesssim \sup_{j > \samp^\al} \cover ({\mu}_j + \cover^{-\covpow}) \leq \cover\samp^{-\al\be} + \cover^{1-\covpow}\;,
\end{align}
with the above holding with high probability from an application of Corollary~\ref{cor:eig-conc} for any $1 < \covpow < \be$. 

\underline{Case 2}: $j \leq \samp^\al$. For this case, we have $\sig_j = \frac{\eigpi{j}}{\samp^\al}$. The bias can then be upper bounded as
\begin{align}
    \regp^2 \sup_{j \leq \samp^\al} \left[\frac{1}{\frac{\sigm{j}^2 \eigpi{j}}{\samp^{2\al} \eigri{j}}  + \frac{\regp^2 \eigri{j}}{\eigpi{j}\sigm{j}^2}} \right] \leq \regp \samp^\al\;,
\end{align}
where the final inequality follows from using $a^2 + b^2 \geq 2ab$.  

\paragraph{Bounding variance.} As we did in the section above, let us split the analysis into two cases.

\underline{Case 1}: $j > \samp^\al$. For this case, the variance term simplifies to
\begin{align}
\frac{1}{n}\sup_{j > \samp^\al} \left[\frac{ \eigpi{j}^2}{\regp^2 \eigri{j}^2} \right] = \frac{1}{\regp^2 n} \sup_{j > \samp^\al} \left[\cover^2 \hat{\mu}_j^2\right] \leq \frac{\cover^2 }{\regp^2 n} \sup_{j > \samp^\al} \left[\hat{\mu}_j^2\right] \lesssim \frac{\cover^2 \samp^{-2\al\be} + \cover^{2(1-\covpow)}}{\regp^2\samp} \;.
\end{align}

\underline{Case 2}: $j \leq \samp^\al$. For the second case, we can upper bound the variance term 
\begin{align}
    \frac{1}{\samp}\sup_{j \leq \samp^\al} \left[\frac{\sigm{j}^4\eigpi{j}^2}{\frac{\sigm{j}^4\eigpi{j}^2}{\samp^{2\al}} + \regp^2 \eigri{j}^2} \right] \leq \frac{\samp^{2\al}}{n}\;,
\end{align}
where the last inequality follows from ignoring the second term in the denominator. 

\paragraph{Setting regularization parameter.} From the analysis in the above paragraphs, we have 
\begin{align}
    \text{Bias}^2 &\leq \max\{\cover\samp^{-\al\be} + \cover^{1-\covpow}, \regp\samp^\al \} \leq \max\{\cover\samp^{-\al\be}, \regp\samp^\al \} + \cover^{1-\covpow}\;,\\
    \text{Variance} &\leq \max\{\frac{\cover^2 \samp^{-2\al\be} + \cover^{2(1-\covpow)}}{\regp^2\samp}, \frac{\samp^{2\al}}{\samp} \} \leq \max\{\frac{\cover^2 \samp^{-2\al\be}}{\regp^2\samp}, \frac{\samp^{2\al}}{\samp} \} + \frac{\cover^{2(1-\covpow)}}{\regp^2\samp}\;.
\end{align}
For regularization parameter $\regp > \cover \samp^{-\al\be - \al}$ and $\covpow > \frac{\al\be}{\log_n\cover}$, we have
\begin{align*}
     \text{Bias}^2 &\leq \regp\samp^\al + \cover^{1-\covpow}\;,\\
    \text{Variance} &\leq  \frac{\samp^{2\al}}{\samp}\;.
\end{align*}

\paragraph{Excess risk bound.} To obtain the final excess risk bound, we set $\al = \frac{1+\log_\samp \cover}{\be+2}$
\begin{align}\label{eq:risk-plugin-zopt}
     \En[\err(\polhatplug;\rews)]^2 &\leq \regp\samp^\al  +  \frac{\samp^{2\al}}{\samp} + 
     \cover^{1-\covpow}\nonumber \\
     &\leq \cover \samp^{-\al\be} + \samp^{2\al - 1} +  \cover^{1-\covpow}\nonumber\\
     &\stackrel{\1}{\lesssim} \cover^{\frac{2}{\be+2}} \samp^{\frac{-\be}{\be+2}}  + 
     \cover^{1-\covpow}\;,
\end{align}
where inequality $\1$ follows from our particular choice of $\al$. Combining the above bound with Lemma~\ref{lem:p1-p2-connect} completes the proof. 
\end{proof}

The following corollary instantiates the above theorem for the case when the input space is the unit ball, that is, $\X = \mathbb{B}_d(1)$.

\begin{corollary}\label{cor:risk-finite-d}
Let the input space $\X = \mathbb{B}_d(1)$ and the kernel $\K$ satisfy Assumption~\ref{ass:kernel-lip}. Then, for any $\be > 1 + \frac{2}{d}$, we have
\begin{align}
    \max_x\fstar(x) - \En_{x \sim \polhatplug}\fstar(x) \lesssim \lipK^{\frac{d}{\be+2+2d}}\samp^{\frac{-\be}{2(\be+2+2d)}}\;.
\end{align}
\end{corollary}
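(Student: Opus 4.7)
\textbf{Plan for the proof of Corollary~\ref{cor:risk-finite-d}.} The plan is to simply instantiate Theorem~\ref{thm:kmab} for the unit ball $\mathbb{B}_d(1)$ and optimize over the remaining free parameter $\eps$. The key input is the standard covering number bound $\cover(\eps) \asymp \eps^{-d}$ for the Euclidean unit ball in $\real^d$. Substituting this into the upper bound from Theorem~\ref{thm:kmab} yields three terms that we must balance:
\begin{align*}
    \underbrace{\eps^{-d/(\be+2)} \cdot \samp^{-\be/(2(\be+2))}}_{\text{(T1): learning error}} \;+\; \underbrace{\eps^{d(\covpow-1)/2}}_{\text{(T2): eigen-concentration}} \;+\; \underbrace{\sqrt{\lipK \eps}}_{\text{(T3): discretization}}.
\end{align*}

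First, I would dispose of term (T2) by a careful choice of the free parameter $\covpow$. Since Theorem~\ref{thm:kmab} allows $\covpow$ anywhere in the interval whose upper endpoint is $\be$, and since (T2) is bounded above by (T3) whenever $d(\covpow-1)/2 \geq 1/2$, i.e.\ $\covpow \geq 1 + 1/d$, the hypothesis $\be > 1 + 2/d$ provides enough slack to pick a valid $\covpow$ strictly larger than $1 + 1/d$ while still satisfying the lower-bound constraint of Theorem~\ref{thm:kmab}. This reduces the bound to $(\text{T1}) + (\text{T3})$ up to constants.

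Next, I would balance (T1) against (T3) by setting them equal as functions of $\eps$. Solving
\begin{align*}
    \eps^{-d/(\be+2)} \cdot \samp^{-\be/(2(\be+2))} \;\asymp\; \lipK^{1/2}\,\eps^{1/2}
\end{align*}
gives the choice $\eps^\star \asymp \lipK^{-(\be+2)/(2d+\be+2)} \cdot \samp^{-\be/(2d+\be+2)}$. Substituting $\eps^\star$ back into (T3) yields
\begin{align*}
    \sqrt{\lipK \eps^\star} \;\asymp\; \lipK^{d/(2d+\be+2)} \cdot \samp^{-\be/(2(2d+\be+2))},
\end{align*}
which is exactly the rate stated in the corollary.

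The main obstacle, which is mostly bookkeeping rather than technical, is ensuring that the chosen $\eps^\star$ is admissible for all the hypotheses of Theorem~\ref{thm:kmab}: namely, that $\eps^\star \in (0,1)$ for large enough $\samp$ and, crucially, that the lower endpoint of the interval for $\covpow$ (which depends on $\eps$ through $\tfrac{\log(1/\eps)}{\log(\lipK/\eps^2)}$) stays strictly below $\be$ at $\eps = \eps^\star$. Since this ratio tends to $1/2$ as $\eps \to 0$, the lower endpoint approaches $1 + 1/(2d)$, and the assumption $\be > 1 + 2/d$ leaves room to simultaneously satisfy this and the constraint $\covpow > 1 + 1/d$ needed to dominate (T2). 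With those routine checks in place, the three-term bound collapses to the stated scaling.
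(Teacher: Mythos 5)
Your proposal is correct and follows essentially the same route as the paper: instantiate Theorem~\ref{thm:kmab} with $\cover(\eps)\asymp\eps^{-d}$, use the freedom in $\covpow$ (together with $\be>1+\tfrac{2}{d}$) to make the eigen-concentration term negligible, and balance the learning-error term against the discretization term over $\eps$. The paper merely reparametrizes first via $\eps\to\eps^2/\lipK$ so that the discretization term becomes $\eps$ itself; your $\eps^\star$ is exactly that reparametrized choice, so the two calculations coincide.
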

\begin{proof}
From the bound in Theorem~\ref{thm:kmab}, we have,
\begin{align}
    \max_x\fstar(x) - \En_{x \sim \polhatplug}\fstar(x) &\lesssim \cover^{\frac{1}{\be+2}}(\eps)\cdot  \samp^{\frac{-\be}{2(\be+2)}}  + 
     \cover^{\frac{1-\covpow}{2}}(\eps) + \sqrt{\lipK\eps}\nonumber \\
     &\stackrel{\1}{\leq} \cover^{\frac{1}{\be+2}}(\frac{\eps^2}{\lipK})\cdot  \samp^{\frac{-\be}{2(\be+2)}}  + 
     \cover^{\frac{1-\covpow}{2}}(\frac{\eps^2}{\lipK}) + \eps\nonumber\\
     &\stackrel{\2}{\leq} \lipK^{\frac{d}{\be + 2}}\cdot \left( \frac{1}{\eps}\right)^{\frac{2d}{\be+2}} \cdot \samp^{\frac{-\be}{2(\be+2)}} + \left( \frac{\lipK}{\eps^2}\right)^{\frac{d(1-\covpow)}{2}} + \eps\nonumber \\
     &\stackrel{\3}{\leq} \lipK^{\frac{d}{\be + 2}}\cdot \left( \frac{1}{\eps}\right)^{\frac{2d}{\be+2}} \cdot \samp^{\frac{-\be}{2(\be+2)}} + 2\eps\;,
\end{align}
where inequality $\1$ follows from substituting $\eps \to  \eps^2/\lipK$, $\2$ follows from the fact that \mbox{$\cover(\eps) \asymp \left(\frac{1}{\eps}\right)^d$}, and $\3$ follows from using the assumption that $\be > \covpow > 1+\frac{2}{d}\frac{\log(1/\eps)}{\log(\lipK/\eps^2)}$. 

Finally, setting $\eps \asymp \lipK^{\frac{d}{\be+2+2d}}\samp^{\frac{-\be}{2(\be+2+2d)}}$, we get 
\begin{align*}
    \max_x\fstar(x) - \En_{x \sim \polhatplug}\fstar(x) \lesssim \lipK^{\frac{d}{\be+2+2d}}\samp^{\frac{-\be}{2(\be+2+2d)}}\;.
\end{align*}
This establishes the desired claim. 
\end{proof}

\subsection{Regret bound for Mat\'ern Kernel}
In this section, we specialize the bound from Theorem~\ref{thm:kmab} for the special class of Mat\'ern kernels.  Recall that the Matern kernel is a distanced based kernel with $\K(x, y) = f(\|x-y\|)$. Denote by $r = \|x-y \|$, the exact form for the kernel is given by
\begin{align}
    \Kmat(r) = \frac{2^{1-\nu}}{\Gamma(\nu)} \left( \frac{\sqrt{2\nu}r}{l}\right)^\nu K_\nu\left(\frac{\sqrt{2\nu}r}{l} \right)\;,
\end{align}
with parameters $\nu$ and $l$ and where $K_{\nu}$ is the modified Bessel function of the second kind. Going forward, lets fix the scale parameter $l=1$ without loss of generality. 

The following lemma then bounds the Lipschitz constant for this class of kernels when the distance function is the $\ell_2$ norm. 

\begin{lemma}[Lipschitz Mat\'ern Kernel]
Consider the Mat\'ern kernel with parameter $\nu > \frac{3}{2}$. The Lipschitz constant of this kernel is bounded by 
\begin{equation}
    \lipK \leq \sup_{r \in (0,1)} [\frac{e 2^{2-\nu}\nu K_{\nu-1}(1)}{\Gamma(\nu)} \cdot r e^{-\sqrt{2\nu}r}].
\end{equation}
\end{lemma}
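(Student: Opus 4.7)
The plan is to reduce the two-variable Lipschitz estimate to a one-dimensional derivative bound and then carry out standard Bessel calculus. Since $\Kmat$ depends only on $r = \|x-y\|_2$, write $f(r) = \tfrac{2^{1-\nu}}{\Gamma(\nu)}(\sqrt{2\nu}\, r)^\nu K_\nu(\sqrt{2\nu}\, r)$, so that $f(0) = \Kmat(x,x) = 1$. The mean value theorem gives
\begin{equation*}
|\Kmat(x,y) - \Kmat(x,x)| = |f(r) - f(0)| \leq r \cdot \sup_{s \in (0,r)} |f'(s)|,
\end{equation*}
so it suffices to bound $|f'(s)|$ pointwise for $s \in (0,1)$.

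To compute $f'$, I would use the classical identity $\tfrac{d}{du}\bigl[u^\nu K_\nu(u)\bigr] = -u^\nu K_{\nu-1}(u)$ (itself derived from the recurrence $K_{\nu+1}(u) - K_{\nu-1}(u) = \tfrac{2\nu}{u}K_\nu(u)$) combined with the chain rule for $u(r) = \sqrt{2\nu}\, r$. This yields
\begin{equation*}
f'(r) = -\frac{2^{1-\nu}(2\nu)^{(\nu+1)/2}}{\Gamma(\nu)}\, r^\nu\, K_{\nu-1}\bigl(\sqrt{2\nu}\, r\bigr).
\end{equation*}
The stated expression is then obtained by (i) bounding $K_{\nu-1}$ by an exponential and (ii) reducing the polynomial factor $r^\nu$ to $r$.

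For (i), I would exploit the integral representation $K_\alpha(z) = \int_0^\infty e^{-z\cosh t}\cosh(\alpha t)\, dt$ together with the elementary inequality $e^{-z\cosh t} \leq e^{-z}\, e^{-(\cosh t - 1)}$, valid whenever $z \geq 1$. This factorizes the integral and yields the clean estimate $K_\alpha(z) \leq e\, K_\alpha(1)\, e^{-z}$ for all $z \geq 1$. Applying this with $\alpha = \nu-1$ and $z = \sqrt{2\nu}\, r$, and then using $r^\nu \leq r$ (legitimate because $\nu > \tfrac{3}{2} > 1$ and $r \leq 1$), produces precisely a bound of the form $\tfrac{e\, 2^{2-\nu}\nu\, K_{\nu-1}(1)}{\Gamma(\nu)}\, r\, e^{-\sqrt{2\nu}\, r}$, modulo a routine simplification/loosening of the numerical prefactor $2^{1-\nu}(2\nu)^{(\nu+1)/2}$ into $2^{2-\nu}\nu$.

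The main obstacle is the small-$r$ regime where $\sqrt{2\nu}\, r < 1$ and the exponential bound on $K_{\nu-1}$ does not directly apply. Here I would use the small-argument asymptotic $K_{\nu-1}(z) \sim \tfrac{\Gamma(\nu-1)}{2}(z/2)^{-(\nu-1)}$ (valid since $\nu > \tfrac{3}{2} > 1$, so $\Gamma(\nu-1)$ is finite): the polynomial singularity in the Bessel factor is exactly cancelled by $r^\nu$, and the product $r^\nu K_{\nu-1}(\sqrt{2\nu}\, r)$ remains of order $r$ as $r \to 0^+$. Since $e^{-\sqrt{2\nu}\, r} \geq e^{-1}$ on this range, the small-$r$ contribution is absorbed into the same bound after inflating the constant. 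A case split at $r = 1/\sqrt{2\nu}$, together with monotonicity of $K_{\nu-1}$, stitches the two regimes into the single pointwise estimate on $(0,1)$; taking $\sup_r$ delivers the claimed Lipschitz constant.
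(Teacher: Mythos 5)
Your overall strategy is the same as the paper's: reduce the two-variable Lipschitz estimate to a one-dimensional derivative bound, differentiate via the Bessel recurrence $\frac{d}{du}\bigl[u^{\nu}K_{\nu}(u)\bigr]=-u^{\nu}K_{\nu-1}(u)$, and then control $K_{\nu-1}(\sqrt{2\nu}\,r)$ by $K_{\nu-1}(1)$ times a decaying exponential. Your derivative $|f'(r)|=\frac{2^{1-\nu}(2\nu)^{(\nu+1)/2}}{\Gamma(\nu)}\,r^{\nu}K_{\nu-1}(\sqrt{2\nu}\,r)$ agrees with the paper's.

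The gap is in how you bound the Bessel factor. The paper uses the ratio inequality $K_{\nu-1}(x)<e^{1-x}x^{-(\nu-1)}K_{\nu-1}(1)$, whose polynomial factor $x^{-(\nu-1)}=(\sqrt{2\nu}\,r)^{-(\nu-1)}$ simultaneously cancels $r^{\nu-1}$ out of $r^{\nu}$ and reduces $(\sqrt{2\nu})^{\nu+1}$ to $(\sqrt{2\nu})^{2}=2\nu$; this cancellation is exactly what produces the stated constant $\frac{e\,2^{2-\nu}\nu K_{\nu-1}(1)}{\Gamma(\nu)}$. Your two-step substitute --- $K_{\nu-1}(z)\le e\,K_{\nu-1}(1)e^{-z}$ for $z\ge 1$, followed by $r^{\nu}\le r$ --- discards that polynomial factor, leaving the prefactor $2^{1-\nu}(2\nu)^{(\nu+1)/2}$, which for $\nu>\tfrac32$ is \emph{strictly larger} than $2^{1-\nu}\cdot 2\nu=2^{2-\nu}\nu$. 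The ``routine simplification/loosening'' you invoke at the end therefore goes the wrong way: your argument establishes a weaker inequality than the one stated, not the lemma itself. The same deficiency appears in your small-$r$ regime, which you resolve only ``after inflating the constant.'' Since the downstream Mat\'ern regret bound only needs the finiteness and scaling of $\lipK$, the damage is limited, but to recover the lemma as written you should retain the $(\sqrt{2\nu}\,r)^{-(\nu-1)}$ factor by using the ratio inequality above; doing so also covers $\sqrt{2\nu}\,r<1$ directly and removes the need for your case split at $r=1/\sqrt{2\nu}$.
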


\begin{proof}The approach will be to show that the kernel function $\Kmat$ is a Lipschitz function of the distance $r$ and then cover the $\ell_2$ ball in the $d$ dimensional space appropriately. We now look at the derivative of the function $\Kmat(r)$ with respect to $r$.
\begin{align}
    \partial\Kmat(r) &= \frac{2^{1-\nu}(\sqrt{2\nu})^\nu}{\Gamma(v)}\left( \nu r^{\nu-1}K_\nu(\sqrt{2\nu}r\partial r + r^{\nu} \partial K_\nu(\sqrt{2\nu}r\right)\nonumber\\
    &\stackrel{\1}{=} \frac{2^{1-\nu}(\sqrt{2\nu})^\nu}{\Gamma(v)}\left( \nu r^{\nu-1}K_\nu(\sqrt{2\nu}r)- r^{\nu} \left(\sqrt{2\nu}K_{\nu-1}(\sqrt{2\nu}r) + \frac{\nu\sqrt{2\nu}}{\sqrt{2\nu}r}K_\nu(\sqrt{2\nu}r)\right)\right)\partial r \nonumber\\
    &= -\frac{2^{1-\nu}(\sqrt{2\nu})^\nu}{\Gamma(v)} \left(r^\nu \sqrt{2\nu}K_{\nu-1}(\sqrt{2\nu}r) \right)\partial r\;,
\end{align}
where $\1$ follows from the identity $\partial K_{\nu}(z) = (-K_{\nu-1}(z) - \frac{\nu}{z}K_\nu(z))\partial z$. 

For any $\nu > \frac{1}{2}$, we have the inequality 
\begin{align}
    \frac{K_{\nu}(x)}{K_\nu(y)} < \exp^{y-x}\left(\frac{y}{x}\right)^\nu \quad \text{for } 0<x<y.
\end{align}
Instantiating the above with $y = 1$ and $\nu > \frac{3}{2}$, we have
\begin{align}
     |\partial\Kmat(r)| &\leq \frac{2^{1-\nu}(\sqrt{2\nu})^\nu}{\Gamma(v)} \left( r^\nu \sqrt{2\nu}\cdot \frac{e^{-\sqrt{2\nu}r}}{(\sqrt{2\nu}r)^{\nu-1}}\cdot eK_{\nu-1}(1)\right)\nonumber\\
     &\leq \frac{e 2^{2-\nu}\nu K_{\nu-1}(1)}{\Gamma(\nu)} \cdot r e^{-\sqrt{2\nu}r}\;.
\end{align}
The Lipschitz constant for this case can now be obtained by taking a sup over $r \in (0,1)$. 
\end{proof}

While our upper bound was in terms of sample complexity, in order to compete with the cumulative regret formulation,  we adapt an explore-then-commit strategy. The following lemma relates the sample complexity bound to a cumulative regret bound.

\begin{lemma}[Batch to online conversion]\label{lem:btoonline}
Suppose an algorithm has sample complexity $O(n^{-\al})$ in the passive learning setup, the explore then commit strategy based on this learning algorithm would have regret $O(T^{\frac{1}{1+\al}})$.
\end{lemma}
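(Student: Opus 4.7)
The plan is to use the standard explore-then-commit reduction. I would split the $T$-round horizon into two phases: an exploration phase of length $n$ rounds during which the learner queries the oracle on policies chosen by the passive (batch) learning algorithm, and a commitment phase of length $T-n$ rounds during which the learner repeatedly plays the single policy $\polhatplug$ that the batch algorithm outputs at the end of exploration.

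The next step is to bound the cumulative regret of each phase separately. During exploration, every round can in the worst case incur $O(1)$ instantaneous regret (since rewards are bounded by the operator norm $\|\Eigr^{1/2}\map\Eigp^{-1/2}\|_{\op}$ assumed finite), so the exploration phase contributes total regret $O(n)$. During commitment, by the hypothesized sample complexity bound, the output policy $\polhatplug$ satisfies
\begin{align*}
\En[\err(\polhatplug;\rews)] = \En\bigl[\eval(\pols,\rews) - \eval(\polhatplug,\rews)\bigr] \leq c\, n^{-\al},
\end{align*}
so each of the $T-n$ remaining rounds contributes at most $c\, n^{-\al}$ regret in expectation. Summing gives
\begin{align*}
\En[\regret_T] \leq n + c\,(T-n)\, n^{-\al} \leq n + c\,T\, n^{-\al}.
\end{align*}

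The final step is to optimize over the exploration budget $n$. Balancing the two terms $n$ and $T n^{-\al}$ yields $n \asymp T^{1/(1+\al)}$, which gives $\En[\regret_T] = O(T^{1/(1+\al)})$ as claimed. There is essentially no obstacle here beyond choosing $n$ correctly; the main subtlety is just justifying the $O(1)$ per-round bound during exploration, which follows from the boundedness of the evaluation functional established earlier in the framework section.
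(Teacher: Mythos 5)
Your proof is correct and follows essentially the same explore-then-commit argument as the paper: explore for a budget of rounds incurring $O(1)$ regret each, commit to the batch algorithm's output with per-round regret $O(n^{-\al})$, and balance the two terms to get $n \asymp T^{1/(1+\al)}$. The paper parameterizes the exploration length as $T^\gamma$ rather than $n$, but the computation and the final choice are identical; your added remark justifying the $O(1)$ per-round bound via boundedness of the evaluation functional is a detail the paper leaves implicit.
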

\begin{proof}
For some parameter $\gamma > 0$, let the explore then commit algorithm explore for $T^\gamma$ steps and the commit to the strategy obtained post this exploration for the remaining $T - T^\gamma$ time steps. The cumulative regret for such an algorithm is 
\begin{align}
    \regret_T = T^\gamma + T^{-\al\gamma}(T - T^\gamma) \leq T^\gamma + T^{1-\al\gamma}\;.
\end{align}
Setting $\gamma = \frac{1}{1+\al}$ finishes the proof. 
\end{proof}
We now proceed to prove Corollary~\ref{cor:mat-reg} which instantiates the bound in Theorem~\ref{thm:kmab} for the class of Mat\'ern kernels. 

\begin{corollary}[Restated Corollary \ref{cor:mat-reg}] Consider the family of Mat\'ern kernels with parameter $\nu > \frac{3}{2}$ defined with the euclidean norm over $\real^d$. The $\T$-step regret of the explore-then-commit algorithm  is
\begin{align*}
        \regret_{{\sf{Mat}}, T} \lesssim O\left(\lipK^{\frac{d^2}{2\nu +d(3+2d)}} \cdot T^\frac{4\nu + d(6+4d)}{6\nu + d(7+4d)}\right)\;.
\end{align*}
with high probability.
\end{corollary}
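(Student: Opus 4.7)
The proof is essentially an exercise in combining the pieces already assembled: the excess-risk bound of Theorem~\ref{thm:kmab} (in the clean form given by Corollary~\ref{cor:risk-finite-d}), the spectral properties of the Mat\'ern kernel, the Lipschitz bound proved just above, and the batch-to-online conversion of Lemma~\ref{lem:btoonline}. The plan is therefore a specialization calculation rather than a new argument.

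First, I would verify that the hypotheses of Corollary~\ref{cor:risk-finite-d} are met. The Mat\'ern kernel with $\nu > 3/2$ is Lipschitz in its arguments by the preceding lemma, with Lipschitz constant $\lipK = \lipK(\nu)$ bounded by $\sup_{r \in (0,1)} \frac{e\,2^{2-\nu}\nu K_{\nu-1}(1)}{\Gamma(\nu)} r e^{-\sqrt{2\nu}r}$. Second, I would invoke the well-known eigen-decay of the Mat\'ern kernel on a bounded domain, $\mu_j \asymp j^{-\beta}$ with $\beta = 1 + \tfrac{2\nu}{d}$, which is the rate cited in the text (from \citet{janz2020}). Since $\nu > 3/2$ implies $\beta > 1 + 3/d > 1 + 2/d$, the regime condition $\beta > 1 + 2/d$ of Corollary~\ref{cor:risk-finite-d} holds, so the offline excess-risk bound applies.

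Next, I would substitute $\beta = 1 + 2\nu/d$ into the exponents of Corollary~\ref{cor:risk-finite-d}. Writing $\beta + 2 + 2d = (2\nu + d(3 + 2d))/d$ and $d + 2\nu = d\beta$, straightforward algebra yields
\begin{align*}
\frac{\beta}{2(\beta+2+2d)} &= \frac{d + 2\nu}{2(2\nu + d(3 + 2d))}, \\
\frac{d}{\beta+2+2d} &= \frac{d^2}{2\nu + d(3 + 2d)},
\end{align*}
so the offline excess risk of the plug-in estimator after $\samp$ queries is at most
$$C\, \lipK^{d^2/(2\nu + d(3+2d))}\; \samp^{-(d+2\nu)/(2(2\nu + d(3+2d)))}.$$

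Finally, I would apply the explore-then-commit conversion of Lemma~\ref{lem:btoonline} with $\alpha = (d+2\nu)/(2(2\nu + d(3+2d)))$. This gives $T^{1/(1+\alpha)}$ regret with $1 + \alpha = (4\nu + d(7+4d))/(4\nu + d(6+4d))$, i.e.\ exponent $\frac{4\nu + d(6+4d)}{6\nu + d(7+4d)}$, and the Lipschitz prefactor survives the conversion raised to the power $1/(1+\alpha)$, reproducing the stated $\lipK^{d^2/(2\nu + d(3+2d))}$ dependence up to constants absorbed in the $\tilde{O}$. The ``high probability'' clause propagates from the high-probability statement of Theorem~\ref{thm:kmab}, which is used inside the exploration phase.

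There is essentially no main obstacle; the only place to be slightly careful is that the batch-to-online conversion in Lemma~\ref{lem:btoonline} is stated for a sample-complexity exponent, so I need to treat the $\lipK$-dependent prefactor as a constant during the choice of exploration length $T^\gamma$ and only track the $\samp$-dependence when optimizing $\gamma = 1/(1+\alpha)$, then reinstate the prefactor at the end. Doing so yields exactly the exponent $\tfrac{4\nu + d(6+4d)}{6\nu + d(7+4d)}$ on $T$ claimed in the corollary.
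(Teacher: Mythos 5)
Your proposal is correct and follows essentially the same route as the paper: invoke Corollary~\ref{cor:risk-finite-d} with the Mat\'ern eigendecay $\be = 1 + \tfrac{2\nu}{d}$ and the Lipschitz lemma, then apply the explore-then-commit conversion of Lemma~\ref{lem:btoonline}; the only difference is that you substitute $\be$ before converting to regret while the paper converts first, which is immaterial. (Minor slip: your intermediate expression for $1+\alpha$ should read $\tfrac{6\nu + d(7+4d)}{4\nu + d(6+4d)}$ rather than having $4\nu$ in the numerator, but the final exponent $\tfrac{4\nu+d(6+4d)}{6\nu+d(7+4d)}$ you derive from it is the correct one.)
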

\begin{proof}
First, observe that excess risk bound in Corollary~\ref{cor:risk-finite-d} can be converted to a corresponding $T$-step regret bound by an application of Lemma~\ref{lem:btoonline} such that
\begin{align}
    \regret_T \lesssim O\left(\lipK^{\frac{d}{\be+2+2d}} \cdot T^{\frac{2\be+4+4d}{3\be+4+4d}} \right).
\end{align}
For the class of Mat\'ern kernels, the decay parameter $\be = 1 + \frac{2\nu}{d}$ \citep[Theorem 9]{janz2020}. Using this wit the above regret bound, we get,
\begin{equation*}
    \regret_{{\sf{Mat}}, T} \lesssim O\left(\lipK^{\frac{d^2}{2\nu +d(3+2d)}} \cdot T^\frac{4\nu + d(6+4d)}{6\nu + d(7+4d)}\right)\;.
\end{equation*}
This completes the proof.
\end{proof}
\section{Adaptive sampling via GP-UCB}\label{app:add-res}
In this section, we prove an upper bound on the expected risk of the Gaussian process upper confidence bound algorithm (GP-UCB) algorithm of~\citet{srinivas2010}. In order to adapt their algorithm for our setup, consider the function
\begin{align}
    f_\rew(x) \defn \innerr{\rew}{\map x} \quad \text{such that } D  = \{x \; | \; \normp{x} \leq 1 \}. 
\end{align}
We have used $x$ to denote policies in this setup to be consistent with the notation in ~\citet{srinivas2010}. Observe that the domain defined above is not compact -- a necessary condition for the algorithm to work. One work around this is to truncate the unit ball after a finite number of dimensions and bound this truncation error. The excess risk incurred by this truncation can be made arbitrary small. Going forward, we ignore this truncation. 
The regret for the UCB algorithm is shown to be upper bounded by $\tilde{O}(\gamma_\T\sqrt{T})$ where $\gamma_T$ is the information gain with 
\begin{align}
    \gamma_T \defn \max_{x_1, \ldots, x_\T \in D} \frac{1}{2}\log\det (I + [\K(x_i, x_j)]_{i,j=1}^T)\;,
\end{align}
where we have assumed without loss of generality that the noise variance $\nvar = 1$. For our setup, the kernel function $\K(x_i, x_j) = \innerr{\map x_i}{\map x_j}$. We additionally require that the reward function $\rew$ belongs to the RKHS spanned by the set $\{\map x\; |\; x \in D\}$. Denote by $S = \Eigp^{\frac{1}{2}}\map^\top \Eigr^{-1} \Eigp^{\frac{1}{2}}$ and suppose that its eigenvalues satisfy a power law decay with $\sigma_j(S) = \rat_j = j^{-\be}$. The following lemma upper bounds the information gain for this setup in terms of the power law parameter $\be > 0$. 

\begin{lemma}[Information Gain.]\label{lem:info-gain}
The information gain $\gamma_T$ for the above setup is bounded as
\begin{align}
    \gamma_T= O(\log(T)\cdot T^{\frac{1}{\be+1}})\;.
\end{align}
\end{lemma}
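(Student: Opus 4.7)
The plan is to first rewrite $\gamma_T$ as a log-determinant over inputs in a Euclidean unit ball by absorbing the RKHS geometry, then apply the standard ``head/tail'' decomposition used to bound information gain for kernels with a decaying eigenspectrum. Concretely, writing $X_T=[x_1,\ldots,x_T]$ with $\normp{x_i}\leq 1$ and using the Sylvester identity $\det(I + X_T^\top M^\top\Eigr M X_T) = \det(I + M^\top\Eigr M\, X_T X_T^\top)$, I would change variables to $y_i = \Eigp^{1/2}x_i$, so that the feasibility constraint becomes $\|y_i\|_2\leq 1$. A second cyclic determinant identity then gives
\begin{equation*}
\gamma_T = \tfrac{1}{2}\sup_{\|y_i\|_2\leq 1}\log\det\bigl(I + Y_T^\top \tilde{S}\, Y_T\bigr),\quad \tilde{S}=\Eigp^{-1/2}\map^\top \Eigr\, \map\, \Eigp^{-1/2},
\end{equation*}
which is exactly the operator (up to a standard reinterpretation) whose eigenvalues are assumed to satisfy $\sigma_j(\tilde{S})=\rat_j = j^{-\be}$.

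Diagonalizing $\tilde{S}=V\Lambda V^\top$ and setting $u_i = V^\top y_i$ (still in the Euclidean unit ball) reduces the problem to bounding $\log\det(I+\Lambda^{1/2}\Sigma\Lambda^{1/2})$ where $\Sigma = \sum_i u_i u_i^\top$ satisfies $\tr(\Sigma)\leq T$. I would then split $\Lambda=\Lambda_{\leq D}+\Lambda_{>D}$ and apply the subadditivity bound $\log\det(I+A+B)\leq\log\det(I+A)+\log\det(I+B)$ for $A,B\succeq 0$ (which follows from $I+A+B \preceq (I+A)^{1/2}(I+B)(I+A)^{1/2}$ after a Loewner-order computation). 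For the head I would use the rank-$D$ AM--GM / concavity bound $\log\det(I+\Lambda_{\leq D}^{1/2}\Sigma\Lambda_{\leq D}^{1/2})\leq D\log(1+T/D)$, and for the tail the elementary inequality $\log\det(I+X)\leq \tr(X)$ to obtain $\tr(\Lambda_{>D}\Sigma)\leq \lambda_{D+1}T \leq D^{-\be}T$.

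Combining these two estimates yields $\gamma_T \lesssim D\log(1+T/D) + T\cdot D^{-\be}$; optimizing with $D\asymp T^{1/(\be+1)}$ balances the two contributions and gives the claimed $\gamma_T = O\!\left(\log(T)\cdot T^{1/(\be+1)}\right)$.

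The main obstacle I anticipate is the first, ``bookkeeping'' step: the operator $S$ appearing in the statement of the lemma has a slightly different form than the natural one coming out of the kernel $\Kp(x,x')=\innerr{\map x}{\map x'}$, so I would need to carefully reconcile the two and verify that the singular values of the relevant operator really do inherit the $j^{-\be}$ decay assumed in the lemma. The other subtlety is the non-compactness of the domain $D=\{\normp{x}\leq 1\}$ (mentioned just before the lemma); I would handle this by first truncating $\Hp$ to its top $N$ eigendirections, running the argument above, and then sending $N\to\infty$ after noting that the determinant bound is monotone and dimension-free thanks to the power-law tail. The remaining steps are standard manipulations from the information-gain literature.
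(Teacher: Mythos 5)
Your proof is correct, but it follows a genuinely different route from the paper's. The paper bounds $\gamma_T$ by observing that $\Finfo(\{x_t\}) = \tfrac{1}{2}\log\det(I+XSX^\top)$ is monotone submodular, so the maximum is within a constant factor of the greedy solution; it then argues that greedy always picks eigenvectors of $S$, relaxes to a continuous allocation $\max_{\sum_j m_j = T}\sum_j \log(1+m_j\lambda_j)$, and solves this by water-filling with threshold $\lambda = T^{-\be/(\be+1)}$, which activates $T^{1/(\be+1)}$ directions and yields the claimed rate. You instead use the head/tail eigenvalue-truncation decomposition (in the style of the general information-gain bounds of Vakili et al.): subadditivity of $\log\det(I+\cdot)$ over a psd split $\Lambda_{\leq D}+\Lambda_{>D}$, the rank-$D$ concavity bound $D\log(1+T/D)$ for the head, and $\log\det(I+X)\leq\tr(X)$ for the tail, then optimize $D\asymp T^{1/(\be+1)}$. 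Your version has two advantages: it sidesteps the paper's somewhat informal claims that greedy selections are eigenvectors and that the integer allocation can be relaxed, and it never needs the queried points to align with eigendirections, only $\tr(\Sigma)\leq T$. The paper's version, in exchange, makes the extremal structure of the maximizing query set explicit. Your reconciliation of the operator is also the right one: the kernel $\K(x,x')=\innerr{\map x}{\map x'}$ together with the constraint $\normp{x}\leq 1$ does reduce, after the change of variables $y=\Eigp^{1/2}x$, to $\tilde{S}=\Eigp^{-1/2}\map^\top\Eigr\map\Eigp^{-1/2}$, which is the operator carrying the $j^{-\be}$ power-law assumption in the main text (the appendix's displayed formula for $S$ appears to contain a typo), and your truncate-then-take-limits handling of the non-compact domain is consistent with the paper's own remark that the truncation error can be made arbitrarily small.
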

\begin{proof}
The quantity of interest here is the information gain
\begin{align}
    \info_T \defn \max_{x_1, \ldots, x_T}\frac{1}{2}\log\det(\Id + XSX^\top) \quad \text{such that } \forall j \ \|x_j\|_2 \leq 1\;,
\end{align}
where the matrix $X = [x_1^\top; \ldots; x_T^\top]$ and we have assumed that the noise variance is $1$. From the setup described above, we have that the eigen values of $S$ decay as $\eigS_j \asymp j^{-\be}$. It is easy to see that 
\begin{equation}
    \Finfo(\{x_t\}) \defn \frac{1}{2}\log\det(\Id + XSX^\top)
\end{equation}
is a monotonic sub-modular function. Thus, the value of $\info_T$ can be upper bounded by $(1-1/e)^{-1}$ times the value of the greedy maximization algorithm. The greedy maximization algorithm is equivalent to picking
\begin{align*}
    x_t = \arg\max_{x} \Finfo(X_{t-1} \cup \{x\})\;.
\end{align*}
It is easy to see that at each time $t$, the unit vector $x_t$ will be an eigen vector of the matrix $S$. Given this observation, we can finally upper bound the value of the info gain
\begin{align*}
    \info_T \leq c \cdot \max_{m_1, \ldots, m_T} \sum_{j = 1}^T \log(1+m_j \eigS_j) \quad \text{such that } m_j \geq 0 \text{ and } \sum_j m_j = T.
\end{align*}
Solving the above optimization problem, the optimal choice of the variables 
\begin{align}
    m_j = \max\left\lbrace\frac{1}{\lambda} - \frac{1}{\eigS_j}, 0 \right\rbrace \quad \text{and} \quad \sum_j m_j = T\;.
\end{align}
Setting $\lambda = T^{-\frac{\be}{\be+1}}$ ensures that there are $T^{\frac{1}{\be+1}}$ active directions. Substituting the above values of $m_j$ in the expression for $\info_T$, we get
\begin{align*}
    \info_T &\leq c\cdot \sum_{j=1}^{\infty}\log(1+ \max(\frac{\lambda_j}{\lambda}-1, 0))\\
    &\leq c\cdot \log\left(\frac{\lambda_1}{\lambda}\right)\cdot \sum_{j = 1}^\infty \ind[\lambda_j > \lambda]\\
    &\stackrel{\1}{=} O(\log(T)\cdot T^{\frac{1}{\be+1}})\;,
\end{align*}
where $\1$ follows from setting $\lambda = T^{-\frac{\be}{\be+1}}$. This establishes the required claim. 
\end{proof}
We are now ready to state this our sample complexity bound for GP-UCB for this subclass of problems. 
\begin{proposition}[Sample complexity for GP-UCB]\label{prop:gp-ucb-sample}
Suppose that the police space $\Hp$, reward space $\Hr$ and the map $\map$ satisfy the power law decay assumption  with exponent $\be > 0$. The estimator $\polhatgp$ output by the GP-UCB algorithm satisfies
\begin{align}
    \En[\err(\polhatgp;\rews)] \leq\tilde{O}(\samp^{- \frac{\be-1}{2(\be+1)}})\;.
\end{align}
\end{proposition}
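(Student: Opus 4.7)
The plan is to reduce Proposition~\ref{prop:gp-ucb-sample} to an application of the cumulative regret bound of \citet{srinivas2010} for frequentist GP-UCB, combined with Lemma~\ref{lem:info-gain} and a standard cumulative-to-simple regret conversion. First, I would verify that the induced kernel $\K(x_i,x_j) \defn \innerr{\map x_i}{\map x_j}$ satisfies the hypotheses of the GP-UCB theorem: by the Hilbert--Schmidt assumption on $\Kr$ and the boundedness of $\map$, the function $f_\rew(x) = \innerr{\rew}{\map x}$ lies in the RKHS induced by $\K$ with norm at most a constant (equal to $\normr{\rew} \leq 1$ up to operator norm factors), so it is a valid benchmark for the frequentist variant of GP-UCB. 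The domain $D$ is the infinite-dimensional unit ball, which is not compact; I would dispose of this by truncating to the top $k = k(\samp)$ coordinates of the operator $S$, incurring an excess risk of order $\sum_{j>k} \rat_j^{1/2}$ which is negligible under the power-law decay for $k$ growing polynomially in $\samp$.

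Second, with the truncation in place, I would invoke the standard GP-UCB cumulative regret bound $\regret_T = \tilde{O}(\sqrt{T}\, \gamma_T)$ (using the frequentist version, e.g.\ from \citet{chowdhury2017}, which only adds logarithmic factors and a dependence on $\normr{\rew}$). Substituting the information gain bound from Lemma~\ref{lem:info-gain},
\begin{align*}
    \regret_T \;\leq\; \tilde{O}\!\left( \sqrt{T} \cdot T^{\frac{1}{\be+1}} \right) \;=\; \tilde{O}\!\left( T^{\frac{\be+3}{2(\be+1)}} \right).
\end{align*}

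Third, I would convert cumulative regret to simple regret. The standard argument is that the iterate $\polhatgp \in \{x_1,\ldots,x_T\}$ with the best estimated value has simple regret bounded by the average per-step regret (up to confidence-interval slack of the same order), so
\begin{align*}
    \En[\err(\polhatgp;\rews)] \;\leq\; \frac{\regret_T}{T} \;=\; \tilde{O}\!\left( T^{\frac{\be+3}{2(\be+1)} - 1} \right) \;=\; \tilde{O}\!\left( T^{-\frac{\be-1}{2(\be+1)}} \right),
\end{align*}
matching the claimed bound after relabeling $T \mapsto \samp$.

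The main obstacle is the non-compactness of $D$: the original GP-UCB analysis relies on a compact input domain and a discretization argument with a Lipschitz constant. Here I need to justify either (a) that truncating to a finite-dimensional subspace and applying the standard discretization gives only an $o(1)$ correction relative to the $\samp^{-(\be-1)/(2(\be+1))}$ rate, or (b) to directly construct the covering number on the truncated domain and track how $k(\samp)$ must scale so that the information gain argument in Lemma~\ref{lem:info-gain} still yields the claimed bound. The power-law decay of $\rat_j$ makes this bookkeeping routine but requires care in choosing the truncation level to ensure the tail $\sum_{j>k}\rat_j$ contributes at most logarithmically.
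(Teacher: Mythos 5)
Your proposal is correct and follows essentially the same route as the paper: reduce to the frequentist GP-UCB cumulative regret bound $\tilde{O}(\gamma_T\sqrt{T})$, plug in the information gain bound of Lemma~\ref{lem:info-gain}, convert cumulative to simple regret by averaging, and handle the non-compact unit ball by finite-dimensional truncation with a negligible tail under the power-law decay. The paper is in fact less careful than you on the truncation step (it simply asserts the error can be made arbitrarily small and ignores it), so your extra bookkeeping is a refinement rather than a deviation.
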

The proof of the sample complexity bound in Proposition~\ref{prop:gp-ucb-sample} now follows the regret bound of $\tilde{O}(\gamma_\T \sqrt{T})$ along with using the upper bound on the information gain from Lemma~\ref{lem:info-gain}. 
\begin{align}
    \En[\err(\polhatplug;\rews)] = \tilde{O}(\samp^{\frac{1}{\be+1} - \frac{1}{2}}) = \tilde{O}(\samp^{- \frac{\be-1}{2(\be+1)}})\;.
\end{align}
More recently, \cite{cai2021} extended the analysis of~\cite{valko2013} to show that the SupKernelUCB algorithm achieves a regret bound $\tilde{O}(\sqrt{\gamma_T T})$. Using this modified bound, one can improve the above analysis to obtain excess risk
\begin{align}
    \En[\err(\polhatplug;\rews)] = \tilde{O}(\samp^{\frac{1}{2(\be+1)} - \frac{1}{2}}) = \tilde{O}(\samp^{- \frac{\be}{2(\be+1)}})\;,
\end{align}
which is still worse than those obtained by the bounds by our proposed ridge regression estimator. 
\section{Further details on experimental evaluation}\label{app:exp}
In the simulation study, we work with $d$ dimensional RKHSs $\Hr$ and $\Hp$. In order to simulate the nonparmeteric regime, we typically use value of $\samp$ which are less or at most a constant times the dimension $d$.  We set the matrices $\Eigp = \text{diag}(j^{-1.75})$,  $\Eigr = \text{diag}(j^{-1})$ and the map $\map = I$. This is allowed since the policy space is smaller than the reward space. With this, the effective decay parameter $\beta = \bpol - \brew =  0.75$. We sampled the true reward $\rews$ uniformly at random from the unit ball in $\Hr$. We further sampled the oracle noise $\eps \sim \mathcal{N}(0, 0.01)$. All plots were averaged over 10 runs.

\end{document}